\documentclass[11pt,a4paper]{article}

\usepackage{amsmath,amssymb,amsthm}
\usepackage[margin=2.5cm]{geometry}
\usepackage{enumitem}
\usepackage{array}
\usepackage{cite}
\usepackage[hidelinks]{hyperref}
\usepackage{bbm}
\usepackage{booktabs}
\usepackage[table]{xcolor}
\usepackage{needspace}
\usepackage{bm}
\usepackage{tikz}
\usepackage{tabularx}

\usetikzlibrary{positioning, fit, backgrounds, arrows.meta, calc}

\definecolor{truepos}{RGB}{240,247,240}
\definecolor{miss}{RGB}{252,232,232}
\definecolor{falsepos}{RGB}{235,241,251}
\definecolor{trueneg}{RGB}{247,247,247}

\theoremstyle{plain}
\newtheorem{theorem}{Theorem}
\newtheorem{assumption}[theorem]{Assumption}
\newtheorem{proposition}[theorem]{Proposition}
\newtheorem{corollary}[theorem]{Corollary}
\newtheorem{lemma}[theorem]{Lemma}
\theoremstyle{definition}
\newtheorem{definition}[theorem]{Definition}

\newtheorem{remark}[theorem]{Remark}

\title{Finite-Sample Coverage Audits for High-Recall Candidate Generation:\\[4pt]
Certification and Learning-Theoretic Design}

\author{
Martin Anthony$^{1,2}$ \qquad
Kaveh Salehzadeh Nobari$^{1,3}$\\[6pt]
\small $^{1}$Data Science Institute,
London School of Economics and Political Science\\
\small $^{2}$Department of Mathematics,
London School of Economics and Political Science\\
\small $^{3}$The Inclusion Initiative,
London School of Economics and Political Science
}

\date{}

\begin{document}
\maketitle

\begin{abstract}
An initial high-recall stage in an empirical pipeline decides which items pass to later review, labelling, or modelling, and relevant items it misses are lost to every subsequent stage. We study how many audit labels are needed to certify, with finite-sample validity, that this missed relevant mass is small, and our main results
characterise the label complexity of this problem. We first show that no procedure using only labels from inside the candidate set can certify any non-trivial bound on the missed mass: the audit must sample the excluded pool, the only region where unrecovered relevant items can lie. We then prove a matching finite-corpus lower
bound. Any valid audit that certifies fewer than $m$ missed relevant items with high probability when none are present, even if adaptive and permitted to label the entire included pool, must inspect on the order
of $N_0/m$ excluded-pool labels. Excluded-pool auditing is therefore minimax rate-optimal, not merely
convenient, for missed-mass certification in the zero-miss regime. Building on this characterisation, we develop an exact finite-sample toolkit, using binomial and hypergeometric inversion rather than asymptotic approximation, that certifies missed mass, converts it to recall through a two-pool design, certifies pre-specified families of nested candidate generators simultaneously, and produces stress-test certificates against declared perturbation mechanisms. These certificates can be paired with observable review burden to
select the least burdensome pre-specified candidate generator meeting a missed-mass target. Every guarantee holds under one discipline: the candidate generator, or the pre-specified family from which it is
selected, and the audit rule are fixed before the certification labels are examined.
\end{abstract}


\section{Introduction}
\label{sec:intro}

Many machine learning and data-engineering pipelines begin with a high-recall candidate-generation step. Given a large corpus and a latent relevance criterion, an initial candidate generator selects a subset of items for later review, labelling, or downstream modelling. The intended role of this first stage is coverage: it should capture nearly all relevant items, even at the cost of admitting many irrelevant ones. Excluding relevant observations from the training sample can induce sample-selection bias when the exclusion mechanism is non-random, because the empirical training distribution may no longer represent the target population. If the exclusions are random, the main effect is typically loss of precision rather than bias \cite{LittleRubin2019,Heckman1979,LiuZiebart2014}.

This class of problems has a structurally asymmetric error profile. Relevant items missed at the candidate-generation stage are irrecoverable: they cannot be reviewed, cannot be labelled, and cannot be learned by any downstream model trained on the resulting candidate set. Over-inclusion, by contrast, increases review burden but does not destroy information. Examples include responsiveness review in legal discovery, study screening in systematic reviews, compliance and regulatory audit, and pipelines in which annotated training data is derived from a filtered corpus.

In technology-assisted legal review and systematic-review screening, for instance, an initial search or classifier returns documents or studies for human assessment. Items not returned may never be examined, so the quantity of interest is the relevant material left outside the returned set. The objective in these settings is to construct a candidate set $\widehat S\subseteq\mathcal X$ that captures nearly all latent relevant instances with high probability while keeping the review cost tolerable, where $\mathcal X$ denotes the population, or item, space. The candidate set is then handed to human annotators, downstream classifiers, or both. The statistical task is to certify, from audit data, that the missed relevant mass is small.

Many audit summaries report estimated recall together with approximate confidence intervals. Our aim is different: we derive finite-sample one-sided certificates for missed relevant mass and recall, using binomial or hypergeometric inversion according to the sampling design.

Several existing frameworks provide useful ingredients. Classical PAC learning provides distribution-free guarantees on generalisation error \cite{Valiant1984,AnthonyBartlett1999}. Candidate generation asks instead for one-sided coverage of a latent relevant set. Recent distribution-free risk-control frameworks (e.g., Learn then Test \cite{LTT2021}, Conformal Risk Control \cite{CRC2024}) provide general finite-sample machinery for controlling expected losses, including risks such as false negative rate. We specialise these ideas to high-recall candidate generation, where the central statistical design choice is where to spend labels.

We first analyse fixed candidate generators, then extend the guarantees to finite families of generators chosen after seeing non-audit data. The certification audit remains untouched while the candidate generator is chosen. We may use separate design labels to choose a candidate generator from a structured class, but the final certificate uses only the held-out audit labels. In Section~\ref{sec:learning} we record the finite-class, VC, sparse-union, and Neyman--Pearson guarantees needed for this design stage, and we show how to pair them with independent audit certification. In Section~\ref{sec:worked_example} we give a compact numerical example using only deterministic audit calculations.

\subsection{Motivating examples}

We give two motivating examples drawn from opposite ends of the modelling
process. The first is \emph{pre-transformer}: a data-engineering pipeline that
prepares a corpus for human annotation, of the kind later used to train or
evaluate models. The second is \emph{post-transformer}, arising at inference
time: the retrieval stage of retrieval-augmented generation (RAG), in which
relevant passages are supplied to a language model to ground its output. The
two settings look unrelated, but both instantiate the same object, namely a
filter over a corpus whose items carry a latent relevance label. In both, a
relevant item missed by the filter is irrecoverable at every later stage, and
this asymmetry is what the certificates in this paper are designed to control.

\paragraph{A pre-transformer instance: building annotation data.}
Consider a pipeline for building labelled data on sentences related to a
specified concept, such as diversity, equity, and inclusion disclosures in
annual reports of publicly traded firms. The documents are long, and the
relevant concept may be expressed in heterogeneous, indirect, and
context-dependent language. Before any large-scale review, measurement, or
downstream modelling can be carried out, the analyst must decide which sentences
will be sent for human annotation.

A common first step is to apply a candidate generator $g$, which selects
sentences deemed potentially relevant. The generator may use keyword lists,
regular expressions, similarity thresholds, embedding searches, supervised
classifiers, or combinations of such rules. Human annotators then label the
selected sentences as relevant or not. The resulting labels may be used as the
final annotated dataset, or they may later be used to train a supervised
classifier.

The audit problem concerns the candidate generator $g$. If $g$ excludes relevant
sentences, those sentences are unavailable to later review, and they are absent
from any labelled data built from the selected set. The purpose of the audit is
therefore to certify that the relevant mass left outside the candidate set is
small.

When the candidate generator is itself learned, the same distinction applies. We
may use design labels to train or tune a classifier $\widehat{\varphi}$, and then
use this classifier as the candidate generator. The held-out audit labels are
not used to choose $\widehat{\varphi}$. They are used only to certify the missed
relevant mass, or recall, of the resulting selection rule.

The labelled candidates do not by themselves contain the information needed to
certify missed mass: annotators see only the selected items and observe no labels
for excluded ones. Proposition~\ref{obs:no_inside_only} makes the resulting
impossibility precise, and motivates the finite-sample audit formulation.

\paragraph{A post-transformer instance: retrieval-augmented generation.}
The same asymmetry appears in the retrieval stage of retrieval-augmented
generation (RAG), in which a retriever selects a small set of passages from a
large corpus and a language model conditions its answer on that selected
set~\cite{LewisEtAl2020}. Here the items being evaluated are query--passage
pairs $(q,x)$, where $q$ is a query and $x$ is a passage from the corpus. The
candidate generator $g(q,x)$ is the retriever together with its cutoff, with
$g(q,x)=1$ when passage $x$ is retrieved for query $q$, and the latent
relevance label $\varphi^\star(q,x)$ indicates whether $x$ contains evidence
required to answer $q$ correctly. If the retriever fails to return a relevant
passage for a given query, that passage is absent from the model's context and
cannot be used as retrieved evidence to ground the subsequent answer. This is
the irrecoverable false negative of the present setting, and it is distinct
from the much-studied failure in which relevant evidence \emph{is} retrieved
but the generator ignores or contradicts it~\cite{Elchafei2026}. That latter
failure is a question of generation faithfulness rather than coverage, and our
certificates do not address it. What an excluded-pool audit certifies is
precisely the coverage side: sampling query--passage pairs for which
$g(q,x)=0$ and labelling whether $\varphi^\star(q,x)=1$ bounds the relevant
evidence mass left outside the retrieved sets, with the finite-sample
guarantees of Section~\ref{sec:excluded}. As in the pre-transformer example,
relevance is a fixed property of a query--passage pair and is checkable by an
oracle, while an excluded relevant passage is unavailable to the downstream
generator as retrieved evidence. These two conditions are what make the
missed-mass certificate meaningful. Because relevant query--passage pairs may be rare in a large retrieval
corpus, conditional targets such as recall, positive-conditional audits, or per-query stratified certificates may be more operationally meaningful in this setting than absolute missed mass alone.

\subsection{Contributions}
Our main results characterise the label complexity of missed-mass certification.
\begin{enumerate}[leftmargin=2em]
    \item \textbf{Impossibility:} 
Labels from inside the candidate set, together with the generator values and unlimited unlabelled data, cannot certify any non-trivial bound on missed mass (Proposition~\ref{obs:no_inside_only}). The audit must sample the excluded pool.

\item \textbf{Matching lower bound and optimality:}
In the finite-corpus model, any valid audit that successfully rules out $m$ missed relevant items with high probability in the zero-miss case must inspect $\Omega(N_0/m)$ distinct excluded-pool items on average, even if the audit is adaptive, randomised, and permitted to label the entire included pool (Theorem~\ref{thm:label_lower_bound}). A simple uniform excluded-pool audit that observes zero relevant items achieves the same $N_0/m$ scaling, up to a constant factor for fixed error parameters. Thus, excluded-pool auditing is minimax rate-optimal for missed-mass certification in the zero-miss regime (Corollary~\ref{cor:minimax}).
\end{enumerate}
The remaining contributions turn this principle into an exact, deployable audit workflow.
\begin{enumerate}[leftmargin=2em,resume]
    \item Exact finite-sample certification for excluded-pool, finite-corpus, two-pool recall, and pre-specified prefix audits (Section~\ref{sec:excluded}).
    \item Operational tools: sample-size calculations, burden-minimising stopping, shared-reference designs, and fixed-sequence certification without a multiplicity penalty (Sections~\ref{sec:samplesize},~\ref{sec:fixed_sequence}).
    \item Auditable diagnostic and stress-test certificates (Sections~\ref{sec:amplification},~\ref{sec:stress}).
    \item A learning-theoretic design layer with independent held-out certification (Section~\ref{sec:learning} and Appendix~\ref{app:learning}).
    \item A numerical worked example (Section~\ref{sec:worked_example}).
\end{enumerate}

\subsection{Relation to existing work}

Learn then Test~\cite{LTT2021} and Conformal Risk Control~\cite{CRC2024}
provide general distribution-free risk-control templates. The contribution here is
more specialised: the audit uses the geometry of candidate generation. Crucially, those general templates provide risk control for fixed procedures or pre-specified families, but do not characterise how many
labels this candidate-generation problem requires or where those labels must be drawn. The impossibility result and the lower bound are questions about the candidate-generation structure itself, and have no analogue in the general risk-control machinery. Missed items can occur only in the excluded pool, the excluded-pool mass \(p_0(g)\) is observable without relevance labels, and pre-specified prefix families give
natural targets for simultaneous certification. The excluded-pool certificates
exploit the decomposition \(r(g)=p_0(g)\eta(g)\), where $\eta(g)$ is the fraction of those excluded items that are truly relevant, rather than treating misses as
rare events in the whole population.

The closest applied lineage is technology-assisted review and systematic-review
screening. Work in e-discovery and high-recall review developed relevance
feedback, continuous active learning, stopping rules, and recall validation
methods~\cite{CormackMojdeh2009,CormackGrossman2014,CormackGrossman2015,
CormackGrossman2016,LeaseEtAl2016,MagdyJones2010,Webber2013,
LewisYangFrieder2021,YangLewisFrieder2021,SneydStevenson2021}. The certificates
here formalise the same sampling geometry with exact finite-sample one-sided
statements, including excluded-pool missed-count certification, two-pool recall
certification, and simultaneous certification of pre-specified prefix families.
A related validation design in technology-assisted review estimates found and
missed relevant items from separate samples~\cite{GrossmanCormackVetting2021}.
Our results give distribution-free finite-sample versions of this idea.
Webber~\cite{Webber2013} shows that the normal approximation to recall covers
poorly in this setting, studies two-tailed approximate intervals that target
mean rather than guaranteed coverage, and lists one-sided lower bounds and
guaranteed-coverage methods as future work. The two-pool certificates of
Section~\ref{sec:excluded} supply guaranteed-coverage one-sided bounds of
exactly this kind. Closest to the
finite-corpus certificates below, Callaghan and
M\"uller-Hansen~\cite{CallaghanMullerHansen2020} use a hypergeometric test as a
stopping criterion that certifies a recall target at a stated confidence in
active-learning screening. Relative to that line of work, the present paper
adds i) the included-pool impossibility result (Proposition~\ref{obs:no_inside_only}), ii) the matching label-complexity lower bound and the resulting minimax optimality of excluded-pool auditing (Theorem~\ref{thm:label_lower_bound}, Corollary~\ref{cor:minimax}), iii) simultaneous certification of pre-specified prefix families, iv) the exact two-pool combination for recall, and v) the explicit separation of design from certification (Assumption~\ref{ass:(A1)}). Items (i) and (ii) are specific to the candidate-generation geometry and have no counterpart in a hypergeometric stopping rule. The certificates here do not rely on a model of the rate at which relevant items appear.

The design-stage material uses standard learning-theoretic tools. The
bicriteria formulation is the Neyman--Pearson approach~\cite{ScottNowak2005,
CannonEtAl2002,Tong2013,RigolletTong2011} transposed to candidate
generation:\footnote{The empirical bicriteria formulation was first proposed
by Cannon, Howse, Hush, and Scovel in a 2002 Los Alamos technical report
(LA-UR-02-2951). The original Los Alamos host is no longer available, though
the report remains retrievable through the Internet Archive. The rigorous
published treatment is given by \cite{ScottNowak2005}.}
review burden is minimised subject to a one-sided missed-mass constraint. PAC and
VC bounds~\cite{Valiant1984,Natarajan1987,AnthonyBartlett1999,
BlumerEhrenfeuchtHausslerWarmuth1989} control the selection of a candidate
generator from a structured class before independent audit certification.

There is also a conceptual connection with set estimation~\cite{DevroyeWise1980,
Polonik1995,Tsybakov1997,ScottNowak2006,WillettNowak2007}. The candidate set is
asked to cover most of the latent relevant-item distribution while keeping burden
small. The difference is operational: here the set being certified is fixed, or
pre-specified, before the certification labels are opened, and the audit can be
targeted to the excluded pool.

\section{Problem setup}
\label{sec:setup}

We use two formulations. In the i.i.d.\ population model, items are drawn from an underlying distribution $P$. In the finite-corpus model, we treat the $N$ corpus items, the relevance labels, and the candidate generator as fixed. The probability statements refer only to the random choice of audit sample. We use the population formulation for the main theoretical statements. We use the finite-corpus formulation for sampling-without-replacement certificates and for the connection to existing technology-assisted review practice.

\subsection{Population model}

Let $(\mathcal X,\mathcal F,P)$ be the population probability space.
We use the canonical random element
\[
X:(\mathcal X,\mathcal F,P)\longrightarrow(\mathcal X,\mathcal F),
\qquad
X(\omega)=\omega.
\]
Thus the outcome $\omega$ is itself the sampled population item, and
the law of $X$ is $P$. Let
\[
\varphi^\star:
(\mathcal X,\mathcal F)
\longrightarrow
\left(\{0,1\},2^{\{0,1\}}\right)
\]
be a measurable latent relevance indicator, and define
\[
Y:=\varphi^\star(X).
\]
Equivalently, \(Y(x)=\varphi^\star(x)\) for \(x\in\mathcal X\).
The induced joint law of \((X,Y)\) on
\[
\left(
\mathcal X\times\{0,1\},
\mathcal F\otimes 2^{\{0,1\}}
\right)
\]
is the pushforward measure
\[
P_{X,Y}
:=
P\circ T^{-1},
\qquad
T(x):=\left(x,\varphi^\star(x)\right).
\]

The function \(\varphi^\star\) is fixed but unknown to the analyst.
It represents the ground-truth criterion determining whether an item is
relevant, for example legal responsiveness, study eligibility, or policy
applicability. The analyst observes \(\varphi^\star\) only through audit
labels on sampled items.

Write
\[
S^\star:=\{x\in\mathcal X:\varphi^\star(x)=1\},
\qquad
\pi:=P(S^\star)=P(Y=1).
\]
A candidate generator is a measurable function
\[
g:
(\mathcal X,\mathcal F)
\longrightarrow
\left(\{0,1\},2^{\{0,1\}}\right),
\]
with candidate set
\[
\widehat S:=\{x\in\mathcal X:g(x)=1\}.
\]

For population events, \(P\) always denotes the probability measure on
the canonical space \((\mathcal X,\mathcal F)\). Thus, for example,
\[
P(Y=1,g(X)=0)
:=
P\left(
\{x\in\mathcal X:
\varphi^\star(x)=1,\ g(x)=0\}
\right).
\]
Equivalently, such probabilities may be evaluated under the induced
joint law \(P_{X,Y}\), but we suppress that subscript throughout.

Audit, design-sample, and algorithmic randomness are carried on
probability spaces separate from the population space. For a given audit
design, let
\[
(\Omega_{\mathrm A},\mathcal A_{\mathrm A},
 \mathbb{P}_{\mathrm A})
\]
denote the probability space carrying the random audit sample and any
independent randomisation used by the audit procedure. The population
objects \(P\), \(\varphi^\star\), and the candidate generator or generator
family being certified are fixed under \(\mathbb{P}_{\mathrm A}\).

We write \(\mathbb{P}_{\mathrm{audit}}\) for the probability measure induced by
the audit design and \(\mathbb{P}_{\mathrm{design}}\) for the probability measure
induced by a separate design sample. These symbols are theorem-specific:
the sampling scheme stated in each result determines the corresponding
probability measure.

Throughout, $P$ denotes the population law of $X$, and hence the
induced law of $(X,Y)$. We use $\mathbb{P}$ for probabilities over auxiliary
randomness, such as audit samples, design samples, randomised procedures,
and binomial or hypergeometric variables. Subscripts indicate the source
of randomness when it is important, for example
$\mathbb{P}_{\mathrm{audit}}$ and $\mathbb{P}_{\mathrm{design}}$.

For a candidate generator $g$, define
\begin{align*}
r(g) &:= P(Y=1,\, g(X)=0), \\
L(g) &:= P(g(X)=0 \mid Y=1), \\
\mathrm{Recall}(g) &:= P(g(X)=1 \mid Y=1) = 1 - L(g), \\
b(g) &:= P(Y=0,\, g(X)=1), \\
p_0(g) &:= P(g(X)=0),\\
p_1(g)\equiv B(g) &:= P(g(X)=1)=1-p_0(g).
\end{align*}
The conditional quantities $L(g)$ and $\mathrm{Recall}(g)$ are defined when $\pi=P(Y=1)>0$. The quantity $r(g)$ is the \emph{absolute missed relevant mass}, $L(g)$ is the \emph{conditional miss rate} (so $\mathrm{Recall}(g) = 1 - L(g)$ is the \emph{recall} of $g$), $b(g)$ is the \emph{false-positive mass} (wasted review), $B(g)$ is the \emph{review burden} (the probability that an item is sent for review), and $p_0(g) = 1 - B(g)$ is the \emph{excluded-pool mass}. When $\pi > 0$, $r(g) = \pi L(g)$, so an upper bound $u$ on $r(g)$ implies $L(g) \le u/\pi$ when $\pi$ is known.

Table~\ref{tab:population-confusion-matrix} displays these quantities
as probability masses in a candidate-generation confusion matrix.

\begin{table}[htbp]
\centering
\small
\renewcommand{\arraystretch}{1.55}
\setlength{\tabcolsep}{7pt}

\medskip

\begin{tabularx}{\textwidth}{
  @{}
  >{\raggedright\arraybackslash}p{2.25cm}
  >{\centering\arraybackslash}X
  >{\centering\arraybackslash}X
  >{\centering\arraybackslash}p{1.45cm}
  @{}
}
\toprule
&
\multicolumn{2}{c}{\textbf{Candidate generator}}
&
\\[-1mm]

\cmidrule(lr){2-3}

\textbf{Ground truth}
&
\shortstack{
  \textbf{$g(X)=1$}\\
  {\footnotesize included for review}
}
&
\shortstack{
  \textbf{$g(X)=0$}\\
  {\footnotesize excluded from review}
}
&
\textbf{Total}
\\

\midrule

\shortstack[l]{
  \textbf{$Y=1$}\\
  {\footnotesize relevant}
}
&
\shortstack{
  \textbf{True positive}\\
  {\footnotesize recovered relevant mass}\\[2pt]
  $P(Y=1,g(X)=1)$\\
  $=\pi-r(g)=\pi[1-L(g)]$
}
&
\cellcolor{red!7}
\shortstack{
  \textbf{False negative}\\
  {\footnotesize missed relevant mass}\\[2pt]
  $P(Y=1,g(X)=0)$\\
  $\boldsymbol{=r(g)=\pi L(g)}$
}
&
$\pi$
\\

\addlinespace[4pt]

\shortstack[l]{
  \textbf{$Y=0$}\\
  {\footnotesize not relevant}
}
&
\shortstack{
  \textbf{False positive}\\
  {\footnotesize unnecessary review mass}\\[2pt]
  $ P(Y=0,g(X)=1)$\\
  $=b(g)$
}
&
\shortstack{
  \textbf{True negative}\\
  {\footnotesize correctly excluded mass}\\[2pt]
  $ P(Y=0,g(X)=0)$\\
  $=p_0(g)-r(g)$
}
&
$1-\pi$
\\

\midrule

\textbf{Total}
&
\shortstack{
  $B(g)$\\
  {\footnotesize review burden}
}
&
\shortstack{
  $p_0(g)$\\
  {\footnotesize excluded-pool mass}
}
&
$1$
\\

\bottomrule
\end{tabularx}

\medskip

\[
L(g)
=
\frac{\text{false-negative mass}}
     {\text{total relevant mass}}
=
\frac{r(g)}{\pi},
\qquad
\operatorname{Recall}(g)=1-L(g).
\]

\caption{Probability-mass confusion matrix for a candidate generator.}
\label{tab:population-confusion-matrix}
\end{table}

The distinction matters in rare-relevance settings. If $\pi = 0.01$ and an audit certifies $r(g) \le 0.005$, the implied bound is $L(g) \le 0.5$, equivalently a recall of at least $0.5$, which is not a high-recall guarantee.

A candidate generator may be a single rule, a single classifier, or a union of $J$ component detectors $h_1, \ldots, h_J : \mathcal X \to \{0,1\}$. We use the term component detector for the primitive functions $h_j$ from which some candidate generators are built. The main object certified by the audit is always the candidate generator $g$. For any $t\in\{1,\ldots,J\}$, define the prefix union
$$
g_t(x):=\max_{j\le t} h_j(x).
$$
In practice, the components are often constructed sequentially, with each new $h_j$ designed to recover relevant items missed by the partial union of those built so far. In other words, the candidate set expands monotonically, i.e.,
\[
\widehat S_1 \subseteq \widehat S_2\subseteq \ldots\subseteq \widehat S_J .
\]
We call $g_t$ a \emph{prefix} of the ordered sequence, and $g_J$ is the full union. Adding components to a union can only reduce missed relevant mass (and conditional miss rate), at the cost of an increase in review burden. The certification theorems below treat $g$ as an arbitrary fixed measurable function. We use the prefix structure later to define stopping rules along the ordered sequence and to certify all prefixes with a single shared audit sample. We reserve $J$ for the number of component detectors and use $M$ for the number of pre-specified prefixes considered for certification, since the number of prefixes considered need not equal $J$. The subscripted counts $M_0(g)$ and $M_1(g)$ and the inversion bound $M_U$ defined below are distinct from $M$. The symbol $Q$ always denotes the size of the audit sample under discussion.

\subsection{Finite-corpus model}

In applications, the population is typically a finite corpus $\mathcal X_N = \{x_1, \ldots, x_N\}$. For a fixed candidate generator $g:\mathcal X_N\to\{0,1\}$, write
\[
z_i=g(x_i),\qquad y_i=\varphi^\star(x_i),
\qquad i=1,\ldots,N.
\]
Thus $z_i$ records whether item $x_i$ is selected by the generator, while $y_i$ records whether it is relevant. Write
\begin{align*}
N_0(g) &= |\{i : z_i = 0\}|, &
N_1(g) &= |\{i : z_i = 1\}|, \\
M_0(g) &= |\{i : z_i = 0,\, y_i = 1\}|, &
M_1(g) &= |\{i : z_i = 1,\, y_i = 1\}|,
\end{align*}
where $|A|$ denotes the cardinality of set $A$. The candidate-generator values $z_i$ are observable for every item in $\mathcal X_N$. The relevance labels $y_i$ are observable only on items selected for audit. The finite-corpus empirical fractions, the analogues of the population probabilities, are
$$
p_{0,N}(g)=\frac{N_0(g)}{N}, \qquad B_N(g)=\frac{N_1(g)}{N}.
$$
Thus $p_{0,N}(g)$ corresponds to $p_0(g)=P(g(X)=0)$, and $B_N(g)$ corresponds to $B(g)=P(g(X)=1)$, under the i.i.d.\ population model. The finite-corpus recall, defined when $M_0(g)+M_1(g)>0$, is
$$
\mathrm{Recall}(g)=\frac{M_1(g)}{M_0(g)+M_1(g)}.
$$

\subsection{Audit model}
\label{sec:audit_model}

Operationally, an \emph{audit} is the mechanism that converts selected unlabelled items into labelled items. In the finite-corpus model, before auditing, the corpus $\mathcal X_N=\{x_1,\ldots,x_N\}$ is fixed. We use the notation $z_i=g(x_i)$ and $y_i=\varphi^\star(x_i)$ introduced above. The values $z_i$ are known for every item, while the relevance labels $y_i$ are latent. An audit chooses a subset
$$
S \subseteq \{1,\ldots,N\},
$$
possibly at random according to a declared sampling design, sends those items to a labelling oracle, and returns
$$
\{(x_i,y_i): i\in S\}.
$$
In the i.i.d.\ population model, the same object is a labelled sample $(X_i,\varphi^\star(X_i))$ drawn from a specified sampling distribution.

Each result below uses a specified sampling design and certifies a specified error quantity. The table gives the informal content of the main guarantees.
\begin{center}
{\small
\setlength{\tabcolsep}{4pt}
\renewcommand{\arraystretch}{1.08}
\begin{tabular}{%
>{\raggedright\arraybackslash}p{0.14\linewidth}%
>{\raggedright\arraybackslash}p{0.44\linewidth}%
>{\raggedright\arraybackslash}p{0.36\linewidth}}
\hline
Result & Sampling design & Certified quantity \\
\hline
Theorem~\ref{thm:excluded_binomial}
& Independent draws from the excluded pool, that is, from the items not selected by the candidate generator.
& An upper bound on the relevant mass missed by the generator. \\

Theorem~\ref{thm:excluded_hypergeometric}
& Uniform sampling without replacement from the excluded pool in a fixed finite corpus.
& An upper bound on the number of relevant items missed by the generator. \\

Theorem~\ref{thm:two_pool}
& Independent samples from the excluded pool and the included pool.
& A lower bound on recall, obtained by combining the two audit samples. \\

Theorem~\ref{thm:whole_population}
& Independent draws from the whole population, before conditioning on whether the generator selects the item.
& An upper bound on the relevant mass missed by the generator. \\

Corollary~\ref{cor:prefix_stopping}
& Separate excluded-pool audits for a pre-specified finite list of prefix generators.
& Simultaneous upper bounds on the missed relevant mass of all prefixes in the list. \\

Theorem~\ref{thm:shared_reference_prefix}
& A single labelled reference sample, used to evaluate a fixed sequence of candidate generators.
& Simultaneous missed-mass guarantees for all generators in the sequence. \\

Theorem~\ref{thm:fixed_sequence}
& Sequential audits along a pre-specified nested sequence of candidate generators.
& A valid stopping rule that returns a generator meeting a pre-specified missed-mass target, when such a generator is found. \\

Theorem~\ref{thm:auditable_amplification}
& An audit of relevant items, used to estimate how often each component detector captures a relevant item.
& An upper bound on the conditional miss rate, equivalently a lower bound on recall, after combining several component detectors.  \\

Theorem~\ref{thm:stress_test}
& An audit of relevant items together with independently generated variants of those items.
& Bounds on how often relevant items, or their variants, escape detection. \\
\hline
\end{tabular}
}
\end{center}
We obtain every entry in the table by exact binomial or hypergeometric inversion.

All these designs share two assumptions.

\begin{assumption}[Design-Certification Separation]\label{ass:(A1)}
The candidate generator, candidate-generator family, or prefix family
being certified is fixed before the certification audit labels are
examined. The audit rule is likewise specified before those labels are
examined. It may adaptively select which item to inspect next and when
to stop as a function of labels revealed during the audit, provided
that its validity has been established under that pre-specified rule.
The audit rule may also depend on information available before
certification, such as the candidate-generator values $z_i=g(x_i)$ and
hence the excluded pool $\{g=0\}$. What is prohibited is the use of
certification labels to construct, tune, order, or select the candidate
generator, candidate-generator family, or prefix family being certified.
\end{assumption}

\begin{assumption}[Noiseless Audit Labels]\label{ass:(A2)}
The audit oracle returns the true relevance label for each audited item, with no labelling noise.
\end{assumption}

Assumption \ref{ass:(A1)} is the validity condition. Each certificate below is derived from a sampling distribution that holds only because the candidate generator, candidate-generator family, or prefix family is fixed before the audit labels are observed. If it were chosen after inspecting those labels, the sampling distribution would no longer apply, and the certificate would no longer be valid.

Candidate-generator design may still be exploratory, heuristic, iterative, or
model-based, provided the certification sample is held out from the design data.

\subsection{Design, certification, and iteration}
\label{sec:design_iteration}

The certificates above apply only to candidate generators, or
candidate-generator families, fixed independently of the certification audit.
This does not prevent exploratory design. The analyst may examine misses, add
rules, tune thresholds, or order components on a design split. The requirement is
that the certification split remains unopened until the candidate-generator
family to be certified has been fixed.

In the default workflow, audit and stop, the design split builds an ordered
sequence \(h_1,\ldots,h_J\), and chooses prefix cut points
\(1\le t_1<\cdots<t_M\le J\). Once the prefix family
\(\{g_1,\ldots,g_M\}\) is frozen, the certification split is opened once.
Corollary~\ref{cor:prefix_stopping} or
Theorem~\ref{thm:shared_reference_prefix} supplies the simultaneous bounds, and
the bicriteria rule of Section~\ref{sec:bicriteria} returns the least burdensome
prefix whose certified missed mass meets the target, or reports that none does.
The certification then ends.

If none passes, the analyst may instead audit and iterate: read the residual
misses in the audit, design a better candidate generator, and certify it in a
new round. The difficulty is that the original audit has now been seen. Its
labels, and even the bare outcome that a prefix passed or failed, carry
information about where relevant items were missed. Any candidate generator
designed in response to that information depends on the opened audit labels and
cannot be certified by them. Those labels may serve only as design data for the
next round. Reusing them to certify a candidate generator they helped design
violates Assumption~\ref{ass:(A1)}, and voids the guarantee.

A valid further round must therefore use fresh certification labels, or a
pre-specified sequential-validity device. The simplest approach is to open a
fresh, previously unused split and certify the new candidate generator on it.
More elaborate approaches, such as pre-specified error
spending~\cite{LanDeMets1983} or anytime-valid confidence
sequences~\cite{HowardEtAl2021}, can also be used, but their validity must be
built into the design before the labels are examined. The common principle is unchanged: labels
used to design a candidate generator cannot also be used to certify it.

Section~\ref{sec:discussion} gives a limited extension of the i.i.d.\ excluded-pool missed-mass certificate to noisy audit labels under a known sensitivity floor. Extending the recall and positive-conditional certificates requires additional assumptions on the label-noise mechanism.

\subsection{Why the audit must look outside the candidate set}
\label{sec:no_inside_only}

Labels from the included pool alone do not constrain the relevance labels in the excluded pool. The next proposition makes this precise.

\begin{proposition}[Included-pool labels alone do not bound missed mass]
\label{obs:no_inside_only}
Fix a candidate generator $g$ with $p_0(g)>0$, and let
\[
r(g)=P(Y=1,g(X)=0)
\]
be the missed relevant mass. Consider any procedure which, using the
candidate-generator values $g(X)$, unlabelled data, and labels observed
only on the included pool $\{g(X)=1\}$, returns an upper bound $T$
for $r(g)$.

Let $\mathbb{P}_{\mathrm A}$ denote probability over the sampling and any
internal randomisation of the procedure, with $P$, $g$, and the
relevance indicator fixed. Suppose that the procedure is valid for every
measurable relevance indicator $\varphi^\star$, in the sense that
\[
\mathbb{P}_{\mathrm{A}}\{r(g)\le T\}\ge 1-\delta.
\]
Then
\[
\mathbb{P}_{\mathrm{A}}\{T\ge p_0(g)\}\ge 1-\delta.
\]
Thus included-pool labels alone cannot certify a non-trivial upper bound
on $r(g)$.
\end{proposition}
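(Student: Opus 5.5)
The argument is an indistinguishability (coupling) argument: two relevance indicators that agree on the included pool induce identical observations, so validity under one forces validity-type behaviour under the other. Fix $g$ and $P$, and let $\varphi^\star$ be the relevance indicator under consideration. Define an alternative indicator
\[
\tilde\varphi(x):=\begin{cases}\varphi^\star(x), & g(x)=1,\\ 1, & g(x)=0,\end{cases}
\]
which is measurable because $g$ and $\varphi^\star$ are. Under $\tilde\varphi$ every excluded item is relevant, so the missed mass is $\tilde r(g)=P(\tilde\varphi(X)=1,g(X)=0)=P(g(X)=0)=p_0(g)$.

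The key step is to show that the law of $T$ under $\mathbb{P}_{\mathrm A}$ is the same whether the truth is $\varphi^\star$ or $\tilde\varphi$. The procedure's inputs are of three kinds: the generator values $g(X)$, unlabelled data (whose law is $P$ and does not involve the relevance indicator), and labels revealed only for items with $g=1$. Here $\tilde\varphi$ and $\varphi^\star$ agree on the included pool. The internal randomisation is independent of the indicator. So I would realise both scenarios on the same probability space $(\Omega_{\mathrm A},\mathcal A_{\mathrm A},\mathbb{P}_{\mathrm A})$ with identical sampled items and identical randomisation. On every $\omega$, the full observation record, including any adaptive choices of which included items to label, is then identical, and hence so is $T(\omega)$. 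For adaptive procedures I would make this precise by induction over the sequence of queries. Each next query is a function of past observations and randomisation, and every answered query lies in $\{g=1\}$, where the two indicators coincide. The induction therefore shows that the entire transcript coincides pathwise. I expect this to be the only delicate point: stating carefully that the procedure is a measurable function of the transcript, and that the transcript never contains a label from $\{g=0\}$.

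Given this, I apply the assumed validity to $\tilde\varphi$, which is permitted because validity holds for every measurable indicator. This gives $\mathbb{P}_{\mathrm A}\{\tilde r(g)\le T\}\ge 1-\delta$, that is, $\mathbb{P}_{\mathrm A}\{T\ge p_0(g)\}\ge 1-\delta$. Since $T$ is pathwise the same random variable under $\varphi^\star$, the same bound holds under the original indicator. The conclusion that $T$ is trivial follows because $r(g)\le p_0(g)$ always holds, so $p_0(g)$ is the bound available without any labels.
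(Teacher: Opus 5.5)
Your proposal is correct and follows the paper's proof: you define the same alternative indicator $\tilde\varphi$ (equal to $\varphi^\star$ on $\{g=1\}$ and to one on $\{g=0\}$), argue that the procedure's observations and hence $T$ are unchanged, and apply validity under $\tilde\varphi$, whose missed mass is $p_0(g)$. Your pathwise coupling, the induction over adaptive queries, and the measurability check on $\tilde\varphi$ make explicit what the paper states in one line (``the procedure receives exactly the same observable information'').
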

\begin{proof}
Let
\[
A=\{x:g(x)=1\},
\qquad
B=\{x:g(x)=0\}.
\]
The procedure observes relevance labels only on $A$. Fix any relevance
indicator $\varphi^\star$, and define a second relevance indicator
$\widetilde\varphi$ which agrees with $\varphi^\star$ on $A$ and
is identically one on $B$.

The procedure receives exactly the same observable information under
$\varphi^\star$ and $\widetilde\varphi$: the candidate-generator
values, the unlabelled data, and all observed included-pool labels are
unchanged. It therefore returns the same random upper bound $T$ in
the two cases.

Under $\widetilde\varphi$, every excluded item is relevant, so the
missed relevant mass is
\[
r_{\widetilde\varphi}(g)
=
P\{\widetilde\varphi(X)=1,g(X)=0\}
=
P\{g(X)=0\}
=
p_0(g).
\]
Applying the assumed validity condition under $\widetilde\varphi$
therefore gives
\[
\mathbb{P}_{\mathrm{A}}\{T\ge p_0(g)\}\ge 1-\delta.
\]
Hence any procedure using only included-pool labels must allow, with
probability at least $1-\delta$, the possibility that the entire
excluded pool is relevant.
\end{proof}
Thus included-pool labels, even combined with the candidate-generator values and unlimited unlabelled data, cannot yield a non-trivial distribution-free certificate for missed relevant mass. The best universal bound remains the excluded-pool mass itself.

\section{Bicriteria formulation}
\label{sec:bicriteria}

The burden term is necessary because the unrestricted one-sided objective is degenerate.

\begin{proposition}[Triviality of unrestricted coverage]
\label{prop:trivial_coverage}
The constant candidate generator $g \equiv 1$ selects every item. Hence $r(g)=0$ for every $P$ and every $\varphi^\star$, and $L(g)=0$ whenever $L$ is defined (that is, whenever $\pi=P(Y=1)>0$). Unrestricted minimisation of $r$, or of $L$, over the space of all candidate generators is therefore degenerate: $g \equiv 1$ is always a minimiser.
\end{proposition}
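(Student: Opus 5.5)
The plan is to verify directly from the definitions in Section~\ref{sec:setup} that the constant generator $g\equiv 1$ attains the smallest possible value of each objective. Once that is done, degeneracy follows from nonnegativity of the objectives.

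First, for $g\equiv 1$ the excluded set is $\{x\in\mathcal X:g(x)=0\}=\emptyset$. Therefore, for every $P$ and every measurable $\varphi^\star$,
\[
r(g)=P\bigl(\{x:\varphi^\star(x)=1,\ g(x)=0\}\bigr)\le P(\emptyset)=0 .
\]
Equivalently, $r(g)\le p_0(g)=P(g(X)=0)=0$. Second, when $\pi=P(Y=1)>0$, the conditional miss rate is well defined. The identity $r(g)=\pi L(g)$ from Section~\ref{sec:setup} gives $L(g)=r(g)/\pi=0$, and hence $\mathrm{Recall}(g)=1$.

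Third, every candidate generator $g'$ satisfies $r(g')\ge 0$, and $L(g')\ge 0$ when $\pi>0$, because both are probabilities. So $0$ is a lower bound for each objective over the class of all measurable $g'$. The value $0$ is attained by $g\equiv 1$, which is therefore a minimiser of $r$, and of $L$ whenever $L$ is defined. This holds uniformly in $P$ and $\varphi^\star$, since neither enters the argument.

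To make the word ``degenerate'' concrete, I would add a remark that $B(g)=P(g(X)=1)=1$ for this minimiser, so it sends the whole population for review. This is what motivates the burden constraint in the bicriteria formulation. There is no real obstacle here. The only step needing a little care is stating the conditional-quantity claim only under $\pi>0$, as the statement already does.
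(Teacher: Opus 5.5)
Your proof is correct and follows the paper's route; the paper gives the reasoning inline in the statement itself. Since $g\equiv 1$ has an empty excluded pool, $r(g)=0$, and $L(g)=0$ whenever $\pi>0$, so nonnegativity of both objectives makes $g\equiv1$ a minimiser. Your observation that $B(g)=1$ for this minimiser is the same point the paper uses to motivate putting review burden into the bicriteria problem~\eqref{eq:bicriteria}.
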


Excluding this trivial solution requires review burden to enter the formulation. Let $\mathcal G$ be a finite family of candidate generators, fixed before the audit. For each $g \in \mathcal G$, the burden depends only on $g$ and not on the relevance labels. In the finite-corpus model, $B_N(g) = N_1(g)/N$ can be computed exactly from the candidate-generator values. In the population model, $B(g) = P(g(X)=1)$ can be estimated from an unlabelled sample. In either case, no audit labels are needed. The audit returns an upper confidence bound $U(g)$ on the missed relevant mass $r(g)$. For a target $\varepsilon$, consider the constrained problem
\begin{equation}
\label{eq:bicriteria}
\text{minimise } B(g) \text{ subject to } U(g) \le \varepsilon, \quad g \in \mathcal G,
\end{equation}
and report any minimiser $\widehat g$.

When several candidate generators are considered, the bounds are constructed simultaneously, so that
$$
\mathbb{P}_{\mathrm{audit}}\left(
r(g)\le U(g)\ \text{for all }g\in\mathcal G
\right)\ge 1-\delta,
$$
where the probability is over the random audit sample. On this event every candidate generator in $\mathcal G$ satisfies its own bound at once, so a minimiser $\widehat g$ of~\eqref{eq:bicriteria}, selected after the bounds are computed, still satisfies $r(\widehat g)\le U(\widehat g)$ with probability at least $1-\delta$.

The choice between controlling absolute missed relevant mass $r(g)$ and conditional miss rate $L(g)$ is an application decision. Legal discovery may care about the absolute amount of relevant material left outside the candidate set. Systematic-review screening usually phrases the requirement as recall among all relevant items. We support both targets, but the audit designs differ: $r(g)$ can be certified from an excluded-pool audit alone, whereas $L(g)$ also requires information about the positive count in the included pool.

The bicriteria form is a Neyman--Pearson problem in shape \cite{ScottNowak2005,CannonEtAl2002,Tong2013,RigolletTong2011}: one error type is constrained, the other minimised. Classical Neyman--Pearson classification gives sample-complexity bounds for selecting a classifier from a class subject to a one-sided constraint. We use that machinery at the design stage in Section~\ref{sec:learning}. The novel content is on the certification side, where a fixed candidate generator is audited rather than designed (Section~\ref{sec:excluded}). The important separation in~\eqref{eq:bicriteria} is operational: the burden $B(g) = P(g(X) = 1)$ does not require audit labels, while the missed-mass bound $U(g)$ does.

For an ordered union of $J$ component detectors $h_1,\ldots,h_J$, let $g_0 \equiv 0$ and
$$
g_j(x)=\max_{\ell\le j}h_\ell(x), \qquad j=1,\ldots,J.
$$
Define the incremental review burden
$$
\Delta_j^B := P\big(g_{j-1}(X) = 0,\, h_j(X) = 1\big),
$$
the probability that an item is newly caught by $h_j$ but not by any earlier component. The events $\{g_{j-1}=0, h_j=1\}$ are disjoint across $j$, so
\begin{equation}
\label{eq:burden_decomp}
B(g_J) =P\left(g_J(X)=1\right)=\sum_{j=1}^J \Delta_j^B.
\end{equation}

\section{Excluded-pool certification}
\label{sec:excluded}

We now give the central technical results. Proposition~\ref{obs:no_inside_only} shows why included-pool labels cannot certify missed mass. The constructive response is to direct audit labels to the only region where missed relevant items can be found, namely the excluded pool $\{g(X) = 0\}$.

\begin{remark}[Conventions for degenerate cases]
Throughout Sections~\ref{sec:excluded} to~\ref{sec:stress}, conditional distributions are used only when the conditioning event has positive probability, and the degenerate cases are resolved as follows. If $p_0(g)=0$, then $r(g)=0$, no excluded-pool audit is required, and every excluded-pool certificate for $g$ is read as the trivial bound $0$. If $p_1(g)=0$ and $\pi>0$, then $\mathrm{Recall}(g)=0$. Statements involving recall, or sampling from $P(\cdot\mid Y=1)$, assume $\pi>0$. In particular, Sections~\ref{sec:amplification} and~\ref{sec:stress} audit relevant items and assume $\pi>0$ throughout, and the finite-corpus recall statements assume $M_0(g)+M_1(g)>0$, as in Section~\ref{sec:setup}. In the finite-corpus model, audit sizes satisfy $0\le n_0\le N_0(g)$ and $0\le n_1\le N_1(g)$ for every sampled candidate generator. If $N_0(g)=0$, then $M_0(g)=0$ is known without sampling. Empty audits are covered by the conventions $U_0(0,\alpha)=1$ and $L_0(0,\alpha)=0$ of Lemma~\ref{lem:conditioning} below.
\end{remark}

The certificates below repeatedly reduce to the same elementary problem. We observe $K$ relevant items in $n$ audited draws from a pool whose unknown relevance probability is $p$. An upper one-sided confidence bound $U_n(K,\alpha)$ is a number such that, with probability at least $1-\alpha$, the true $p$ is no larger than $U_n(K,\alpha)$. A lower one-sided confidence bound $L_n(K,\alpha)$ is defined analogously. Throughout this section, $0<\alpha<1$ denotes the error probability allocated to one such one-sided bound.

We use the exact Clopper--Pearson bounds, obtained by inverting binomial tail probabilities. For integers $0\le k\le n$ and $\alpha\in(0,1)$, define
\begin{align*}
U_n(k,\alpha)
&:=
\sup\left\{
q\in[0,1]:
\mathbb{P}(\mathrm{Bin}(n,q)\le k)\ge\alpha
\right\},\\
L_n(k,\alpha)
&:=
\inf\left\{
q\in[0,1]:
\mathbb{P}(\mathrm{Bin}(n,q)\ge k)\ge\alpha
\right\}.
\end{align*}
By construction, for $K\sim\mathrm{Bin}(n,p)$,
\[
\mathbb{P}_p\{p>U_n(K,\alpha)\}\le\alpha,
\qquad
\mathbb{P}_p\{p<L_n(K,\alpha)\}\le\alpha,
\]
for every $p\in[0,1]$ \cite{ClopperPearson1934}. Here exact means that the bounds invert exact binomial tail probabilities and carry guaranteed finite-sample coverage. It does not mean that the intervals are shortest or of exactly nominal level: as interval procedures they are conservative, in that the coverage can exceed $1-\alpha$. The same reading of exact applies to the hypergeometric certificate below. The lower-bound symbol $L_n(k,\alpha)$ is unrelated to the conditional miss rate $L(g)$. The subscript and arguments distinguish the two. In the zero-observed-event case, $U_n(0, \alpha) = 1 - \alpha^{1/n}$, for $n\ge 1$. Finding no relevant items in the audited excluded sample does not imply that the excluded pool contains no relevant items. It gives a finite-sample upper bound on the excluded-pool relevance rate.

The next lemma is a conditional form of the usual exact binomial inversion of Clopper and Pearson \cite{ClopperPearson1934}. It will be used when the effective audit size is random, but the relevant count is binomial conditional on that realised size.

\begin{lemma}[Conditional binomial inversion]
\label{lem:conditioning}
Let $Q$ be a non-negative integer-valued random variable and suppose
that, for every $q$ with $\mathbb{P}(Q=q)>0$,
\[
K\mid\{Q=q\}\sim\mathrm{Bin}(q,\theta)
\]
for a fixed $\theta\in[0,1]$. Then, with the convention $U_0(0,\alpha)=1$,
$$
\mathbb{P}\left(\theta>U_Q(K,\alpha)\right)\le\alpha,
$$
so $U_Q(K,\alpha)$ is a valid upper confidence bound for $\theta$ unconditionally. The analogous statement holds for $L_Q(K,\alpha)$, with the convention $L_0(0,\alpha)=0$.
\end{lemma}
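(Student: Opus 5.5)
The plan is to condition on the realised audit size and average the fixed-size Clopper--Pearson guarantee over $Q$. By the law of total probability,
\[
\mathbb{P}\bigl(\theta>U_Q(K,\alpha)\bigr)
=\sum_{q:\,\mathbb{P}(Q=q)>0}\mathbb{P}(Q=q)\,
\mathbb{P}\bigl(\theta>U_q(K,\alpha)\,\big|\,Q=q\bigr).
\]
Once we condition on $\{Q=q\}$, the bound $U_Q(K,\alpha)$ becomes the deterministic function $U_q(\cdot,\alpha)$ applied to $K$. Since $K\mid\{Q=q\}\sim\mathrm{Bin}(q,\theta)$ for the same fixed $\theta$, each conditional probability is the fixed-size quantity $\mathbb{P}_\theta\{\theta>U_q(K,\alpha)\}$ for a binomial sample of size $q$.

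For $q\ge 1$ I would invoke the exact Clopper--Pearson coverage recorded just before the lemma, which gives $\mathbb{P}_\theta\{\theta>U_q(K,\alpha)\}\le\alpha$ for every $\theta\in[0,1]$.

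\begin{itemize}
\item To be self-contained, I would briefly justify that coverage. Let $F_q(k;p)=\mathbb{P}(\mathrm{Bin}(q,p)\le k)$, which is continuous and nonincreasing in $p$.
\item The event $\theta>U_q(K,\alpha)$ forces $F_q(K;\theta)<\alpha$, because $U_q(k,\alpha)$ is the supremum of the set $\{p:F_q(k;p)\ge\alpha\}$ and that set is an interval containing $0$.
\item Let $k^\ast$ be the largest $k$ with $F_q(k;\theta)<\alpha$. Then $\mathbb{P}(F_q(K;\theta)<\alpha)=F_q(k^\ast;\theta)<\alpha$, since $F_q(k;\theta)$ is increasing in $k$.
\end{itemize}

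The case $q=0$ is where the convention enters. There $K=0$ almost surely, and $U_0(0,\alpha)=1\ge\theta$, so the conditional failure probability is $0\le\alpha$.

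Summing, each conditional term is at most $\alpha$, and the weights $\mathbb{P}(Q=q)$ sum to one, so the total is at most $\alpha$. The lower bound $L_Q$ follows by the symmetric argument using upper binomial tails and the convention $L_0(0,\alpha)=0$. Alternatively, one can apply the upper-bound result to $Q-K$ with parameter $1-\theta$, using the identity $L_q(k,\alpha)=1-U_q(q-k,\alpha)$.

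I expect no serious obstacle. The only point needing care is that the conditional law must hold with the same $\theta$ for every $q$. Otherwise the averaging step would not reduce to a single fixed-parameter coverage statement.
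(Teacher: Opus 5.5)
Your proof is correct and follows the paper's argument. You condition on $\{Q=q\}$ via the law of total probability, apply fixed-size Clopper--Pearson coverage for $q\ge 1$, and use the convention $U_0(0,\alpha)=1$ to make the $q=0$ term vanish. Your self-contained coverage check and your reduction of the lower bound, applying $L_q(k,\alpha)=1-U_q(q-k,\alpha)$ to $Q-K\mid\{Q=q\}\sim\mathrm{Bin}(q,1-\theta)$, are both valid additions; the paper only says the lower bound follows identically.
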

\begin{proof}
Let
\[
A:=\{\theta>U_Q(K,\alpha)\}.
\]
By the law of total probability, equivalently the tower property applied
to the indicator $\mathbbm 1_A$,
\begin{align*}
\mathbb{P}(A)
&=
\mathbb E\left[
    \mathbb{P}(A\mid Q)
\right] \\
&=
\sum_{q\ge 0}
\mathbb{P}\left(
  \theta>U_q(K,\alpha)\mid Q=q
\right)
\mathbb{P}(Q=q).
\end{align*}
For every \(q\ge1\) with \(\mathbb{P}(Q=q)>0\), the conditional binomial
assumption and the Clopper--Pearson property give
\[
\mathbb{P}\left(
  \theta>U_q(K,\alpha)\mid Q=q
\right)
\le\alpha.
\]
For \(q=0\), we have \(K=0\) almost surely conditional on \(Q=0\), and
the convention \(U_0(0,\alpha)=1\) gives
\[
\mathbb{P}\left(
  \theta>U_0(K,\alpha)\mid Q=0
\right)
=
0.
\]
Consequently,
\begin{align*}
\mathbb{P}(A)
&\le
\alpha\sum_{q\ge1}\mathbb{P}(Q=q)\\
&=
\alpha\,\mathbb{P}(Q\ge1)
\le\alpha.
\end{align*}
The lower-bound statement follows identically by applying the same
argument to
\[
\{\theta<L_Q(K,\alpha)\},
\]
using the convention \(L_0(0,\alpha)=0\).
\end{proof}

\subsection{The binomial excluded-pool certificate}

Fix a candidate generator $g$ and write $p_0(g) = P(g(X) = 0)$, $\eta(g) = P(Y=1 \mid g(X) = 0)$. The decomposition
$$
r(g) = p_0(g) \cdot \eta(g)
$$
is exact. The quantity $p_0(g)$ is observable without audit labels. The quantity $\eta(g)$ requires labels from the excluded pool.

\begin{theorem}[Excluded-pool binomial certificate]
\label{thm:excluded_binomial}
Let $\mathcal G$ be a finite family of candidate generators, fixed independently of the audit, with $p_0(g)>0$ for every $g \in \mathcal G$. Generators with $p_0(g)=0$ are certified trivially by the conventions above. For each $g \in \mathcal G$, draw a separate sample $X_1, \ldots, X_{n_0}$ i.i.d.\ from the conditional distribution of $X$ given $g(X) = 0$, observe $Y_i = \varphi^\star(X_i)$, and let
$$
K_0(g) := \sum_{i=1}^{n_0} Y_i.
$$
Then, with probability at least $1 - \delta$,
$$
\eta(g) \le U_{n_0}\left(K_0(g), \frac{\delta}{|\mathcal G|}\right) \quad \text{and} \quad r(g) \le p_0(g) \cdot U_{n_0}\left(K_0(g), \frac{\delta}{|\mathcal G|}\right) \quad \text{for all } g \in \mathcal G.
$$
If $n_0\ge 1$ and $K_0(g)=0$,
\begin{equation}
\label{eq:zero_miss_excluded}
r(g) \le p_0(g)\left(1 - (\delta/|\mathcal G|)^{1/n_0}\right) \le \frac{p_0(g) \log(|\mathcal G|/\delta)}{n_0}.
\end{equation}
\end{theorem}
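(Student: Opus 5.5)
The plan is to reduce everything to the single-generator Clopper--Pearson guarantee and then take a union bound over the finite family $\mathcal G$. Fix $g\in\mathcal G$. Because $g$ is fixed independently of the audit (Assumption~\ref{ass:(A1)}) and the draws $X_1,\ldots,X_{n_0}$ are i.i.d.\ from $P(\cdot\mid g(X)=0)$, which is well defined since $p_0(g)>0$, each $Y_i=\varphi^\star(X_i)$ is Bernoulli with success probability $P(Y=1\mid g(X)=0)=\eta(g)$. Since the labels are noiseless (Assumption~\ref{ass:(A2)}), $K_0(g)\sim\mathrm{Bin}(n_0,\eta(g))$ exactly. The defining coverage property of $U_n(k,\alpha)$, with $\alpha=\delta/|\mathcal G|$, then gives $\mathbb{P}_{\mathrm{audit}}\{\eta(g)>U_{n_0}(K_0(g),\delta/|\mathcal G|)\}\le\delta/|\mathcal G|$. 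For $n_0=0$ this is immediate from the convention $U_0(0,\alpha)=1$, or equivalently from Lemma~\ref{lem:conditioning} with $Q\equiv n_0$.

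Next I would apply a union bound over the $|\mathcal G|$ failure events. This bounds the probability that some $g$ violates its bound by $\delta$, so the simultaneous statement for $\eta(g)$ holds with probability at least $1-\delta$. The separate per-generator samples are not actually needed for this step, because the union bound requires no independence; I would note that in passing. The bound on $r(g)$ then follows on the same event, by multiplying by the deterministic, label-free quantity $p_0(g)\ge 0$ and using the exact decomposition $r(g)=p_0(g)\eta(g)$.

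For the zero-miss display \eqref{eq:zero_miss_excluded}, I would first compute $U_{n_0}(0,\alpha)$ directly. Since $\mathbb{P}(\mathrm{Bin}(n_0,q)\le 0)=(1-q)^{n_0}$, which is continuous and decreasing in $q$, the supremum of $\{q:(1-q)^{n_0}\ge\alpha\}$ is $1-\alpha^{1/n_0}$, which gives the first inequality. For the second inequality, I would write $\alpha^{1/n_0}=\exp(-\log(1/\alpha)/n_0)$ and apply $1-e^{-x}\le x$ with $x=\log(|\mathcal G|/\delta)/n_0$.

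The main point needing care is not a calculation but the validity of treating $K_0(g)$ as exactly binomial with the fixed parameter $\eta(g)$. That requires $g$ and the family to be frozen before any audit labels are seen; otherwise $\eta(\widehat g)$ would be random and correlated with the sample. I would state explicitly that this is where Assumption~\ref{ass:(A1)} enters, and that everything else is the standard Clopper--Pearson argument.
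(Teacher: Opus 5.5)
Your proposal is correct and follows the paper's proof: $K_0(g)\sim\mathrm{Bin}(n_0,\eta(g))$ for each fixed $g$, then Clopper--Pearson coverage at level $\delta/|\mathcal G|$, a union bound over $\mathcal G$, multiplication by $p_0(g)$ via $r(g)=p_0(g)\eta(g)$, and finally the zero-count formula $U_{n_0}(0,\alpha)=1-\alpha^{1/n_0}$ with $1-e^{-u}\le u$. Your extra remarks are accurate but do not change the argument: the $n_0=0$ case follows from the convention, and the union bound needs no independence between the per-generator samples.
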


\begin{proof}
For each fixed \(g\in\mathcal G\), because \(g\) and \(\mathcal G\)
are fixed before the certification audit labels are observed, and
because the audit draws i.i.d.\ from
\(P(\,\cdot\mid g(X)=0)\), each audited label satisfies
\[
Y_i\sim\mathrm{Bernoulli}(\eta(g)),
\qquad
\eta(g):=P(Y=1\mid g(X)=0).
\]
Hence
\[
K_0(g)
=
\sum_{i=1}^{n_0}Y_i
\sim
\mathrm{Bin}(n_0,\eta(g)).
\]
By the defining property of \(U_{n_0}\),
\[
\mathbb{P}_{\mathrm{audit}}\left\{
\eta(g)>
U_{n_0}\left(
K_0(g),\frac{\delta}{|\mathcal G|}
\right)
\right\}
\le
\frac{\delta}{|\mathcal G|}.
\]

For each \(g\in\mathcal G\), define the failure event
\[
E_g
:=
\left\{
\eta(g)>
U_{n_0}\left(
K_0(g),\frac{\delta}{|\mathcal G|}
\right)
\right\}.
\]
Then, by the union bound,
\begin{align*}
\mathbb{P}_{\mathrm{audit}}\left\{
\exists g\in\mathcal G:
\eta(g)>
U_{n_0}\left(
K_0(g),\frac{\delta}{|\mathcal G|}
\right)
\right\}
=
\mathbb{P}_{\mathrm{audit}}\left(
\bigcup_{g\in\mathcal G}E_g
\right)\le
\sum_{g\in\mathcal G}
\mathbb{P}_{\mathrm{audit}}(E_g)\le
\sum_{g\in\mathcal G}
\frac{\delta}{|\mathcal G|}=\delta.
\end{align*}
Equivalently, with probability at least \(1-\delta\),
\[
\eta(g)
\le
U_{n_0}\left(
K_0(g),\frac{\delta}{|\mathcal G|}
\right)
\qquad
\text{for every }g\in\mathcal G.
\]
Since $r(g)=p_0(g)\eta(g)$, the same event implies
\[
r(g)
\le
p_0(g)\,
U_{n_0}\left(
K_0(g),\frac{\delta}{|\mathcal G|}
\right)
\qquad
\text{for every }g\in\mathcal G.
\]

The same argument permits generator-specific sample sizes and error
allocations \(\alpha_g\) satisfying
\[
\sum_{g\in\mathcal G}\alpha_g\le\delta.
\]

Finally, if \(n_0\ge 1\) and \(K_0(g)=0\), then, with
$\alpha=\delta/|\mathcal G|$, $U_{n_0}(0,\alpha)=1-\alpha^{1/n_0}$. Therefore
\[
r(g)
\le
p_0(g)
\left[
1-
\left(
\frac{\delta}{|\mathcal G|}
\right)^{1/n_0}
\right].
\]
Writing $u:=\tfrac{\log(|\mathcal G|/\delta)}{n_0}$, we have
\[
1-
\left(
\frac{\delta}{|\mathcal G|}
\right)^{1/n_0}
=
1-e^{-u}
\le u,
\]
and hence $r(g)
\le
\tfrac{
p_0(g)\log(|\mathcal G|/\delta)
}{
n_0
}.
$
\end{proof}
The factor $p_0(g)$ requires no audit labels, because $g$ can be evaluated on any item without knowing its relevance, and both excluded-pool certificates, Theorem~\ref{thm:excluded_binomial} and the finite-corpus Theorem~\ref{thm:excluded_hypergeometric}, leave it outside the audit. Whenever the corpus is finite it is known exactly, as $p_{0,N}(g)=N_0(g)/N$. In the i.i.d.\ population model it is a probability rather than a count, and must be known or estimated. If $p_0(g)$ must instead be estimated, an independent unlabelled sample $X_1^u,\ldots,X_m^u$ gives
$$
\widehat p_0(g)=m^{-1}\sum_{i=1}^m\mathbbm 1\{g(X_i^u)=0\}.
$$
Hoeffding's inequality and a union bound over $\mathcal G$ give
$$
p_0(g)\le
\widehat p_0(g)+
\sqrt{\frac{\log(2|\mathcal G|/\delta)}{2m}}
$$
simultaneously for all $g\in\mathcal G$, with probability at least $1-\delta/2$. Combining this event with the audit event, also taken at error level $\delta/2$, gives the missed-mass certificate with confidence at least $1-\delta$, with $p_0(g)$ replaced by the displayed upper bound.

Unlabelled data do not require relevance judgements, so $m$ can be chosen large enough to make the Hoeffding correction small. The relevance labels in the excluded-pool audit are then used only to bound $\eta(g)=P(Y=1\mid g(X)=0)$. Combining this bound with the upper bound on $p_0(g)$ gives the stated upper bound on $r(g)=p_0(g)\eta(g)$.

\subsection{The hypergeometric finite-corpus certificate}

For finite corpora, the natural distributional assumption is sampling without replacement. For a fixed candidate generator $g$, write $N_0=N_0(g)$ for the excluded-pool size, and let $M_0(g) = |\{i : g(x_i) = 0,\, y_i = 1\}|$ be the unknown number of missed relevant items in that pool. If the audit draws $n_0$ items uniformly without replacement from this pool and observes $K_0$ relevant items, the lower-tail probability $m \mapsto \mathbb{P}(\mathrm{Hypergeom}(N_0, m, n_0) \le k)$ is non-increasing in $m$, so the inversion 
$$
M_U(k, \alpha) := \max\left\{m \in \{0, 1, \ldots, N_0\} : \mathbb{P}\left(\mathrm{Hypergeom}(N_0, m, n_0) \le k\right) \ge \alpha\right\}
$$
gives an upper confidence bound on $M_0(g)$. Where the pool and sample sizes must be displayed, we write $M_U(k,\alpha;N,n)$ for the inversion computed with population size $N$ and sample size $n$. When they are clear from context, we abbreviate to $M_U(k,\alpha)$. This is the exact finite-corpus analogue of the binomial upper confidence bound used in the i.i.d.\ population model. The binomial certificate of Theorem~\ref{thm:excluded_binomial} is exact under sampling with replacement from the excluded pool. For sampling without replacement, the exact finite-corpus certificate is the hypergeometric one given below. In the zero-count case, sampling without replacement is no less favourable than independent sampling for finding at least one relevant item, so the binomial zero-count upper bound remains conservative. Classical comparisons of the hypergeometric and binomial distribution functions are given by Uhlmann~\cite{Uhlmann1966}.

The certificate primarily controls the missed relevant count $M_0(g)$. A recall statement also needs the denominator, namely the total number of relevant items $M_0(g)+M_1(g)$. Thus an excluded-pool audit controls the missed-positive part of recall. The included-pool contribution must be known, lower-bounded, or audited separately.

\begin{theorem}[Excluded-pool hypergeometric certificate]
\label{thm:excluded_hypergeometric}
Let $\mathcal G$ be a finite family of candidate generators, fixed independently of the audit. For each $g \in \mathcal G$, draw $n_0$ items uniformly without replacement from the excluded pool $\{i : g(x_i) = 0\}$, with $0\le n_0\le N_0(g)$, observe their labels, and let $K_0(g)$ be the number of relevant items observed. Pools with $N_0(g)=0$ have $M_0(g)=0$ known without sampling and are certified trivially by the conventions above; when a two-pool sample is drawn, the analogous condition $0\le n_1\le N_1(g)$ is likewise assumed. Then, with probability at least $1 - \delta$,
$$
M_0(g) \le M_U\left(K_0(g), \frac{\delta}{|\mathcal G|};\, N_0(g),\, n_0\right) \quad \text{for all } g \in \mathcal G,
$$
where the inversion for each $g$ is computed with its own excluded-pool size $N_0(g)$.
If, in addition, all items in the included pool $\{i : g(x_i) = 1\}$ are reviewed, so that $M_1(g)$ is known, then
$$
\mathrm{Recall}(g) = \frac{M_1(g)}{M_1(g) + M_0(g)} \ge \frac{M_1(g)}{M_1(g) + M_U\left(K_0(g),\, \delta/|\mathcal G|;\, N_0(g),\, n_0\right)}.
$$
\end{theorem}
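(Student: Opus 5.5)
The plan mirrors the proof of Theorem~\ref{thm:excluded_binomial}, with the binomial tail replaced by the hypergeometric tail. Fix $g\in\mathcal G$ and write $\alpha=\delta/|\mathcal G|$, $N_0=N_0(g)$, $m^\star=M_0(g)$. By Assumption~\ref{ass:(A1)}, $g$ and hence the pool $\{i:g(x_i)=0\}$ and the count $m^\star$ are fixed before sampling. A uniform draw of $n_0$ items without replacement therefore gives $K_0(g)\sim\mathrm{Hypergeom}(N_0,m^\star,n_0)$ exactly. Write $F_m(k):=\mathbb{P}(\mathrm{Hypergeom}(N_0,m,n_0)\le k)$.

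\textbf{Per-generator coverage.} I want $\mathbb{P}_{\mathrm{audit}}\{m^\star>M_U(K_0,\alpha)\}\le\alpha$. This uses the monotonicity noted before the theorem: $F_m(k)$ is non-increasing in $m$, which follows from a coupling in which enlarging the set of relevant items can only increase the observed count. By that monotonicity, the set $\{m:F_m(k)\ge\alpha\}$ is an initial segment $\{0,\dots,M_U(k,\alpha)\}$. It is nonempty because $F_0(k)=1$. Hence
\[
m^\star>M_U(K_0,\alpha)\iff F_{m^\star}(K_0)<\alpha .
\]
Now let $k^\ast:=\max\{k:F_{m^\star}(k)<\alpha\}$, or $-1$ if no such $k$ exists. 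Since $F_{m^\star}$ is non-decreasing in $k$, the event above equals $\{K_0\le k^\ast\}$. Its probability is therefore $F_{m^\star}(k^\ast)<\alpha$, or $0$ when $k^\ast=-1$. This is the standard tail-inversion argument, now for a discrete distribution.

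\textbf{Main obstacle and degenerate cases.} The main point requiring care is the initial-segment structure. Without monotonicity, the inversion defined by a maximum would not be equivalent to the test event, so I will state the coupling explicitly. Concretely, realise the sample as a uniformly random $n_0$-subset of a fixed pool, and nest the relevant sets for $m$ and $m+1$; then $K_0^{(m)}\le K_0^{(m+1)}$ pointwise. The degenerate cases are handled directly. For $n_0=0$, we have $K_0=0$ and $F_m(0)=1$ for all $m$, so $M_U=N_0$, which is trivially valid. For $N_0=0$, the pool is certified by convention.

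\textbf{Union bound and recall.} Applying the union bound over $g\in\mathcal G$ exactly as in Theorem~\ref{thm:excluded_binomial} gives simultaneous coverage with probability at least $1-\delta$. Each inversion uses its own $N_0(g)$; separate samples are not even needed for the union bound. For the recall statement, note that on this event, when $M_1(g)$ is known exactly, the map $x\mapsto M_1/(M_1+x)$ is non-increasing in $x\ge0$. Substituting the upper bound $M_U$ for $M_0(g)$ therefore gives the stated lower bound on recall. This uses the convention $M_0+M_1>0$, or $0/0$ read as trivially satisfied when $M_1=0$ and the bound is $0$.
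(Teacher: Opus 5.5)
Your proposal is correct and takes the same route as the paper. Both arguments use the hypergeometric law of $K_0(g)$ under Assumption~\ref{ass:(A1)}, invert through monotonicity of the lower tail in $m$, apply a union bound over $\mathcal G$, and obtain the recall bound from monotonicity of $M_1/(M_1+x)$ in $x$. Where the paper simply asserts that monotonicity implies coverage, you prove it explicitly: the nested-set coupling gives the monotonicity, and the initial-segment argument shows that $\{M_0(g)>M_U\}=\{K_0\le k^\ast\}$ has probability below $\alpha$.
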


\begin{proof}
By Assumption~\ref{ass:(A1)}, $\mathcal G$ is fixed independently of the audit. The sampling design gives $K_0(g) \sim \mathrm{Hypergeom}(N_0(g), M_0(g), n_0)$. Monotonicity of
\[
m\longmapsto
\mathbb{P}\left\{
\mathrm{Hypergeom}(N_0(g),m,n_0)\le k
\right\}
\]
in \(m\) implies
\[
\mathbb{P}_{\mathrm{audit}}\left\{
M_0(g)>
M_U\left(K_0(g),\frac{\delta}{|\mathcal G|}\right)
\right\}
\le
\frac{\delta}{|\mathcal G|}.
\]
 A union bound over $\mathcal G$ gives the simultaneous statement. The recall bound follows from monotonicity of $M_1(g)/(M_1(g) + M_0)$ in $M_0$.
\end{proof}

This is the finite-corpus version of the elusion-sampling argument used in technology-assisted review~\cite{GrossmanCormackGlossary2013}. One samples from the unreviewed, or excluded, pool and uses the number of relevant documents found in that sample to bound the number of relevant documents remaining in the pool. In the zero-count case, the audit coincides with a classical zero-acceptance-number attributes sampling plan with lot size $N_0$ and limiting quality $m/N_0$~\cite{SchillingNeubauer2017}.

\begin{remark}[Recall requires a denominator]
A certificate for absolute missed mass, $r(g)=P(Y=1,g(X)=0)$, does not by itself give a recall certificate unless the total prevalence $\pi=P(Y=1)$ is known or lower-bounded. Since
$$
L(g)=P(g(X)=0\mid Y=1)=\frac{r(g)}{\pi},
$$
an upper bound on $r(g)$ becomes a bound on missed-positive rate only after dividing by a certified lower bound on $\pi$. This distinction is especially important in rare-event settings: a small absolute missed mass may still be non-negligible relative to the total relevant population.
\end{remark}

A missed-mass certificate can be converted into a recall certificate when a lower bound on the total relevant mass is also available. If $r(g)\le u$ and the total relevant mass is at least $a>0$, then
$$
\mathrm{Recall}(g)\ge 1-\frac{u}{a}.
$$
Thus missed-mass certification controls recall only after supplying a denominator bound.

A denominator lower bound may come from a known total relevant mass, a small whole-population audit, or domain knowledge. This gives a recall certificate from an excluded-pool audit, but the bound becomes weaker when the denominator lower bound is small. If the included pool is fully reviewed, the finite-corpus certificate above gives the sharper bound
$$
\frac{M_1(g)}{M_1(g)+M_U},
$$
which does not require a separate denominator floor.

\subsection{Two-pool recall certificate}

Theorem~\ref{thm:excluded_hypergeometric} certifies the number of relevant items missed in the excluded pool. To turn that missed-count certificate into a recall certificate, one also needs information about the number of relevant items recovered in the included pool. If the included pool has been fully reviewed, this number is known. Otherwise, it must be estimated or lower-bounded.

The two-pool design does this by auditing both sides of the candidate-generation boundary. The excluded-pool sample controls the missed positives, while the included-pool sample controls the recovered positives. If the goal is only to certify absolute missed relevant mass, the excluded-pool certificate in Section~\ref{sec:excluded} is sufficient.

\begin{theorem}[Two-pool recall certificate]
\label{thm:two_pool}
Fix a candidate generator $g$ with $\pi>0$, $p_0(g)>0$, and $p_1(g)>0$, and write $p_1(g) = 1 - p_0(g) = P(g(X) = 1)$, $\eta(g) = P(Y=1 \mid g(X)=0)$, and $\eta_1(g) = P(Y=1 \mid g(X)=1)$. By Bayes' theorem, $\eta_1(g)=\mathrm{Recall}(g)\pi/p_1(g)$. Draw, independently of each other, $n_0$ items from the excluded pool $\{g = 0\}$ and $n_1$ items from the included pool $\{g = 1\}$, each sample i.i.d.\ from the corresponding conditional distribution. Let $K_0$ and $K_1$ be the counts of relevant items observed in each audit. Since
$$
P(Y=1,g(X)=1)=p_1(g)\eta_1(g),
\qquad
P(Y=1,g(X)=0)=p_0(g)\eta(g),
$$
the law of total probability gives
$$
P(Y=1)=p_1(g)\eta_1(g)+p_0(g)\eta(g).
$$
Then, with probability at least $1 - \delta_0 - \delta_1$,
$$
\mathrm{Recall}(g) = \frac{p_1(g)\, \eta_1(g)}{p_1(g)\, \eta_1(g) + p_0(g)\, \eta(g)} \ge \frac{p_1(g)\, L_{n_1}(K_1, \delta_1)}{p_1(g)\, L_{n_1}(K_1, \delta_1) + p_0(g)\, U_{n_0}(K_0, \delta_0)}.
$$
\end{theorem}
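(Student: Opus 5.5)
The plan is to establish two one-sided Clopper--Pearson events, one per pool, combine them with a union bound, and then use monotonicity of the recall expression in $(\eta_1,\eta)$ to transfer the bounds.

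First I verify the recall identity. Since $\pi>0$, $\mathrm{Recall}(g)=P(g(X)=1\mid Y=1)=P(Y=1,g(X)=1)/\pi$. The numerator equals $p_1(g)\eta_1(g)$ because $p_1(g)>0$ makes the conditioning well defined. The denominator is $\pi=p_1(g)\eta_1(g)+p_0(g)\eta(g)$ by the law of total probability displayed in the statement. This gives the stated equality.

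Next I identify the sampling distributions. By Assumption~\ref{ass:(A1)}, $g$ is fixed before the labels are seen. The excluded-pool draws are i.i.d.\ from $P(\cdot\mid g(X)=0)$, so $K_0\sim\mathrm{Bin}(n_0,\eta(g))$. Likewise $K_1\sim\mathrm{Bin}(n_1,\eta_1(g))$. By the defining property of the Clopper--Pearson bounds (with the empty-audit conventions of Lemma~\ref{lem:conditioning} if $n_0$ or $n_1$ is zero), the event $E_0=\{\eta(g)>U_{n_0}(K_0,\delta_0)\}$ has probability at most $\delta_0$. Similarly $E_1=\{\eta_1(g)<L_{n_1}(K_1,\delta_1)\}$ has probability at most $\delta_1$. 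A union bound gives probability at least $1-\delta_0-\delta_1$ for the complement of $E_0\cup E_1$. Independence of the two samples is not needed for this step; it only makes the marginal laws as stated.

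On the good event, write $a=p_1(g)\eta_1(g)\ge a':=p_1(g)L_{n_1}(K_1,\delta_1)\ge 0$ and $b=p_0(g)\eta(g)\le b':=p_0(g)U_{n_0}(K_0,\delta_0)$. The map $(a,b)\mapsto a/(a+b)$ is non-decreasing in $a$ and non-increasing in $b$ on $\{a\ge0,\,b\ge0,\,a+b>0\}$. Hence $a/(a+b)\ge a/(a+b')\ge a'/(a'+b')$. The second inequality holds because $x\mapsto x/(x+b')$ is non-decreasing for $b'\ge0$.

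The main obstacle is the degenerate denominators. We have $a+b=\pi>0$. Also $a+b'\ge a+b>0$, so the intermediate expression is well defined. If $a'+b'=0$, the right-hand side is $0/0$. This forces $L_{n_1}(K_1,\delta_1)=0$ and $U_{n_0}(K_0,\delta_0)=0$, and I would read the bound as the trivial bound $0$, consistent with the degenerate-case conventions. Otherwise the chain of inequalities above applies directly, which completes the argument.
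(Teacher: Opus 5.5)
Your proposal is correct and follows the paper's proof essentially step for step: Clopper--Pearson bounds on $\eta(g)$ and $\eta_1(g)$ from the two binomial counts, a union bound for the joint event, and monotonicity of $p_1\eta_1/(p_1\eta_1+p_0\eta)$ in each rate. Your $0/0$ case is in fact vacuous: $U_{n_0}(K_0,\delta_0)>0$ for every $\delta_0\in(0,1)$, with $U_0(0,\delta_0)=1$, so $p_0(g)>0$ forces $a'+b'>0$.
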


\begin{proof}
$K_0 \sim \mathrm{Bin}(n_0, \eta(g))$ and $K_1 \sim \mathrm{Bin}(n_1, \eta_1(g))$, independent across pools. The Clopper--Pearson bounds give $\eta(g) \le U_{n_0}(K_0, \delta_0)$ with probability at least $1 - \delta_0$ and $\eta_1(g) \ge L_{n_1}(K_1, \delta_1)$ with probability at least $1 - \delta_1$. A union bound gives the joint event. The recall ratio
$$
\mathrm{Recall}(g) = \frac{p_1 \eta_1}{p_1 \eta_1 + p_0 \eta}
$$
is increasing in $\eta_1$ and decreasing in $\eta$, so substituting the lower bound on $\eta_1$ and the upper bound on $\eta$ gives a lower bound on the recall.
\end{proof}

The factors $p_0(g)$ and $p_1(g)=1-p_0(g)$ do not require relevance judgements. In the finite-corpus model they are known exactly as $N_0(g)/N$ and $N_1(g)/N$. In the population model they may be estimated from an independent unlabelled sample. Specifically, if $\overline p_0(g)$ and $\underline p_1(g)$ are simultaneous upper and lower confidence bounds satisfying $p_0(g)\le \overline p_0(g)$ and $p_1(g)\ge \underline p_1(g)$,
then the recall certificate remains valid after replacing $p_0(g)$ by $\overline p_0(g)$ and $p_1(g)$ by $\underline p_1(g)$:
\[
\operatorname{Recall}(g)\ge
\frac{\underline p_1(g)L_{n_1}(K_1,\delta_1)}{\underline p_1(g)L_{n_1}(K_1,\delta_1)+\overline p_0(g)U_{n_0}(K_0,\delta_0)}.
\]
The confidence levels used to construct these bounds must be included in the overall error allocation. The audit labels are used to bound the two conditional relevance rates $\eta(g)=P(Y=1\mid g(X)=0)$ and
$\eta_1(g)=P(Y=1\mid g(X)=1)$. The allocation of $\delta_0$ and $\delta_1$ need not be equal. When the excluded-pool upper bound is the main source of uncertainty, assigning it a larger share of the error budget can give a sharper recall certificate. The allocation should be fixed before the audit labels are examined.

The same two-pool construction has an exact finite-corpus version.

\begin{corollary}[Finite-corpus two-pool recall certificate]
\label{cor:two_pool_hypergeometric}
In the finite-corpus model, draw $n_0$ items uniformly without replacement from the excluded pool $\{i:z_i=0\}$ and, independently, $n_1$ items uniformly without replacement from the included pool $\{i:z_i=1\}$, observing $K_0$ and $K_1$ relevant items respectively. Let $M_U(\cdot,\cdot)$ be the hypergeometric upper inversion of Theorem~\ref{thm:excluded_hypergeometric} on the excluded pool, computed with population size $N_0$ and sample size $n_0$, and let
$$
M_L(k,\alpha):=\min\left\{m\in\{0,\ldots,N_1\}:\mathbb{P}\left(\mathrm{Hypergeom}(N_1,m,n_1)\ge k\right)\ge\alpha\right\}
$$
be the corresponding lower inversion on the included pool. Then $M_0(g)\le M_U(K_0,\delta_0)$ and $M_1(g)\ge M_L(K_1,\delta_1)$ hold jointly with probability at least $1-\delta_0-\delta_1$. If $M_0(g)+M_1(g)>0$, then on that event
$$
\mathrm{Recall}(g)=\frac{M_1(g)}{M_0(g)+M_1(g)}\ge\frac{M_L(K_1,\delta_1)}{M_L(K_1,\delta_1)+M_U(K_0,\delta_0)},
$$
with the displayed lower bound interpreted as $0$ when $M_L(K_1,\delta_1)=M_U(K_0,\delta_0)=0$.

\end{corollary}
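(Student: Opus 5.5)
The plan follows the proofs of Theorem~\ref{thm:excluded_hypergeometric} and Theorem~\ref{thm:two_pool}, with binomial inversion replaced by hypergeometric inversion on each pool. First I would record the sampling distributions. By Assumption~\ref{ass:(A1)}, $g$ is fixed before the labels are opened, so the counts $N_0,N_1,M_0(g),M_1(g)$ are deterministic. Uniform sampling without replacement from each pool then gives
\[
K_0\sim\mathrm{Hypergeom}(N_0,M_0(g),n_0)
\quad\text{and}\quad
K_1\sim\mathrm{Hypergeom}(N_1,M_1(g),n_1),
\]
and the two counts are independent.

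The excluded pool is handled exactly as in Theorem~\ref{thm:excluded_hypergeometric} with $|\mathcal G|=1$, which gives $\mathbb{P}\{M_0(g)>M_U(K_0,\delta_0)\}\le\delta_0$.

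The new ingredient is the lower inversion $M_L$. I would first show that $F(m,k):=\mathbb{P}(\mathrm{Hypergeom}(N_1,m,n_1)\ge k)$ is non-decreasing in $m$. A coupling does this: label a uniformly random ordering of the pool, so that raising the number of relevant items in the pool can only raise the number of relevant items in the sample. Monotonicity means the set defining $M_L(k,\alpha)$ is an up-set $\{m\ge M_L(k,\alpha)\}$; it is nonempty because $F(N_1,k)=1$ for $k\le n_1$.

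Next comes the coverage argument. Let $m^\star=M_1(g)$. The failure event $\{M_1(g)<M_L(K_1,\delta_1)\}$ means that $m^\star$ is not in the defining set, that is, $F(m^\star,K_1)<\delta_1$. Since $F(m^\star,k)$ is non-increasing in $k$, this happens exactly when $K_1\ge k^\star$, where $k^\star:=\min\{k: F(m^\star,k)<\delta_1\}$. The probability of that is $F(m^\star,k^\star)<\delta_1$. If no such $k$ exists, the failure event is empty. This is the standard tail-inversion step, and it is the only place needing care. The main obstacle is keeping the inequalities straight with discrete ties, and the "smallest $k$" formulation handles them.

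A union bound over the two failure events gives joint coverage with probability at least $1-\delta_0-\delta_1$. On that event, when $M_0(g)+M_1(g)>0$, the map $(a,b)\mapsto b/(a+b)$ is non-decreasing in $b$ and non-increasing in $a$ on $\{a,b\ge0,\ a+b>0\}$. Substituting $b=M_L(K_1,\delta_1)\le M_1(g)$ and $a=M_U(K_0,\delta_0)\ge M_0(g)$ therefore gives the stated recall bound, with two degenerate cases:
\begin{itemize}
\item If $M_L=M_U=0$, the bound is read as $0$, which is trivially valid.
\item If $M_L+M_U>0$, the substitution is legitimate; if $M_L=0$ but $M_U>0$, the bound is $0$, again trivially valid.
\end{itemize}
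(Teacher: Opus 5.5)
Your proposal is correct and follows the paper's proof. Both arguments use the hypergeometric laws for $K_0$ and $K_1$, monotonicity of the hypergeometric tail in the success count to invert each one-sided bound, a union bound, and monotonicity of $M_1/(M_0+M_1)$ in each argument; you additionally spell out the coupling behind tail monotonicity, the discrete tail-inversion step for $M_L$, and the degenerate cases, which the paper leaves implicit.
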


\begin{proof}
The two pools are disjoint and sampled separately, so $K_0\sim\mathrm{Hypergeom}(N_0,M_0(g),n_0)$ and $K_1\sim\mathrm{Hypergeom}(N_1,M_1(g),n_1)$ are independent. Monotonicity of the hypergeometric tail in the success count gives the two one-sided bounds, and a union bound gives the joint event. The ratio $M_1/(M_0+M_1)$ is increasing in $M_1$ and decreasing in $M_0$, so substituting the lower bound on $M_1$ and the upper bound on $M_0$ gives the displayed lower bound on recall.
\end{proof}

\subsection{Whole-population audit as an alternative}

For completeness we record the whole-population certificate, which arises by sampling i.i.d.\ from the full distribution rather than from $\{g(X) = 0\}$.

\begin{theorem}[Whole-population certificate]
\label{thm:whole_population}
Let $\mathcal G$ be a finite family of candidate generators, fixed independently of the audit. Let $(X_i, Y_i)_{i=1}^n$ be an i.i.d.\ audit sample from $P$, and let $K_g = \sum_{i=1}^n \mathbbm 1\{Y_i = 1, g(X_i) = 0\}$. Then, with probability at least $1 - \delta$,
$$
r(g) \le U_n\left(K_g, \frac{\delta}{|\mathcal G|}\right) \quad \text{for all } g \in \mathcal G.
$$
\end{theorem}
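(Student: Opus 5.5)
The approach mirrors the proof of Theorem~\ref{thm:excluded_binomial}, with the conditional distribution on the excluded pool replaced by the full population law $P$. The key observation is that, for a fixed $g$, the indicators $W_i(g):=\mathbbm 1\{Y_i=1,g(X_i)=0\}$ are i.i.d.\ Bernoulli variables whose success probability is exactly the missed mass. Indeed, since $(X_i,Y_i)$ is drawn from $P$ with $Y_i=\varphi^\star(X_i)$,
\[
\mathbb{P}_{\mathrm{audit}}\{W_i(g)=1\}=P(Y=1,g(X)=0)=r(g).
\]
Hence $K_g=\sum_{i=1}^n W_i(g)\sim\mathrm{Bin}(n,r(g))$. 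This distributional claim depends on Assumption~\ref{ass:(A1)}. Because $g$ and $\mathcal G$ are fixed before the audit labels are seen, $g$ is a deterministic function with respect to $\mathbb{P}_{\mathrm{audit}}$, and the i.i.d.\ structure of the sample passes directly to the indicators.

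With this in hand, I will apply the Clopper--Pearson coverage property at level $\alpha=\delta/|\mathcal G|$ with $p=r(g)$. This gives
\[
\mathbb{P}_{\mathrm{audit}}\{r(g)>U_n(K_g,\delta/|\mathcal G|)\}\le\delta/|\mathcal G|
\]
for each $g\in\mathcal G$. I then define the failure events $E_g$ exactly as in the proof of Theorem~\ref{thm:excluded_binomial}. A union bound over the finitely many $g\in\mathcal G$ shows that all events fail jointly with probability at most $\delta$, which is the simultaneous statement.

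Two points differ from the excluded-pool case. First, a single shared sample is used for all $g$, so the $K_g$ are dependent across generators. The union bound does not require independence, so this is harmless. Second, no factor $p_0(g)$ appears, because the binomial success probability is $r(g)$ itself rather than $\eta(g)$. The degenerate case $n=0$ is covered by the convention $U_0(0,\alpha)=1$, which makes the bound trivially valid.

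I do not expect a genuine obstacle. The only step that needs care is stating explicitly that the indicator's success probability equals $r(g)$, which is where the fixedness of $g$ is used.
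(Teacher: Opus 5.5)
Your proposal is correct and follows essentially the same route as the paper: for each fixed $g$, the indicators $\mathbbm 1\{Y_i=1,g(X_i)=0\}$ are i.i.d.\ Bernoulli$(r(g))$, so $K_g\sim\mathrm{Bin}(n,r(g))$. Clopper--Pearson at level $\delta/|\mathcal G|$ then gives each bound, and a union bound over $\mathcal G$ gives the simultaneous statement. Your remarks on dependence across generators and on the $n=0$ convention are correct additions that the paper leaves implicit.
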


\begin{proof}
For each fixed $g\in\mathcal G$, the indicators $\mathbbm 1\{Y_i=1,\, g(X_i)=0\}$ are i.i.d.\ Bernoulli$(r(g))$, so $K_g\sim\mathrm{Bin}(n,r(g))$. The Clopper--Pearson property gives $\mathbb{P}_{\mathrm{audit}}(r(g)>U_n(K_g,\delta/|\mathcal G|))\le\delta/|\mathcal G|$, and a union bound over $\mathcal G$ completes the proof. This is the argument of Theorem~\ref{thm:excluded_binomial} with the sampling design drawing from $P$ rather than from $P(\cdot\mid g(X)=0)$, and with $r(g)$ playing the role of $\eta(g)$.
\end{proof}

\subsection{Optimality of excluded-pool auditing}
\label{sec:lower_bound}

Proposition~\ref{obs:no_inside_only} shows that included-pool labels certify nothing about missed mass. The next result is its quantitative converse: in the finite-corpus model, any valid audit, however adaptive, must label essentially as many excluded-pool items as the designs above. We state the optimality result in the finite-corpus model because this is the setting in which label complexity has its most direct interpretation: the question is how many items from a fixed excluded pool must be labelled to rule out a specified number of missed relevant items. Here an adaptive procedure may choose which item to label next, and when to stop, as a function of the labels already observed. The lower bound allows this flexibility.

The theorem imposes two requirements on the audit procedure. The first is validity: when there are $m$ or more missed relevant items, the procedure should rarely certify fewer than $m$. The second rules out vacuous procedures: when there are no missed relevant items, the procedure should usually be able to certify fewer than $m$.

\begin{theorem}[Lower bound on audit label complexity]
\label{thm:label_lower_bound}
Fix a corpus and a candidate generator $g$ with excluded-pool size $N_0=N_0(g)\ge 1$. Fix an integer $m\in\{1,\ldots,N_0\}$, and let $\delta,\beta\in(0,1)$ satisfy $\beta+\delta<1$. The audit task is to rule out the possibility that the excluded pool contains $m$ or more relevant items, that is, to certify $M_0(g)\le m-1$.

Consider an audit procedure that labels corpus items one at a time,
possibly adaptively and with external randomisation. The audit procedure
is fixed independently of the labelling. At each stage, the next
inspected item, the stopping decision, and the reported bound are
determined by the same pre-specified rule applied to the random seed and
the observed index-label history.

For any fixed labelling y and any realised value of R, the pre-specified audit rule determines the complete audit trajectory recursively from the random seed and the labels revealed along that trajectory. Formally, let $(\Omega_{\mathrm A},\mathcal A_{\mathrm A},\mathbb P_{\mathrm A})$ carry the internal randomness of the procedure and its audit trajectory, and write $\mathbb E_{\mathrm A}$ for expectation under $\mathbb P_{\mathrm A}$. Let $R$ denote the initial random seed, let
$I_t$ be the index inspected at step $t$, and define 
\[
\mathcal F_0:=\sigma(R),
\qquad
\mathcal F_t
:=
\sigma\left(
R,I_1,y_{I_1},\ldots,I_t,y_{I_t}
\right),
\qquad t\geq 1.
\]
The procedure is non-anticipating: on the event that it has not yet
stopped, the next inspected index $I_{t+1}$ is
$\mathcal F_t$-measurable. Its termination time $\tau$ is an almost
surely finite stopping time with respect to
$(\mathcal F_t)_{t\geq 0}$, and the reported upper confidence bound
$\widehat M_U$ is $\mathcal F_\tau$-measurable.

The procedure may inspect items from either the included or excluded
pool. Let
\[
T
:=
\left|
\left\{
I_t:1\leq t\leq\tau,\ g(x_{I_t})=0
\right\}
\right|
\]
be the number of distinct excluded-pool items inspected.

Assume that the procedure is valid at level $\delta$, in the sense that, for every labelling with $M_0(g)\ge m$, it returns an upper bound $\widehat M_U\le m-1$ with probability at most $\delta$. Assume also that the procedure succeeds with high probability when there are no missed relevant items: for every labelling with $M_0(g)=0$,
$$
\mathbb P_{\mathrm A}\{\widehat M_U\le m-1\}\ge 1-\beta.
$$
Then, under any labelling with $M_0(g)=0$,
$$
\mathbb E_{\mathrm A}T\ \ge (1-\beta-\delta)\frac{N_0}{m}.
$$
In particular, if $\beta\le 1/2$ and $\delta<1/4$, then
$$
\mathbb E_{\mathrm A}T\ge \frac{N_0}{4m}.
$$
\end{theorem}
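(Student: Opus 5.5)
Fix a labelling $y$ with $M_0(g)=0$, and compare it with a family of alternative labellings that plant $m$ relevant items in the excluded pool. The argument is a coupling, or planted-set, averaging argument. Partition the excluded pool (or choose subsets) as follows. Let $\mathcal S$ be the collection of all $m$-subsets $S$ of the excluded pool $\{i:z_i=0\}$. For each $S$, let $y^S$ agree with $y$ everywhere except that $y^S_i=1$ for $i\in S$. Then $M_0(g)=m$ under $y^S$, so validity gives $\mathbb P_{\mathrm A}^{S}\{\widehat M_U\le m-1\}\le\delta$. Under $y$, success gives $\mathbb P_{\mathrm A}\{\widehat M_U\le m-1\}\ge 1-\beta$.

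The key step is to couple the two runs on the same seed $R$. Because the procedure is non-anticipating and its trajectory is determined recursively by $R$ and the revealed labels, the runs under $y$ and $y^S$ coincide up to the first time the procedure inspects an index in $S$. Let $D_S$ be the event, defined on the run under $y$, that the trajectory under $y$ inspects no element of $S$ before $\tau$. On $D_S$ both trajectories are identical, including $\tau$ and $\widehat M_U$, by induction on $t$ using $\mathcal F_t$-measurability of $I_{t+1}$ and of the stopping decision. Hence
\[
1-\beta-\delta\le \mathbb P_{\mathrm A}\{\widehat M_U\le m-1\}-\mathbb P_{\mathrm A}^{S}\{\widehat M_U\le m-1\}\le \mathbb P_{\mathrm A}(D_S^c),
\]
where $D_S^c$ is the event that the $y$-run touches $S$. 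Writing this carefully on the common probability space is the main technical obstacle: one must check that the stopped trajectory and the reported bound are genuinely equal on $D_S$, including the almost-sure finiteness of $\tau$.

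Finally, average over a suitable family of $S$. The simplest choice is to split the excluded pool into $\lfloor N_0/m\rfloor$ disjoint blocks of size $m$, plus a remainder. Alternatively, average uniformly over all $m$-subsets: if the run under $y$ inspects a set $E$ of $T$ distinct excluded items, then $\Pr_S(S\cap E\neq\emptyset)\le Tm/N_0$ by a union bound over the elements of $E$, since each fixed element lies in a uniform $m$-subset with probability $m/N_0$. Taking expectations and using Fubini gives
\[
\frac{1}{|\mathcal S|}\sum_{S}\mathbb P_{\mathrm A}(D_S^c)\le \frac{m}{N_0}\,\mathbb E_{\mathrm A}T.
\]
Combining this with the bound for every $S$ yields $\mathbb E_{\mathrm A}T\ge(1-\beta-\delta)N_0/m$. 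Included-pool inspections are irrelevant because $S$ lies in the excluded pool. The particular case then follows, since $\beta\le1/2$ and $\delta<1/4$ give $1-\beta-\delta>1/4$.
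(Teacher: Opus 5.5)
Your proposal is correct and follows essentially the paper's proof: you couple, on the common seed $R$, the zero-miss labelling with a planted labelling $y^S$ for a uniformly random $m$-subset $S$ of the excluded pool, and then apply the union bound $\mathbb P_S\{S\cap A(R)\neq\varnothing\}\le mT/N_0$. The only differences are cosmetic: you derive $\mathbb P_{\mathrm A}(D_S^c)\ge 1-\beta-\delta$ for each fixed $S$ and then average, whereas the paper averages the validity bound over $S$ first and uses $\mathbbm 1\{\widehat M_U(R;y^S)\le m-1\}\ge \mathbbm 1\{\mathrm{pass}_0(R)\}\mathbbm 1\{S\cap A(R)=\varnothing\}$; and your ``simplest'' disjoint-block alternative would only give $\lfloor N_0/m\rfloor$ in place of $N_0/m$, so the uniform average over all $m$-subsets that you actually use is what yields the stated constant.
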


\begin{proof}
In this proof, a labelling $y$ means an assignment of relevance labels to the fixed finite corpus. We compare two possible labellings: a zero-miss labelling $y^0$, under which no excluded-pool item is relevant, and a planted labelling $y^S$, under which a chosen set $S$ of $m$ excluded-pool items is relevant.

Let $y^0$ be a labelling such that $y^0(x)=0$ whenever $g(x)=0$. Thus $M_0(g)=0$ under $y^0$. Let $R$ denote the initial random seed of the audit procedure. For any
fixed labelling $y$ and any realised value of $R$, the non-anticipation
condition determines the complete audit trajectory recursively from
the labels revealed along that trajectory. Under a fixed labelling, $\mathbb P_{\mathrm A}$ and
$\mathbb E_{\mathrm A}$ therefore average over the random seed $R$.
Joint subscripts indicate any additional sources of randomness over
which probability or expectation is taken. For any labelling $y$, write $\widehat M_U(R;y)$ for the upper bound returned by the procedure when it is run with randomness $R$ and true labelling $y$. When the procedure is run under $y^0$ with randomness $R$, let $A(R)$ be the set of distinct excluded-pool items inspected. Thus $T=|A(R)|$. Write
$$
\mathrm{pass}_0(R)=\{\widehat M_U(R;y^0)\le m-1\}.
$$

Choose a uniformly random $m$-subset $S$ of the $N_0$ excluded-pool items, independently of $R$. For each fixed $S$, define $y^S$ to agree with $y^0$ outside $S$ and to label every item in $S$ as relevant. Then $M_0(g)=m$ under $y^S$. Hence the validity assumption gives, for every fixed $S$,
$$
\mathbb P_{\mathrm A}\{\widehat M_U(R;y^S)\le m-1\}\le\delta.
$$
Averaging this inequality over the random choice of $S$ gives
$$
\mathbb P_{\mathrm A,S}\{\widehat M_U(R;y^S)\le m-1\}\le\delta.
$$

Now compare the two runs with the same value of $R$. If $\mathrm{pass}_0(R)$ occurs and
$S\cap A(R)=\varnothing$, then an induction over the audit steps shows
that the runs under $y^0$ and $y^S$, coupled using the same random seed
$R$, have identical trajectories. At each step, the two runs have observed the same history, so the same
pre-specified non-anticipating audit rule makes them choose the same
next item. Every inspected excluded-pool item lies outside $S$, and therefore has
the same label under $y^0$ and $y^S$. Consequently, the two runs stop
at the same time and return the same value. Hence
$$
\widehat M_U(R;y^S)=\widehat M_U(R;y^0)\le m-1.
$$
Consequently, for every pair $(R,S)$,
$$
\mathbbm 1\{\widehat M_U(R;y^S)\le m-1\}
\ge
\mathbbm 1\{\mathrm{pass}_0(R)\}\mathbbm 1\{S\cap A(R)=\varnothing\}.
$$
Taking expectation over $R$ and $S$, and using the preceding averaged validity bound, gives
$$
\delta\ge
\mathbb E_{\mathrm A,S}
\left[\mathbbm 1\{\mathrm{pass}_0(R)\}\mathbbm 1\{S\cap A(R)=\varnothing\}\right].
$$
Averaging first over $S$, conditional on $R$, gives
$$
\begin{aligned}
&\mathbb E_{\mathrm A,S}\left[
\mathbbm 1\{\mathrm{pass}_0(R)\}\mathbbm 1\{S\cap A(R)=\varnothing\}
\right] \\
&\qquad =
\mathbb E_{\mathrm A}\left[
\mathbbm 1\{\mathrm{pass}_0(R)\}
\mathbb P_S\{S\cap A(R)=\varnothing\mid A(R)\}
\right].
\end{aligned}
$$

For a realised inspected set $A(R)$, each excluded-pool item belongs to $S$ with probability $m/N_0$. A union bound gives
$$
\mathbb P_S\{S\cap A(R)\ne\varnothing\mid A(R)\}\le \frac{m|A(R)|}{N_0}=\frac{mT}{N_0}.
$$
Therefore
$$
\mathbb P_S\{S\cap A(R)=\varnothing\mid A(R)\}\ge 1-\frac{mT}{N_0}.
$$
For each realised run under $y^0$,
$$
\mathbbm 1\{\mathrm{pass}_0(R)\}\mathbb P_S\{S\cap A(R)=\varnothing\mid A(R)\}
\ge
\mathbbm 1\{\mathrm{pass}_0(R)\}\left(1-\frac{mT}{N_0}\right)
\ge
\mathbbm 1\{\mathrm{pass}_0(R)\}-\frac{mT}{N_0}.
$$
Taking expectations and using the assumption that the procedure certifies correctly under every zero-miss labelling gives
$$
\delta\ge
\mathbb P_{\mathrm A}\{\mathrm{pass}_0(R)\}-\frac{m}{N_0}\mathbb E_{\mathrm A} T
\ge
(1-\beta)-\frac{m}{N_0}\mathbb E_{\mathrm A} T.
$$
Rearranging gives the claim.
\end{proof}

\begin{remark}[Comparison with the simple zero-count audit]
\label{rem:zero_count_comparison}
The following comparison shows that the lower bound has the right dependence on $N_0$ and $m$ in the favourable case $M_0(g)=0$, where there are no missed relevant items in the excluded pool. It is not intended as an exact optimality statement for all possible audit outcomes.

Consider the standard non-adaptive audit which samples $n_0$ items uniformly without replacement from the excluded pool. If the excluded pool contains at least $m$ relevant items, then the probability that the audit sees no relevant item is at most
$$
\frac{\binom{N_0-m}{n_0}}{\binom{N_0}{n_0}}
\le
\left(1-\frac{n_0}{N_0}\right)^m .
$$
Thus, if
$$
n_0\ge N_0(1-\delta^{1/m}),
$$
then a zero-count outcome certifies $M_0(g)\le m-1$ with error probability at most $\delta$.

This sufficient audit size has the same dependence on $N_0$ and $m$ as the lower bound. Indeed,
$$
N_0(1-\delta^{1/m})
\le
\frac{N_0}{m}\log(1/\delta).
$$
Theorem~\ref{thm:label_lower_bound} shows that any valid procedure which usually certifies $M_0(g)\le m-1$ when $M_0(g)=0$ must inspect at least
$$
(1-\beta-\delta)\frac{N_0}{m}
$$
excluded-pool items in expectation under the zero-miss labelling. Hence the simple zero-count excluded-pool audit matches the lower-bound scale up to the constant factor
$$
\frac{\log(1/\delta)}{1-\beta-\delta},
$$
which does not depend on $N_0$ or $m$.
\end{remark}

\begin{corollary}[Minimax label complexity of missed-mass certification]
\label{cor:minimax}
Fix the excluded-pool size $N_0$ and an integer $m\in\{1,\ldots,N_0\}$, and fix error parameters $\beta\le 1/2$, $\delta<1/4$. Consider the zero-miss regime $M_0(g)=0$. Among all audit procedures that are valid at level $\delta$ and certify $M_0(g)\le m-1$ with probability at least $1-\beta$ whenever $M_0(g)=0$, the minimum expected number of excluded-pool labels inspected under the zero-miss labelling is
$$
\Theta\left(\frac{N_0}{m}\right),
$$
where the implied constants depend only on $(\beta,\delta)$ and not on $N_0$ or $m$. The lower bound is Theorem~\ref{thm:label_lower_bound}, whereas the upper bound is attained by the non-adaptive zero-count audit of the preceding remark, which inspects at most $(N_0/m)\log(1/\delta)+1$ labels.
\end{corollary}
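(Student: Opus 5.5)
The corollary asserts a two-sided bound, so I would prove a lower bound and a matching upper bound, each valid for all $N_0$ and $m$ with constants depending only on $(\beta,\delta)$.

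\textbf{Lower bound.} This is Theorem~\ref{thm:label_lower_bound} applied directly. Under any zero-miss labelling, every procedure in the admissible class has $\mathbb E_{\mathrm A}T\ge(1-\beta-\delta)N_0/m$. With $\beta\le 1/2$ and $\delta<1/4$ we have $1-\beta-\delta>1/4$, so $\mathbb E_{\mathrm A}T\ge N_0/(4m)$ for every valid procedure. The infimum of the expected excluded-pool label count over the class is therefore at least $c_1N_0/m$ with $c_1=1-\beta-\delta$.

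\textbf{Upper bound.} I would exhibit one admissible procedure: the non-adaptive zero-count audit of Remark~\ref{rem:zero_count_comparison}, fixed before any labels are seen in accordance with Assumption~\ref{ass:(A1)}.
\begin{enumerate}[leftmargin=2em]
\item Set $n_0:=\min\{N_0,\lceil N_0(1-\delta^{1/m})\rceil\}$ and sample $n_0$ excluded items uniformly without replacement.
\item Report $\widehat M_U=m-1$ if $K_0=0$, and the trivial bound $N_0$ otherwise.
\end{enumerate}

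\emph{Validity.} Suppose $M_0\ge m$. The procedure certifies only if $K_0=0$, which has probability
\[
\binom{N_0-M_0}{n_0}\Big/\binom{N_0}{n_0}\le(1-n_0/N_0)^{m}\le\delta,
\]
using the bound in the remark. If $n_0=N_0$, this probability is $0$.

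\emph{Success.} If $M_0=0$, then $K_0=0$ surely, so the procedure certifies with probability $1\ge1-\beta$.

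\emph{Cost.} Using $1-e^{-u}\le u$ with $u=\log(1/\delta)/m$,
\[
T=n_0\le N_0(1-\delta^{1/m})+1\le (N_0/m)\log(1/\delta)+1.
\]
Since $m\le N_0$, we have $N_0/m\ge1$, so $T\le(\log(1/\delta)+1)N_0/m$. This gives $c_2=\log(1/\delta)+1$, again independent of $N_0$ and $m$.

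\textbf{Conclusion and main subtlety.} Combining the two parts, the minimum lies in $[c_1N_0/m,\;c_2N_0/m]$, which is $\Theta(N_0/m)$. The main point to get right is absorbing the additive $+1$ into the multiplicative constant, which uses $m\le N_0$ as above. One should also confirm that the ceiling and the cap at $N_0$ do not break validity; they do not, because increasing $n_0$ only lowers the miss probability and $n_0=N_0$ gives an exact count.
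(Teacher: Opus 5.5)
Your proposal is correct and follows the paper's proof: the lower bound comes directly from Theorem~\ref{thm:label_lower_bound} using $1-\beta-\delta>1/4$, and the upper bound comes from the zero-count audit with $n_0=\lceil N_0(1-\delta^{1/m})\rceil$, where the additive $+1$ is absorbed using $m\le N_0$. Your cap at $N_0$ is harmless but redundant, because $\delta^{1/m}>0$ already forces $\lceil N_0(1-\delta^{1/m})\rceil\le N_0$.
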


\begin{proof}
Let $\mathfrak A_{\beta,\delta}$ denote the class of audit procedures satisfying the validity and zero-miss success requirements in the statement. By Theorem~\ref{thm:label_lower_bound}, every $\mathcal A\in\mathfrak A_{\beta,\delta}$ satisfies
\[
\mathbb E_{\mathrm A}T\ge(1-\beta-\delta)\frac{N_0}{m}.
\]
Under the assumed restrictions $\beta\le 1/2$ and $\delta<1/4$, this gives
\[
\inf_{\mathcal A\in\mathfrak A_{\beta,\delta}}\mathbb E_{\mathrm A}T\ge\frac{N_0}{4m}.
\]

For the matching upper bound, consider the non-adaptive audit of Remark~\ref{rem:zero_count_comparison}, which samples
\[
n_0=\left\lceil N_0\left(1-\delta^{1/m}\right)\right\rceil
\]
items uniformly without replacement from the excluded pool and
certifies $M_0(g)\le m-1$ if no relevant item is observed. If
$M_0(g)\ge m$, the probability of observing no relevant item is at
most $\delta$, so the procedure is valid at level $\delta$. If
$M_0(g)=0$, it certifies with probability one, and hence with
probability at least $1-\beta$.

Moreover, using $1-e^{-u}\le u$ with $u=\log(1/\delta)/m$,
\[
n_0\le\frac{N_0}{m}\log\frac{1}{\delta}+1.
\]
Since $m\le N_0$, the additive term is itself at most $N_0/m$, and therefore
\[
\inf_{\mathcal A\in\mathfrak A_{\beta,\delta}}
\mathbb E_{\mathrm A}T
\le
\left(1+\log\frac{1}{\delta}\right)\frac{N_0}{m}.
\]
Combining the lower and upper bounds proves
\[
\inf_{\mathcal A\in\mathfrak A_{\beta,\delta}}
\mathbb E_{\mathrm A}T
=
\Theta\left(\frac{N_0}{m}\right),
\]
where the implied constants depend only on $(\beta,\delta)$.
\end{proof}

\subsection{Sample-size planning}
\label{sec:samplesize}

The zero-observed-miss case of Theorem~\ref{thm:excluded_binomial} translates targets directly into audit sizes. In the planning calculations below, $\delta$ denotes the total audit error budget. The zero-count identity is the exact form of the rule of three, namely $n\approx 3/\varepsilon$ at $\delta=0.05$~\cite{HanleyLippmanHand1983,TuylGerlachMengersen2009}. To certify $\eta(g) \le \varepsilon$ with confidence $1 - \delta$ from zero observed misses on a single candidate generator,
$$
n_0 \ge \frac{\log(1/\delta)}{\log(1/(1 - \varepsilon))}.
$$
For $\delta = 0.05$, the implied audit sizes are shown below. In the zero-count case, the required audit size is obtained from $(1-\varepsilon)^n\le\delta$, so $n$ is proportional to $\log(1/\delta)$ for fixed $\varepsilon$. To certify $M$ pre-specified prefixes simultaneously, we allocate error level $\delta/M$ to each prefix. This replaces $\log(1/\delta)$ by $\log(M/\delta)=\log(1/\delta)+\log M$, so the required number of labels
per prefix increases only logarithmically in $M$. With separate excluded-pool audits, the total label cost is $M$ times the displayed per-prefix figure. By contrast, the shared-reference design of Section~\ref{sec:prefix} reuses one labelled reference sample across all prefixes. The third column is for an audit plan that certifies ten pre-specified candidate generators or prefixes using a union bound, so each individual bound receives error budget $\delta/10=0.005$.

\begin{center}
\renewcommand{\arraystretch}{1.2}
\begin{tabular}{rcc}
\hline
Target $\varepsilon$ on $\eta(g)$
&
Single candidate generator
&
\shortstack{10 pre-specified prefixes\\(labels per prefix)}
\\
\hline
$0.10$ & $29$ & $51$ \\
$0.05$ & $59$ & $104$ \\
$0.02$ & $149$ & $263$ \\
$0.01$ & $299$ & $528$ \\
$0.005$ & $598$ & $1{,}058$ \\
\hline
\end{tabular}
\end{center}

To plan for an absolute missed-mass target
$r(g)\le\varepsilon_r$ when $0<\varepsilon_r<p_0(g)$, use the identity $r(g)=p_0(g)\eta(g)$ and set the excluded-pool target to $\eta(g)\le\frac{\varepsilon_r}{p_0(g)}$. If $\varepsilon_r\ge p_0(g)$, the target already holds without an audit, since $r(g)\le p_0(g)$.
For comparison, if the target is $r(g)\le \varepsilon_r$, a zero-event whole-population audit requires
$$
n \ge \frac{\log(1/\delta)}{\log(1/(1-\varepsilon_r))},
$$
whereas a zero-event excluded-pool audit requires
$$
n_0 \ge \frac{\log(1/\delta)}{\log(1/(1-\varepsilon_r/p_0(g)))}.
$$
For certifying missed relevant mass for a fixed candidate generator, whole-population auditing is mainly a baseline. It is less label-efficient than excluded-pool auditing because many sampled items fall outside the only region where missed relevant items can occur. Whole-population auditing is useful for different targets, such as estimating overall prevalence or supplying a denominator for recall.

\subsection{Certifying and choosing among pre-specified prefixes}
\label{sec:prefix}

We now address a common operational choice. A candidate generator is built by adding component detectors $h_1,\ldots,h_J$ in a fixed order, and the analyst wants to choose how far down the list to go. Each prefix length $t\in\{1,\ldots,J\}$ defines a prefix generator
$$
g_t(x):=\max_{j\le t}h_j(x),
$$
with larger $t$ usually trading higher review burden against smaller missed relevant mass. The audit goal is to certify a pre-specified collection of these prefixes simultaneously, then select the least burdensome one whose certified missed mass is below the target.

The collection certified need not include every prefix. Simultaneous certification pays a union-bound penalty in the number $M$ of prefixes certified, for example through a per-prefix allocation $\delta/M$ in Corollary~\ref{cor:prefix_stopping} and Theorem~\ref{thm:shared_reference_prefix}. Hence the analyst may choose $M\ll J$ cut points
$$
1\le t_1<t_2<\cdots<t_M\le J
$$
in advance, and certify the corresponding sub-sequence
$$
g_m(x):=g_{t_m}(x)=\max_{j\le t_m}h_j(x),\qquad m=1,\ldots,M.
$$
The case $M=J$, $t_m=m$, recovers certification of every prefix. The prefix-specific excluded pool is
$$
E_m:=\{x:g_m(x)=0\}.
$$
By construction the prefix unions are nested:
$$
\widehat S_1\subseteq\widehat S_2\subseteq\cdots\subseteq\widehat S_M,
\qquad
E_1\supseteq E_2\supseteq\cdots\supseteq E_M.
$$
Adding component detectors can only enlarge the candidate set, and hence can only shrink the excluded pool. Two sampling designs are useful.

\begin{corollary}[Separate excluded-pool prefix certificate]
\label{cor:prefix_stopping}
Suppose $h_1,\ldots,h_J$ and $t_1,\ldots,t_M$ are fixed independently of the audit. For each $m$, draw $n_m$ samples i.i.d.\ from $P(\cdot \mid g_m(X)=0)$, observe their labels, and let $K_m$ be the number of relevant items observed in the $m$th excluded-pool audit. If $\delta_1,\ldots,\delta_M>0$ satisfy $\sum_m \delta_m \le \delta$, then, with probability at least $1-\delta$,
$$
r(g_m) \le p_0(g_m) \cdot U_{n_m}(K_m, \delta_m), \qquad m=1,\ldots,M.
$$
In particular, taking $n_m=n$ for all $m$ and $\delta_m=\delta/M$ gives simultaneous prefix certificates with audit size $n$ per prefix. Sample sizes achieving a target $\varepsilon_r$ per prefix follow from Section~\ref{sec:samplesize}, with the per-prefix error level set to $\delta/M$.
\end{corollary}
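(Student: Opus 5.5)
The plan is to reduce Corollary~\ref{cor:prefix_stopping} to the per-generator argument in the proof of Theorem~\ref{thm:excluded_binomial}, applied to the prefix family $\{g_1,\ldots,g_M\}$ with generator-specific sample sizes $n_m$ and error levels $\delta_m$. That proof already notes that it permits such allocations.

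\textbf{Step 1: fix the prefix family.} Because $h_1,\ldots,h_J$ and the cut points $t_1<\cdots<t_M$ are fixed independently of the audit, each $g_m=\max_{j\le t_m}h_j$ is a fixed measurable function under $\mathbb{P}_{\mathrm{audit}}$, as Assumption~\ref{ass:(A1)} requires. If $p_0(g_m)=0$, the conventions give $r(g_m)=0$, and the bound holds trivially. So I assume $p_0(g_m)>0$, which makes the conditional law $P(\cdot\mid g_m(X)=0)$ well defined.

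\textbf{Step 2: one prefix at a time.} For each $m$, the $n_m$ audited items are i.i.d.\ draws from $P(\cdot\mid g_m(X)=0)$. Hence their labels are i.i.d.\ Bernoulli$(\eta(g_m))$, and
\[
K_m\sim\mathrm{Bin}\bigl(n_m,\eta(g_m)\bigr).
\]
By the Clopper--Pearson property, the failure event $E_m:=\{\eta(g_m)>U_{n_m}(K_m,\delta_m)\}$ satisfies $\mathbb{P}_{\mathrm{audit}}(E_m)\le\delta_m$. If $n_m=0$, the convention $U_0(0,\cdot)=1$ makes $E_m$ empty.

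\textbf{Step 3: combine and specialise.} A union bound gives
\[
\mathbb{P}_{\mathrm{audit}}\Bigl(\bigcup_m E_m\Bigr)\le\sum_m\delta_m\le\delta.
\]
No independence across the $M$ audits is needed, since the union bound holds under any dependence. On the complement of $\bigcup_m E_m$, multiplying by $p_0(g_m)$ and using the exact identity $r(g_m)=p_0(g_m)\eta(g_m)$ yields all $M$ missed-mass bounds simultaneously. Taking $n_m=n$ and $\delta_m=\delta/M$ gives the stated special case. The sample-size remark then follows by applying the zero-count computation of Section~\ref{sec:samplesize} with $\delta$ replaced by $\delta/M$.

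\textbf{Main obstacle.} The argument has no analytic difficulty. The only point that needs care is that the nesting $E_1\supseteq\cdots\supseteq E_M$ plays no role, and that the result does not require the per-prefix samples to be disjoint or independent. Validity rests only on each $g_m$ being fixed before any labels are seen. Any later selection among the prefixes then happens on the simultaneous event, exactly as in the discussion following~\eqref{eq:bicriteria}.
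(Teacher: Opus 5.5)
Your proposal is correct and follows the paper's own proof, which simply applies Theorem~\ref{thm:excluded_binomial} to the family $\{g_1,\ldots,g_M\}$ with generator-specific sample sizes and error allocations, then takes a union bound over $m$. Your additional observations are accurate elaborations of that same argument rather than a different route: no independence across the $M$ audits is needed, the nesting plays no role, and the $p_0(g_m)=0$ and $n_m=0$ cases are handled by the stated conventions.
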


\begin{proof}
Apply Theorem~\ref{thm:excluded_binomial} to the finite family $\{g_1,\ldots,g_M\}$, allowing unequal sample sizes and confidence allocations. A union bound over $m$ gives the result.
\end{proof}

The labelling cost depends on the sampling design. Corollary~\ref{cor:prefix_stopping} uses separate excluded-pool audits, so it is cleanest theoretically but may require repeated sampling from the shrinking sequence of excluded pools $E_1\supseteq\cdots\supseteq E_M$. A single held-out certification sample requires a different argument.

\begin{theorem}[Shared-reference prefix certificate]
\label{thm:shared_reference_prefix}
Suppose $h_1,\ldots,h_J$ and $t_1,\ldots,t_M$ are fixed independently of the audit, and let $E_m=\{x:g_m(x)=0\}$. Let $\mathcal R$ be a fixed measurable reference pool with $P(\mathcal R)>0$ and $E_m\subseteq\mathcal R$ for every $m$. Examples are $\mathcal R=\mathcal X$ and $\mathcal R=E_1$, the latter exploiting that $E_m\subseteq E_1$ for every $m$. Draw $Z_1,\ldots,Z_Q$ i.i.d.\ from $P(\cdot\mid\mathcal R)$ and label all sampled items. Define
$$
n_m^{\mathrm{eff}} := \sum_{i=1}^Q \mathbbm 1\{Z_i\in E_m\}, \qquad K_m := \sum_{i=1}^Q \mathbbm 1\{Z_i\in E_m,\, \varphi^\star(Z_i)=1\}.
$$
With the convention $U_0(0,\alpha)=1$, with probability at least $1-\delta$,
$$
r(g_m) \le p_0(g_m) \cdot U_{n_m^{\mathrm{eff}}}\left(K_m,\frac{\delta}{M}\right), \qquad m=1,\ldots,M.
$$
Any stopping rule that selects a prefix using only these simultaneous certificates has confidence at least $1-\delta$.
\end{theorem}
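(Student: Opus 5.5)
The plan is to reduce each prefix to Lemma~\ref{lem:conditioning} and then take a union bound over the $M$ prefixes, as in Theorem~\ref{thm:excluded_binomial}. Fix $m$. Let $Q_m:=n_m^{\mathrm{eff}}$ be the random number of reference draws that fall in $E_m$. I will show that, conditional on $\{Q_m=q\}$ with positive probability, $K_m\sim\mathrm{Bin}(q,\eta(g_m))$, where $\eta(g_m)=P(Y=1\mid g_m(X)=0)$.

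To verify this, note that $Z_1,\ldots,Z_Q$ are i.i.d.\ from $P(\cdot\mid\mathcal R)$ and $E_m\subseteq\mathcal R$. For each $i$, the pair $(\mathbbm 1\{Z_i\in E_m\},\ \mathbbm 1\{Z_i\in E_m,\varphi^\star(Z_i)=1\})$ is i.i.d.\ across $i$. Moreover,
\[
\mathbb P\{\varphi^\star(Z_i)=1\mid Z_i\in E_m\}=\frac{P(E_m\cap S^\star)}{P(E_m)}=\eta(g_m),
\]
because $P(\cdot\mid\mathcal R)$ restricted to $E_m\subseteq\mathcal R$ renormalises to $P(\cdot\mid E_m)$. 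Now condition on the set $I\subseteq\{1,\ldots,Q\}$ of indices landing in $E_m$. By independence across $i$, the labels $\varphi^\star(Z_i)$ for $i\in I$ are conditionally i.i.d.\ Bernoulli$(\eta(g_m))$. Hence $K_m\mid I\sim\mathrm{Bin}(|I|,\eta(g_m))$, and averaging over $I$ with $|I|=q$ gives the claimed conditional law given $Q_m=q$.

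Two degenerate cases need separate treatment. If $P(E_m)=0$, then $Q_m=0$ almost surely and $r(g_m)=0$, so the bound is trivial by the stated conventions. The event $Q_m=0$ is handled by $U_0(0,\alpha)=1$, exactly as in the lemma.

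Lemma~\ref{lem:conditioning} with $\alpha=\delta/M$ then gives
\[
\mathbb P_{\mathrm{audit}}\bigl\{\eta(g_m)>U_{n_m^{\mathrm{eff}}}(K_m,\delta/M)\bigr\}\le\delta/M .
\]
A union bound over $m=1,\ldots,M$ shows that, with probability at least $1-\delta$, all $M$ inequalities $\eta(g_m)\le U_{n_m^{\mathrm{eff}}}(K_m,\delta/M)$ hold simultaneously. Multiplying each by $p_0(g_m)$ and using $r(g_m)=p_0(g_m)\eta(g_m)$ yields the displayed bounds.

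The union bound is needed because the $K_m$ are dependent across $m$: they share draws, and the pools are nested. The union bound requires no independence, so this costs nothing beyond the $\delta/M$ allocation. Assumption~\ref{ass:(A1)} guarantees that the $h_j$, the $t_m$ and $\mathcal R$ are fixed before the labels are observed, so the $E_m$ are deterministic sets under $\mathbb P_{\mathrm{audit}}$.

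For the final claim, any stopping rule that selects $\widehat m$ using only the certificates returns an index in $\{1,\ldots,M\}$. On the simultaneous event every prefix satisfies its bound, in particular $\widehat m$ does. Hence the selected certificate holds with probability at least $1-\delta$, regardless of how the selection depends on the data.

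The main obstacle is the conditional-binomial step, since the effective sample size is random and the membership indicators are dependent on the same draws. I will handle it carefully by conditioning on the full index set $I$ rather than only on its size, as above.
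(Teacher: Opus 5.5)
Your proposal is correct and follows the paper's proof essentially step for step. You show $K_m\mid\{n_m^{\mathrm{eff}}=q\}\sim\mathrm{Bin}(q,\eta(g_m))$, invoke Lemma~\ref{lem:conditioning} at level $\delta/M$, multiply by $p_0(g_m)$, take a union bound over the $M$ prefixes, and read the stopping-rule claim off the simultaneous event; conditioning on the full index set $I$ rather than only on its size simply spells out the step the paper states directly, namely that the draws landing in $E_m$ are conditionally i.i.d.\ from $P(\cdot\mid E_m)$.
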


The certificate is computed from the realised effective sample sizes $n_m^{\mathrm{eff}}$, but its validity is unconditional. The proof uses only a conditioning argument: among the reference samples that fall in a given excluded pool, the labels behave as a binomial sample from that excluded pool.

\begin{proof}
Fix $m$. If $P(E_m)=0$, then $p_0(g_m)=0$, and the displayed bound is trivial. Assume $P(E_m)>0$.

The reference sample is drawn from $P(\cdot\mid \mathcal R)$, and $E_m\subseteq\mathcal R$. Conditional on the event $Z_i\in E_m$, the distribution of $Z_i$ is therefore $P(\cdot\mid E_m)$, which is the same as $P(\cdot\mid g_m(X)=0)$. Hence a sampled point that falls in $E_m$ is relevant with probability
$$
\eta(g_m)=P(Y=1\mid g_m(X)=0).
$$

Now condition on the realised effective sample size $n_m^{\mathrm{eff}}=q$. Given this event, the $q$ sampled points that fall in $E_m$ are distributed as $q$ independent draws from $P(\cdot\mid E_m)$. Therefore
$$
K_m\mid \{n_m^{\mathrm{eff}}=q\}\sim
\mathrm{Bin}(q,\eta(g_m)).
$$
By Lemma~\ref{lem:conditioning},
$$
\mathbb{P}_{\mathrm{audit}}\left\{
\eta(g_m)>
U_{n_m^{\mathrm{eff}}}\left(K_m,\frac{\delta}{M}\right)
\right\}
\le \frac{\delta}{M}.
$$
Multiplying by $p_0(g_m)$ gives
$$
r(g_m)=p_0(g_m)\eta(g_m)
\le
p_0(g_m)
U_{n_m^{\mathrm{eff}}}\left(K_m,\frac{\delta}{M}\right)
$$
with probability at least $1-\delta/M$.

A union bound over $m=1,\ldots,M$ gives the simultaneous statement with probability at least $1-\delta$. On that event, every displayed bound holds at once, so any stopping rule that selects a prefix using only these certificates selects a prefix whose corresponding bound is valid.
\end{proof}

The shared-reference design lets the analyst open one held-out certification sample and certify all pre-specified prefixes. Its cost is that the effective sample size is random. Since
$$
E_M\subseteq\cdots\subseteq E_1,
$$
we have
$$
n_M^{\mathrm{eff}}\le\cdots\le n_1^{\mathrm{eff}}.
$$
Thus later prefixes, which usually have smaller excluded pools, may have the least precise certificates. The factor $p_0(g_m)$ in the displayed bound is known exactly in the finite-corpus model, as $N_0(g_m)/N$. In the population model, it can be replaced by the Hoeffding upper bound following Theorem~\ref{thm:excluded_binomial}, computed from an independent unlabelled sample, with the corresponding confidence adjustment.

In planning such an audit, the analyst should budget the total reference sample $Q$ so that the smallest effective sample size is large enough. Under the prefix nesting above, this means controlling $n_M^{\mathrm{eff}}$. The finite-corpus version follows by the same conditioning argument.

\begin{corollary}[Finite-corpus shared-reference prefix certificate]
\label{cor:shared_reference_fc}
In the finite-corpus model, suppose $h_1,\ldots,h_J$ and $t_1,\ldots,t_M$ are fixed independently of the audit, and let $E_m=\{i:g_m(x_i)=0\}$. Let $\mathcal R\subseteq\{1,\ldots,N\}$ be a fixed reference pool with $E_m\subseteq\mathcal R$ for every $m$. Draw $Q$ items uniformly without replacement from $\mathcal R$, where $0\le Q\le |\mathcal R|$, and label them. Let $n_m^{\mathrm{eff}}$ be the number of sampled items in $E_m$ and $K_m$ the number of those that are relevant. Then, with probability at least $1-\delta$,
$$
M_0(g_m)\le M_U\left(K_m,\frac{\delta}{M};\,N_0(g_m),\,n_m^{\mathrm{eff}}\right),
\qquad m=1,\ldots,M.
$$
\end{corollary}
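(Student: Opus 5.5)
The proof mirrors that of Theorem~\ref{thm:shared_reference_prefix}, with the binomial replaced by the hypergeometric distribution and Lemma~\ref{lem:conditioning} replaced by its hypergeometric analogue. First fix $m$. The pool $E_m$ is determined before any label is examined, since $h_1,\ldots,h_J$, the cut points and $\mathcal R$ are all fixed independently of the audit. If $N_0(g_m)=0$, the bound is trivial, so assume $N_0(g_m)\ge 1$.

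\textbf{Step 1: the conditional law.} We show that, conditional on $n_m^{\mathrm{eff}}=q$, the set of sampled indices lying in $E_m$ is a uniformly random $q$-subset of $E_m$. Since $Q$ items are drawn uniformly without replacement from $\mathcal R$, the sampled set is a uniform $Q$-subset of $\mathcal R$. For any two $q$-subsets $A,A'\subseteq E_m$, the number of $Q$-subsets of $\mathcal R$ whose intersection with $E_m$ equals $A$ is $\binom{|\mathcal R|-|E_m|}{Q-q}$. This count does not depend on $A$, which gives the exchangeability claim. Because the labels are fixed and $E_m$ contains exactly $M_0(g_m)$ relevant items, it follows that
\[
K_m\mid\{n_m^{\mathrm{eff}}=q\}\sim\mathrm{Hypergeom}\bigl(N_0(g_m),M_0(g_m),q\bigr).
\]

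\textbf{Step 2: conditional validity.} For each $q$ with positive probability, the monotonicity of $k\mapsto$ lower tail in the success count, used as in the proof of Theorem~\ref{thm:excluded_hypergeometric}, gives
\[
\mathbb P\bigl\{M_0(g_m)>M_U(K_m,\delta/M;N_0(g_m),q)\mid n_m^{\mathrm{eff}}=q\bigr\}\le \delta/M.
\]
The case $q=0$ needs a check. Here $K_m=0$, and the inversion with sample size $0$ returns $N_0(g_m)$, because the tail probability equals $1\ge\alpha$ for every $m'$. The bound therefore holds trivially, matching the convention $U_0(0,\alpha)=1$.

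\textbf{Step 3: unconditional validity and the union bound.} Averaging over $q$ by the law of total probability, exactly as in Lemma~\ref{lem:conditioning}, gives unconditional failure probability at most $\delta/M$ for each $m$. A union bound over $m=1,\ldots,M$ then yields the simultaneous statement with probability at least $1-\delta$.

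The only step that needs care is Step 1, the claim that conditioning a uniform without-replacement sample from $\mathcal R$ on its intersection size with $E_m$ leaves a uniform subset of $E_m$. The counting argument above settles this directly. Everything else transfers verbatim from the binomial shared-reference proof.
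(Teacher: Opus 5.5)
Your proof is correct and follows the paper's argument step for step. The steps are the conditional hypergeometric law given $n_m^{\mathrm{eff}}=q$, conditional validity of the inversion for $q\ge 1$ with the trivial value $N_0(g_m)$ at $q=0$, averaging over $q$ as in Lemma~\ref{lem:conditioning}, and a union bound over $m$. Your counting argument in Step~1 just makes explicit the exchangeability that the paper invokes; note also that in Step~2 the monotonicity you need is that of $m'\mapsto\mathbb{P}(\mathrm{Hypergeom}(N_0(g_m),m',q)\le k)$ in the success count $m'$, not in $k$.
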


\begin{proof}
Fix $m$. Conditional on $\{n_m^{\mathrm{eff}}=q\}$, the $q$ sampled items falling in $E_m$ are, by exchangeability of the uniform reference draw, a uniform without-replacement sample from $E_m$, so
$$
K_m\mid\{n_m^{\mathrm{eff}}=q\}\sim\mathrm{Hypergeom}\big(N_0(g_m),M_0(g_m),q\big).
$$
For each $q\ge 1$, the inversion is valid at level $\delta/M$ conditionally, by the argument of Theorem~\ref{thm:excluded_hypergeometric}, and for $q=0$ it returns the trivial bound $N_0(g_m)$. Averaging over the distribution of $n_m^{\mathrm{eff}}$, as in Lemma~\ref{lem:conditioning}, gives unconditional validity, and a union bound over $m=1,\ldots,M$ completes the proof.
\end{proof}

\subsection{Fixed-target stopping without a multiplicity penalty}
\label{sec:fixed_sequence}

The simultaneous certificates of Corollary~\ref{cor:prefix_stopping} and Theorem~\ref{thm:shared_reference_prefix} deliver numerical bounds for all $M$ prefixes at a $\delta/M$ union-bound cost, and remain valid against a target chosen after the audit. When the missed-mass target $\varepsilon$ is itself fixed before the audit, as in the bicriteria rule of Section~\ref{sec:bicriteria}, that penalty can be removed.

\begin{theorem}[Fixed-sequence prefix certification]
\label{thm:fixed_sequence}
Suppose that the component detectors $h_1,\ldots,h_J$, the cut points
$$
1\le t_1<\cdots<t_M\le J,
$$
and the missed-mass target $\varepsilon>0$ are fixed before the audit labels are observed. Let $g_1,\ldots,g_M$ be the corresponding prefix generators. Thus
$$
E_M\subseteq \cdots \subseteq E_1,
\qquad
r(g_1)\ge \cdots \ge r(g_M).
$$

For each $m$, let $\mathrm{pass}_m$ be the event that a valid level-$\delta$ excluded-pool certificate for $g_m$ certifies
$$
r(g_m)\le \varepsilon.
$$
Equivalently, whenever $r(g_m)>\varepsilon$,
$$
\mathbb{P}_{\mathrm{audit}}\{\mathrm{pass}_m\}\le\delta.
$$

Test the prefixes in the fixed order
$$
g_M,\ g_{M-1},\ \ldots,\ g_1.
$$
If $\mathrm{pass}_M$ fails, return no prefix. Otherwise, continue down the sequence until the first failure. Return the last prefix that passed. If all tests pass, return $g_1$.

Then, with probability at least $1-\delta$, either no prefix is returned, or the returned prefix satisfies
$$
r(g)\le \varepsilon.
$$
\end{theorem}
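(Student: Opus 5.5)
This is a standard fixed-sequence (hierarchical) testing argument. The monotonicity $r(g_1)\ge\cdots\ge r(g_M)$ means the set of prefixes violating the target, $\{m : r(g_m)>\varepsilon\}$, is an initial segment $\{1,\ldots,m^\ast\}$ for some $m^\ast\in\{0,\ldots,M\}$. If $m^\ast=0$, every prefix satisfies the target, so any returned prefix is fine and the error probability is zero. The plan is to show that, when $m^\ast\ge1$, an erroneous return forces the single test $\mathrm{pass}_{m^\ast}$ to succeed even though $r(g_{m^\ast})>\varepsilon$. By the validity assumption this event has probability at most $\delta$.

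The key step is to identify the error event. The procedure tests $g_M,g_{M-1},\ldots$ in order and returns the last prefix that passed before the first failure. So if it returns $g_k$, then $\mathrm{pass}_M,\mathrm{pass}_{M-1},\ldots,\mathrm{pass}_k$ all occurred. An error means the returned $g_k$ has $r(g_k)>\varepsilon$, that is, $k\le m^\ast$. Since the tests are visited in decreasing index, reaching and passing index $k\le m^\ast$ requires having passed index $m^\ast$ along the way (when $k=m^\ast$ this is immediate). Hence
\[
\{\text{returned } g \text{ has } r(g)>\varepsilon\}\subseteq \mathrm{pass}_{m^\ast}.
\]
The definition of a valid level-$\delta$ certificate then gives $\mathbb{P}_{\mathrm{audit}}\{\mathrm{pass}_{m^\ast}\}\le\delta$, because $r(g_{m^\ast})>\varepsilon$.

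Some care is needed in the last step, and I expect it to be the main subtlety. The index $m^\ast$ must be deterministic, not random, for the validity bound to apply to a single fixed test. This holds because $m^\ast$ depends only on the fixed quantities $r(g_m)$, which Assumption~\ref{ass:(A1)} freezes together with the generators, cut points and $\varepsilon$ before the labels are seen. So no union bound over $m$ is needed, and this is exactly where the multiplicity penalty disappears. Only the marginal validity of each $\mathrm{pass}_m$ is used, with no independence across tests, so the argument covers both separate and shared audits.

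Finally, the convention ``if all pass, return $g_1$'' is the special case $k=1$ and is covered by the same inclusion. The outcome ``no prefix returned'' is never an error.
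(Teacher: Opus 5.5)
Your proof is correct and is essentially the paper's argument. Your $m^\ast$ is the paper's boundary index $m_0-1$, or $M$ when $r(g_M)>\varepsilon$, so you fold the paper's two cases into a single inclusion of the error event in $\mathrm{pass}_{m^\ast}$ and apply marginal validity to that one fixed test.
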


\begin{proof}
Suppose first that $r(g_M)>\varepsilon$. By monotonicity, every prefix then has missed mass greater than $\varepsilon$. The procedure returns a prefix only on the event $\mathrm{pass}_M$, and validity gives
$$
\mathbb{P}_{\mathrm{audit}}\{\mathrm{pass}_M\}\le\delta.
$$
Hence, with probability at least $1-\delta$, no prefix is returned.

Otherwise, let
$$
m_0:=\min\{m:r(g_m)\le\varepsilon\}.
$$
By monotonicity, $r(g_m)\le\varepsilon$ for all $m\ge m_0$. If $m_0=1$, every prefix meets the target, so any returned prefix is valid, and the conclusion holds with probability one. If $m_0>1$, then $r(g_{m_0-1})>\varepsilon$, and the tests proceed through $g_M,\ldots,g_{m_0}$ before reaching $g_{m_0-1}$. Since the procedure stops at the first failure and returns the last prefix that passed, it returns a prefix with index below $m_0$ only if the test at $g_{m_0-1}$ is reached and passes. Hence
$$
\mathbb{P}_{\mathrm{audit}}\{\text{the returned prefix is invalid}\}
\le
\mathbb{P}_{\mathrm{audit}}\{\mathrm{pass}_{m_0-1}\}
\le\delta.
$$
The final inequality follows from validity at $g_{m_0-1}$. In every case, with probability at least $1-\delta$, either no prefix is returned or the returned prefix satisfies $r(g)\le\varepsilon$.
\end{proof}
The argument uses only the marginal level of the single boundary test, so the prefix audits may share data, as in the shared-reference design, without changing the validity argument.

\begin{remark}[Fixed-sequence stopping versus optional stopping]
The stopping in Theorem~\ref{thm:fixed_sequence} is stopping along a
finite, pre-specified sequence of tests. It is not optional stopping of
a confidence procedure at a data-dependent audit sample size. The proof
uses only the marginal level-$\delta$ validity of the single boundary
test and therefore requires no martingale or optional-stopping theorem.

If the sample size or sampling distribution used to construct an
individual certificate is itself chosen adaptively after examining
audit labels, its level-$\delta$ validity must be established under that
adaptive sampling rule. This may follow from a conditional argument,
as in Lemma~\ref{lem:conditioning} when the required conditional
binomial law holds, or from an anytime-valid procedure.
\end{remark}

\begin{remark}[What is gained and what is lost]
\label{rem:fs_tradeoff}
The gain is that each test runs at level $\delta$, rather than using a $\delta/M$ allocation across all prefixes. For a pre-registered missed-mass target, this can reduce the audit size needed at each tested prefix. The cost is that the procedure certifies only the selected prefix against the fixed target. It does not deliver simultaneous numerical bounds for every prefix, and the pass/fail decisions at $\varepsilon$ cannot be reused for a different target chosen later. When post-hoc numerical bounds or target flexibility are wanted, the simultaneous certificates remain the appropriate tool.
\end{remark}
\begin{remark}[Provenance]
\label{rem:fs_provenance}
The fixed-sequence testing principle is standard in multiple testing. It appears, for example, in ordered and stepwise testing procedures~\cite{Bauer1991,MaurerHothornLehmacher1995,HsuBerger1999}, and is also used in Learn then Test~\cite{LTT2021}. We use it here in a different setting: the ordered hypotheses are missed-mass certificates for nested prefix candidate generators.
\end{remark}

\section{Coverage amplification with auditable diagnostics}
\label{sec:amplification}

The excluded-pool certificate gives an upper bound on missed mass from the audit labels, but it does not explain how the components of a candidate-generator union contribute to that bound. We now give a structural result that decomposes the conditional miss rate $L(g_J)$ across an ordered component sequence $h_1, \ldots, h_J$, with a diagnostic interpretation of each component's contribution.

\subsection{Incremental coverage and exponential decay}

\begin{definition}[Incremental coverage]
\label{def:incremental}
Let $h_1, \ldots, h_J$ be an ordered sequence of component detectors and let $g_j = \max_{\ell \le j} h_\ell$ with $g_0 \equiv 0$. The \emph{incremental coverage} at round $j$ is
$$
\gamma_j := P\big(h_j(X) = 1 \mid g_{j-1}(X) = 0,\, Y = 1\big),
$$
the conditional probability that component detector $h_j$ catches a relevant item not yet caught by the previous detectors. If the conditioning event has zero probability, set $\gamma_j = 1$.
\end{definition}

To see how the incremental coverage probabilities determine the miss rate, first consider two component detectors. Since the union $g_2$ misses an item exactly when both $h_1$ and $h_2$ miss it,
$$
\{g_2(X)=0\}
=
\{h_1(X)=0,\ h_2(X)=0\}.
$$
Therefore, by the conditional-probability chain rule,
\begin{align*}
L(g_2)
&=
P\big(h_1(X)=0,\ h_2(X)=0\mid Y=1\big)\\
&=
P\big(h_1(X)=0\mid Y=1\big)\,
P\big(h_2(X)=0\mid h_1(X)=0,\ Y=1\big)\\
&=
(1-\gamma_1)(1-\gamma_2).
\end{align*}

More generally,
$$
\{g_J(X)=0\}
=
\bigcap_{j=1}^J\{h_j(X)=0\},
\quad
\mathrm{and}
\quad
\{g_{j-1}(X)=0\}
=
\bigcap_{\ell=1}^{j-1}\{h_\ell(X)=0\}.
$$
Repeated application of the conditional-probability chain rule therefore gives the exact identity
\begin{align*}
L(g_J)
&=
P(g_J(X)=0\mid Y=1)\\
&=
\prod_{j=1}^J
P\big(h_j(X)=0\mid g_{j-1}(X)=0,\ Y=1\big)\\
&=
\prod_{j=1}^J(1-\gamma_j).
\end{align*}
If $P(g_{j-1}(X)=0,Y=1)=0$ for some $j$, then the previous union already misses no relevant items, so all subsequent miss rates are zero; the convention $\gamma_j=1$ preserves the identity.

Hence, using $1-u\le \exp(-u)$ for $u\in[0,1]$,
\begin{equation}
\label{eq:amplification_identity}
L(g_J)
=
\prod_{j=1}^J (1 - \gamma_j)
\le
\exp\left(-\sum_{j=1}^J \gamma_j\right).
\end{equation}
This has the same informal flavour as boosting analyses~\cite{FreundSchapire1997}: later components are valuable only through the mass they recover among items not already captured by earlier components. The analogy is only structural, however. Here the union is a fixed OR-combination of candidate generators, and the identity is an exact coverage decomposition rather than a reweighting or margin argument.

Identity~\eqref{eq:amplification_identity} is not directly auditable because the $\gamma_j$ are unobservable. The next theorem replaces them by Clopper--Pearson lower bounds computed on audited positives.

\subsection{Auditing incremental coverage}

The sharp overall bound on the conditional miss rate $L(g_J)$ is the direct excluded-pool-style certificate of Proposition~\ref{prop:direct_miss_rate} below, which pays a single confidence allocation and is what we recommend reporting. The purpose of this section is different and complementary: to \emph{attribute} coverage to the individual components of an ordered union, so that an analyst can see how much each
added detector contributes among items the earlier detectors still miss. The decomposition that follows is a diagnostic; as a bound on $L(g_J)$ it is deliberately looser than Proposition~\ref{prop:direct_miss_rate}, for the reasons explained in Section~\ref{sec:residual_shrinkage_label}.

We now certify the per-component contributions of $h_1,\ldots,h_J$ to the coverage of the union $g_J$, rather than the coverage of the union as a whole. The question is how much each added component covers among relevant items that the earlier components still miss. The audited sample therefore consists of relevant items drawn according to $P(\cdot\mid Y=1)$, rather than from the positives associated with any particular candidate generator or component. For round $j$, $Q_j$ is the number of audited positives not caught by the previous union $g_{j-1}$, and $H_j$ is the number of those residual positives caught by the next component $h_j$. A lower confidence bound on $H_j/Q_j$ gives a lower bound on the incremental coverage $\gamma_j$. Multiplying the resulting residual factors gives an upper bound on the conditional miss rate of the full union. Section~\ref{sec:residual_shrinkage_label} explains why this product bound is looser than the direct miss-rate certificate on $L(g_J)$, and why the decomposition is included for its per-component content rather than as the reported overall guarantee.

\begin{theorem}[Auditable amplification]
\label{thm:auditable_amplification}
Suppose \(h_1,\ldots,h_J\) and their order are fixed independently of the
audit. Let \(Q\) be the number of audited positives. Conditional on \(Q\),
assume that \(X_1^+,\ldots,X_Q^+\) are independent draws from
\(P(\cdot\mid Y=1)\). This includes, for example, a whole-population audit in
which items are drawn from \(P\), labelled, and the positives are retained.

For \(j=1,\ldots,J\), let \(Q_j\) be the number of audited positives not caught
by \(g_{j-1}\), and let \(H_j\) be the number of these residual audited
positives caught by \(h_j\). Let \(\delta_1,\ldots,\delta_J>0\) satisfy
\[
\sum_{j=1}^J \delta_j\le \delta .
\]
With the convention \(L_0(0,\alpha)=0\), define
\[
\widehat\ell_j=L_{Q_j}(H_j,\delta_j),
\qquad j=1,\ldots,J .
\]
Then, with probability at least \(1-\delta\),
\[
\gamma_j\ge \widehat\ell_j
\quad\text{for all }j=1,\ldots,J .
\]
Consequently,
\[
L(g_J)\le \prod_{j=1}^J(1-\widehat\ell_j).
\]
\end{theorem}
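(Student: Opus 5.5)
The plan is to reduce each round to Lemma~\ref{lem:conditioning}, then combine the rounds by a union bound and finish with the exact product identity~\eqref{eq:amplification_identity}. Throughout, I would condition on $Q$ and argue conditionally, since unconditional validity then follows by averaging over $Q$.

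Fix $j$. The residual set at round $j$ is
\[
\{i\le Q:\ g_{j-1}(X_i^+)=0\}.
\]
Membership of $X_i^+$ in this set depends only on $X_i^+$ through the fixed function $g_{j-1}$. The detectors and their order are fixed before the audit, as required by Assumption~\ref{ass:(A1)}. Given $Q$, the $X_i^+$ are i.i.d.\ from $P(\cdot\mid Y=1)$. I would argue that, conditionally on $Q$ and on the exact index set $I_j$ of residual positives with $|I_j|=q$, the draws $\{X_i^+: i\in I_j\}$ are i.i.d.\ from
\[
P(\cdot\mid Y=1,\ g_{j-1}(X)=0).
\]
This is the same conditioning step as in Theorem~\ref{thm:shared_reference_prefix}: independent draws restricted to a fixed measurable event are i.i.d.\ from the conditional law. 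Hence
\[
H_j\mid\{Q_j=q\}\sim\mathrm{Bin}(q,\gamma_j),
\]
and Lemma~\ref{lem:conditioning}, with the convention $L_0(0,\alpha)=0$, gives
\[
\mathbb P\{\gamma_j<\widehat\ell_j\}\le\delta_j.
\]

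The degenerate case needs separate handling. If $P(g_{j-1}(X)=0,\ Y=1)=0$, then almost surely $Q_j=0$, so $\widehat\ell_j=0\le\gamma_j=1$ and the bound holds trivially. A union bound over $j$ then gives
\[
\gamma_j\ge\widehat\ell_j\quad\text{for all } j,
\]
with probability at least $1-\sum_j\delta_j\ge1-\delta$. Note that the $H_j$ are dependent across rounds, but the union bound only needs the marginal statements.

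On this event, each factor satisfies $1-\gamma_j\le1-\widehat\ell_j$. Both sides lie in $[0,1]$, so the product is monotone. The exact identity
\[
L(g_J)=\prod_{j=1}^J(1-\gamma_j)
\]
derived before~\eqref{eq:amplification_identity} then yields the stated bound.

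I expect the main obstacle to be the conditional i.i.d.\ claim for the residual positives. It must be stated carefully, conditioning on the random index set $I_j$ and not merely on $Q_j$, and then marginalising to the count $Q_j$. It also has to be reconciled with the assumption that the $X_i^+$ are only i.i.d.\ conditional on $Q$. The approach is first to condition on $Q=n$ and apply Lemma~\ref{lem:conditioning} within that conditional world, where the residual count is random but the labels are conditionally binomial. I would then average the resulting $\le\delta_j$ bound over $n$.
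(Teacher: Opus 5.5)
Your proposal is correct and follows essentially the same route as the paper: a conditional binomial law $H_j\mid\{Q_j=q\}\sim\mathrm{Bin}(q,\gamma_j)$ in each round, Lemma~\ref{lem:conditioning} in its lower-bound form, separate handling of the degenerate case $P(Y=1,g_{j-1}(X)=0)=0$, a union bound over $j$, and monotonicity combined with the product identity. Your extra care in conditioning first on $Q$ and on the residual index set $I_j$, and then marginalising, only makes explicit the conditional i.i.d.\ step that the paper asserts directly.
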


\begin{proof}
Fix $j$ and assume $P(Y=1,\, g_{j-1}(X)=0)>0$. Given $\{Q_j=q\}$ with $q\ge 1$, the $q$ audited positives with $g_{j-1}(X_i^+)=0$ are i.i.d.\ from $P(\cdot\mid Y=1,\, g_{j-1}(X)=0)$, and each is independently caught by $h_j$ with probability $\gamma_j$ by Definition~\ref{def:incremental}. Hence, conditional on $\{Q_j=q\}$, $H_j\sim\mathrm{Bin}(q,\gamma_j)$, and Lemma~\ref{lem:conditioning}, in its lower-bound form with the convention $L_0(0,\delta_j)=0$, gives
$$
\mathbb{P}_{\mathrm{audit}}\left\{
\gamma_j<L_{Q_j}(H_j,\delta_j)
\right\}
\le\delta_j.
$$
If instead $P(Y=1,\, g_{j-1}(X)=0)=0$, then $\gamma_j=1$ by convention and the lower bound is vacuously valid, so the displayed inequality holds for every $j$.

A union bound over $j=1,\ldots,J$ with $\sum_{j=1}^J \delta_j\le\delta$ gives the simultaneous lower bound on the $\gamma_j$. On the event where every $\gamma_j\ge L_{Q_j}(H_j,\delta_j)$, monotonicity in each factor gives
$$
\prod_{j=1}^J (1-\gamma_j) \le \prod_{j=1}^J \big(1 - L_{Q_j}(H_j, \delta_j)\big),
$$
and identity~\eqref{eq:amplification_identity} converts the left-hand side into $L(g_J)$. The product bound follows.
\end{proof}

\begin{remark}[Sampling requirement]
The conditional-i.i.d. assumption is the sampling condition needed for the
binomial calculation. It allows \(Q\), and the effective residual sizes \(Q_j\),
to be random. What matters is that, conditional on the realised size, the
retained positives are sampled from the appropriate conditional distribution.
In particular, positives obtained only by reviewing the included pool
\(\{g(X)=1\}\) are distributed as \(P(\cdot\mid Y=1,\,g(X)=1)\), not
\(P(\cdot\mid Y=1)\), and do not satisfy the hypothesis without an additional
reweighting or sampling argument. The same requirement applies to
Proposition~\ref{prop:direct_miss_rate} and to
Theorem~\ref{thm:stress_test}.

This also excludes audits in which the stopping rule, or the choice of which
positives to retain, depends on whether positives are caught by particular
components. Such dependence can change the conditional distribution of the
residual positives and invalidate the binomial law for \(H_j\mid Q_j\).
\end{remark}

\subsection{Residual shrinkage and the direct alternative}
\label{sec:residual_shrinkage_label}

The per-component bound $L_{Q_j}(H_j,\delta_j)$ on $\gamma_j$ is informative only when $Q_j$ is large. By construction, $Q_j$ counts the audited positives that the previous union $g_{j-1}$ has missed, so $Q_1\ge Q_2\ge\cdots\ge Q_J$: each added component reduces the residual pool of misses, and the audit count available for the next round shrinks accordingly. In the regime where the union already catches most relevant items, $Q_j$ is small for large $j$, the Clopper--Pearson lower bound $L_{Q_j}(H_j,\delta_j)$ is correspondingly weak, often close to zero, and the per-round contribution of $h_j$ is hard to certify even when the underlying $\gamma_j$ is large. The shrinkage is intrinsic to the decomposition: late-round detectors are exactly the ones whose contributions are hardest to demonstrate from positives alone.

The product bound on $L(g_J)$ produced by Theorem~\ref{thm:auditable_amplification} is not the tightest possible. The conditional analogue of Theorem~\ref{thm:excluded_binomial}, applied to the audited positives, gives a single direct certificate.

\begin{proposition}[Direct conditional miss-rate certificate]
\label{prop:direct_miss_rate}
Under the audit design of Theorem~\ref{thm:auditable_amplification}, let $C_J:=\#\{i\le Q:g_J(X_i^+)=0\}$ be the number of audited positives missed by the full union. Then, with probability at least $1-\delta$,
$$
L(g_J)\le U_Q(C_J,\delta),
$$
with the convention $U_0(0,\delta)=1$, so that an audit returning no positives ($Q=0$) gives the trivial bound $L(g_J)\le 1$.
\end{proposition}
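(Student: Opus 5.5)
The plan is to reduce the statement to Lemma~\ref{lem:conditioning} with $\theta=L(g_J)$, $Q$ the number of audited positives, and $K=C_J$. Because $g_J$ is fixed independently of the audit (Assumption~\ref{ass:(A1)}), the event $\{g_J(X)=0\}$ does not depend on the audit labels. Under the design of Theorem~\ref{thm:auditable_amplification}, conditional on $\{Q=q\}$ with $\mathbb{P}(Q=q)>0$ and $q\ge1$, the audited positives $X_1^+,\ldots,X_q^+$ are i.i.d.\ from $P(\cdot\mid Y=1)$. So the indicators $\mathbbm 1\{g_J(X_i^+)=0\}$ are conditionally i.i.d.\ Bernoulli with success probability
\[
P(g_J(X)=0\mid Y=1)=L(g_J).
\]
This is well defined because the section assumes $\pi>0$. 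It follows that
\[
C_J\mid\{Q=q\}\sim\mathrm{Bin}(q,L(g_J)).
\]
When $q=0$ we have $C_J=0$ trivially, which is the $\mathrm{Bin}(0,\theta)$ law, so the hypothesis of the lemma holds for every $q$ with positive probability.

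Lemma~\ref{lem:conditioning} then gives
\[
\mathbb{P}_{\mathrm{audit}}\{L(g_J)>U_Q(C_J,\delta)\}\le\delta,
\]
and the convention $U_0(0,\delta)=1$ handles $Q=0$, producing the trivial bound $L(g_J)\le 1$. This is one confidence allocation, so there is no union bound over components. That is the point of contrast with Theorem~\ref{thm:auditable_amplification}.

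The only delicate step is justifying the conditional i.i.d.\ law of the indicators given $Q$. This requires that the rule determining $Q$, such as retaining positives from a whole-population audit, does not depend on the values of $g_J$ on the sampled positives. The sampling-requirement remark after Theorem~\ref{thm:auditable_amplification} already imposes exactly this condition. I would cite that remark rather than reprove it, noting that a fixed measurable $g_J$ applied to conditionally i.i.d.\ draws gives conditionally i.i.d.\ indicators. Everything else is routine.
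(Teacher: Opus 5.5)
Your proposal is correct and matches the paper's proof. Conditional on $Q$, the audited positives are i.i.d.\ from $P(\cdot\mid Y=1)$, so $C_J\mid Q\sim\mathrm{Bin}(Q,L(g_J))$, and Lemma~\ref{lem:conditioning} finishes the argument. The step you flag as delicate is simply the stated hypothesis of the design, so citing it is all that is needed.
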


\begin{proof}
Conditional on $Q$, the audited positives are i.i.d.\ from $P(\cdot\mid Y=1)$, so $C_J\sim\mathrm{Bin}(Q,L(g_J))$ with $L(g_J)=P(g_J(X)=0\mid Y=1)$. Lemma~\ref{lem:conditioning} then gives the displayed inequality.
\end{proof}

Proposition~\ref{prop:direct_miss_rate} is the natural overall guarantee: it
pays a single confidence allocation \(\delta\) rather than the split
\(\sum_j\delta_j\le\delta\), and it avoids the multiplicative accumulation of
slack across rounds. When \(J=1\) and \(\delta_1=\delta\), the two
certificates coincide, since \(U_n(k,\alpha)=1-L_n(n-k,\alpha)\) by the
symmetry \(\mathbb{P}(\mathrm{Bin}(n,p)\le k)=\mathbb{P}(\mathrm{Bin}(n,1-p)\ge n-k)\) of the
binomial tails. The value of Theorem~\ref{thm:auditable_amplification} is
diagnostic: it shows how coverage accumulates across the ordered components.
Late components may have small residual sample sizes \(Q_j\), making their
individual lower bounds weak.


\section{Stress-test cluster certificates}
\label{sec:stress}

In many applications the candidate generator should be tested for brittleness under pre-specified perturbations: paraphrases, OCR corruptions, spelling variants, or other declared perturbation mechanisms. We now give a finite-sample certificate relative to a fixed stress-test generator $\mathcal A$, which produces, for each item $x$, a set of variants $\mathcal A(x)\subseteq\mathcal X$. The certificate bounds the rate at which a relevant item has at least one variant in $\mathcal A(X)$ that $g$ excludes. We call this event a cluster escape. The scope of the certificate is fixed by $\mathcal A$ in advance: variants produced by mechanisms outside $\mathcal A$ are not covered, and the result is not a general semantic-robustness guarantee but a certificate of $g$'s sensitivity to the particular perturbation mechanism the analyst has chosen to test.

\subsection{Cluster generator and escape probability}

\begin{definition}[Stress-test generator]
\label{def:stress_generator}
A \emph{stress-test generator} is a measurable map $\mathcal A$ from $\mathcal X$ to finite non-empty subsets of $\mathcal X$ (or, more generally, to a distribution over finite non-empty subsets), fixed independently of the audit. The set $\mathcal A(x)$ is the \emph{stress-test cluster} of $x$.
\end{definition}

Examples include paraphrases generated by a fixed paraphrase model,
OCR-corruption variants produced by a fixed corruption process, spelling
variants generated by a declared rule, or perturbations produced by
another pre-specified mechanism. Whether $x$ itself belongs to
$\mathcal A(x)$ is left to the modeller.

A particularly relevant instance is the two-model semantic-adversarial
framework of \cite{AnthonySalehzadehNobari2026}. In that framework,
admissible paraphrase perturbations are constrained using a proxy
embedding, while their effect is evaluated through a distinct target
model. For a fixed perturbation budget $\varepsilon_{\mathrm{pert}}$
and a fixed discrete paraphrase-generation procedure, let
$\mathcal A_{\varepsilon_{\mathrm{pert}}}(x)$ denote the resulting
finite set, or distribution, of generated variants of $x$. This gives
a special case of the stress-test generator $\mathcal A$ defined above.
The certificates below then quantify either the probability that a
relevant item has at least one generated adversarial variant excluded
by $g$, or the expected fraction of its generated variants that are
excluded. These guarantees are relative to the declared generator
$\mathcal A_{\varepsilon_{\mathrm{pert}}}$. They do not by themselves
certify robustness over the full continuous perturbation set; such a
conclusion requires an additional covering or approximation condition.

\begin{definition}[Cluster escape probability]
\label{def:cluster_escape}
For a candidate generator $g$ and stress-test generator $\mathcal A$, define
\begin{align*}
\rho_{\max}(g)
&:=
\mathbb{P}_{X,\mathcal A}\left(
\exists z\in\mathcal A(X):g(z)=0
\,\middle|\,Y=1
\right),\\
\rho_{\mathrm{avg}}(g)
&:=
\mathbb E_{X,\mathcal A}\left[
\frac{1}{|\mathcal A(X)|}
\sum_{z\in\mathcal A(X)}
\mathbbm 1\{g(z)=0\}
\,\middle|\,Y=1
\right].
\end{align*}
The first is the probability that some variant in the cluster escapes the candidate set. The second is the expected fraction of variants in the cluster that escape. When $\mathcal A$ is randomised, the probability and expectation are taken over both $X\sim P(\cdot\mid Y=1)$ and an independent draw of the random cluster $\mathcal A(X)$.
\end{definition}

The two quantities answer different questions. The worst-cluster $\rho_{\max}$ controls the probability that some variant in a sampled cluster escapes. The average $\rho_{\mathrm{avg}}$ controls the expected fraction of variants that escape. The worst-cluster is the more stringent guarantee and scales adversely with cluster size. The average-variant is more forgiving and is the natural quantity for cluster-balanced evaluation.

\begin{theorem}[Stress-test cluster certificate]
\label{thm:stress_test}
Let $\mathcal G$ be a finite family of candidate generators and $\mathcal A$ a stress-test generator, both fixed independently of the audit. Let $X_1,\ldots,X_Q$ be audited positives which, conditional on $Q$, are i.i.d.\ from $P(\cdot\mid Y=1)$, and form clusters $\mathcal A(X_i)$ independently across $i$ (when $\mathcal A$ is randomised, its internal randomness is refreshed independently across $i$). For $g \in \mathcal G$, define
$$
W_i^{\max}(g) = \mathbbm 1\{\exists z \in \mathcal A(X_i) : g(z) = 0\}.
$$
Let $K^{\max}(g) = \sum_{i=1}^Q W_i^{\max}(g)$. In addition, draw $Z_1,\ldots,Z_Q$ so that, conditionally on the realised clusters
\[
\left(X_i,\mathcal A(X_i)\right)_{i=1}^Q,
\]
they are independent and each $Z_i$ is uniformly distributed on
$\mathcal A(X_i)$. The auxiliary randomisation used for these draws is independent of the
audit labels and of the internal randomness of $\mathcal A$. Let $K^{\mathrm{rand}}(g)=
\sum_{i=1}^Q \mathbbm 1\{g(Z_i)=0\}$. Then, with probability at least $1 - \delta$, simultaneously over $g \in \mathcal G$,
$$
\rho_{\max}(g) \le U_Q\left(K^{\max}(g), \frac{\delta}{2|\mathcal G|}\right), \qquad \rho_{\mathrm{avg}}(g) \le U_Q\left(K^{\mathrm{rand}}(g), \frac{\delta}{2|\mathcal G|}\right).
$$
Both bounds are read as the trivial value $1$ when $Q=0$.
\end{theorem}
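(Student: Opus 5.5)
The plan is to reduce each of the $2|\mathcal G|$ bounds to a single application of Lemma~\ref{lem:conditioning}, and then to combine them with a union bound, exactly as in Theorem~\ref{thm:excluded_binomial}. Fix $g\in\mathcal G$. Because $g$, $\mathcal G$ and $\mathcal A$ are fixed independently of the audit (Assumption~\ref{ass:(A1)}), we can condition on $\{Q=q\}$ with $q\ge1$. Under this conditioning the pairs $(X_i,\mathcal A(X_i))$ are i.i.d. The reason is that the $X_i$ are i.i.d.\ from $P(\cdot\mid Y=1)$ and the internal randomness of $\mathcal A$ is refreshed independently across $i$. The auxiliary draws $Z_i$ are then conditionally independent given the clusters, each uniform on its own cluster. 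It follows that the triples $(X_i,\mathcal A(X_i),Z_i)$ are i.i.d.\ given $Q=q$.

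\emph{Worst-cluster bound.} The indicators $W_i^{\max}(g)$ are measurable functions of the i.i.d.\ pairs, so given $Q=q$ they are i.i.d.\ Bernoulli. Their common mean is $\mathbb P_{X,\mathcal A}(\exists z\in\mathcal A(X):g(z)=0\mid Y=1)=\rho_{\max}(g)$, by Definition~\ref{def:cluster_escape}. Hence $K^{\max}(g)\mid\{Q=q\}\sim\mathrm{Bin}(q,\rho_{\max}(g))$. Lemma~\ref{lem:conditioning}, with the convention $U_0(0,\alpha)=1$, then gives
\[
\mathbb P_{\mathrm{audit}}\{\rho_{\max}(g)>U_Q(K^{\max}(g),\delta/(2|\mathcal G|))\}\le\delta/(2|\mathcal G|).
\]

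\emph{Average-variant bound.} This is the step needing care: we must identify the success probability of $\mathbbm 1\{g(Z_i)=0\}$. Condition on the cluster and use the tower property. Since $Z_i$ is uniform on $\mathcal A(X_i)$,
\[
\mathbb P(g(Z_i)=0\mid X_i,\mathcal A(X_i))=\frac{1}{|\mathcal A(X_i)|}\sum_{z\in\mathcal A(X_i)}\mathbbm 1\{g(z)=0\}.
\]
This uses that clusters are finite and non-empty, by Definition~\ref{def:stress_generator}. Taking expectation over $X_i\sim P(\cdot\mid Y=1)$ and the randomness of $\mathcal A$ yields exactly $\rho_{\mathrm{avg}}(g)$. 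The auxiliary randomisation is independent of the labels and of $\mathcal A$'s internal randomness, so conditioning on $Q$ and on the positive labels does not distort this computation. Independence across $i$ of the triples then gives $K^{\mathrm{rand}}(g)\mid\{Q=q\}\sim\mathrm{Bin}(q,\rho_{\mathrm{avg}}(g))$. Lemma~\ref{lem:conditioning} again gives failure probability at most $\delta/(2|\mathcal G|)$. The main subtlety is therefore just verifying that the randomised single-variant draw turns the cluster average into a Bernoulli mean without bias. Note that using the cluster-average fraction directly would not yield a binomial count, which is why the $Z_i$ are drawn.

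Finally, take a union bound over the $2|\mathcal G|$ failure events (two per generator). This gives total failure probability at most $\delta$, so both displayed inequalities hold simultaneously for all $g\in\mathcal G$ with probability at least $1-\delta$. The case $Q=0$ is covered by the convention, which makes both bounds equal to $1$ and hence trivially valid.
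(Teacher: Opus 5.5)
Your proposal is correct and follows the paper's proof essentially step for step. Conditional on $Q$, the triples $(X_i,\mathcal A(X_i),Z_i)$ are i.i.d., and $W_i^{\max}(g)$ and $\mathbbm 1\{g(Z_i)=0\}$ are Bernoulli with means $\rho_{\max}(g)$ and $\rho_{\mathrm{avg}}(g)$, the latter by averaging over the uniform draw within each cluster; exact Clopper--Pearson inversion at level $\delta/(2|\mathcal G|)$ with a union bound then gives the joint statement. The differences are cosmetic: you invoke Lemma~\ref{lem:conditioning} explicitly and union-bound over all $2|\mathcal G|$ events at once, whereas the paper treats $Q$ as fixed, splits the budget into two $\delta/2$ families, and notes that unconditional validity follows by averaging.
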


\begin{proof}
The two bounds are handled separately. Throughout, $Q$ is treated as fixed. All probability statements below are conditional on $Q$, and the unconditional statement follows by averaging, as in Lemma~\ref{lem:conditioning}.

\textbf{Worst-cluster bound.}
For each $i=1,\ldots,Q$, the audited positive $X_i$ is i.i.d.\ from $P(\cdot\mid Y=1)$, and when $\mathcal A$ is randomised, its internal randomness at index $i$ is independent across $i$ by assumption. The indicator
$$
W_i^{\max}(g)=\mathbbm 1\{\exists z\in\mathcal A(X_i):g(z)=0\}
$$
is a measurable function of $X_i$ and of the independent cluster-generator randomness at index $i$. The variables $\{W_i^{\max}(g)\}_{i=1}^Q$ are therefore i.i.d., and each is Bernoulli with
$$
\mathbb E_{X,\mathcal A}[W_i^{\max}(g)]
=
\mathbb{P}_{X,\mathcal A}(\exists z\in\mathcal A(X_i):g(z)=0\mid Y_i=1)
=
\rho_{\max}(g),
$$
where the second equality is Definition~\ref{def:cluster_escape}. Hence
$$
K^{\max}(g)=\sum_{i=1}^Q W_i^{\max}(g)
\sim
\mathrm{Bin}(Q,\rho_{\max}(g)).
$$
The Clopper--Pearson upper bound for the binomial proportion satisfies, for every $\alpha\in(0,1)$,
$$
\mathbb{P}_{\mathrm{audit}}\left\{
\rho_{\max}(g)>U_Q(K^{\max}(g),\alpha)
\,\middle|\, Q
\right\}
\le\alpha.
$$
Applying this with $\alpha=\delta/(2|\mathcal G|)$ for each fixed $g$ and taking a union bound over $\mathcal G$ gives the simultaneous worst-cluster statement with failure probability at most $\delta/2$.

\textbf{Average-variant bound.} Conditional on $Q$, the triples $(X_i,\mathcal A(X_i),Z_i)$ are i.i.d.\ across $i$: the pairs $(X_i,\mathcal A(X_i))$ are i.i.d.\ by assumption, and the conditional law of $Z_i$ given the pair depends only on $\mathcal A(X_i)$. Writing $V_i(g):=\mathbbm 1\{g(Z_i)=0\}$ and averaging over the uniform choice of $Z_i$ within its cluster,
$$
\mathbb E_{X,\mathcal A}[V_i(g)]=\mathbb E_{X,\mathcal A}\left[\frac{1}{|\mathcal A(X_i)|}\sum_{z\in\mathcal A(X_i)}\mathbbm 1\{g(z)=0\}\,\middle|\,Y_i=1\right]=\rho_{\mathrm{avg}}(g),
$$
by Definition~\ref{def:cluster_escape}. Hence
$$
K^{\mathrm{rand}}(g)=\sum_{i=1}^Q V_i(g)\sim\mathrm{Bin}(Q,\rho_{\mathrm{avg}}(g)),
$$
and the Clopper--Pearson upper bound at level $\delta/(2|\mathcal G|)$ for each fixed $g$, with a union bound over $\mathcal G$, gives the simultaneous average-variant statement with failure probability at most $\delta/2$. The single draw $Z_i$ serves every $g\in\mathcal G$ at once, since it does not depend on $g$. Combining the two families of bounds gives the displayed joint statement with total failure probability at most $\delta$.
\end{proof}

\begin{remark}[Budget split between the two bounds]
\label{rem:stress_refinements}
The error budget need not be divided evenly between the two bounds. The allocation $\delta/2$ to each family may be replaced by any split $\delta_{\max}+\delta_{\mathrm{avg}}\le\delta$, fixed before the audit, with the larger share given to whichever guarantee is the operational target.
\end{remark}

\begin{remark}[Why draw one variant per cluster?]
\label{rem:exact_avg}
The single draw \(Z_i\) makes the average-variant certificate an exact binomial
certificate. Indeed, \(\mathbbm 1\{g(Z_i)=0\}\) is Bernoulli with mean
\(\rho_{\mathrm{avg}}(g)\). Thus no additive concentration term is needed.

This can matter in the rare-escape regime. For example, with \(Q=300\) and
error budget \(0.025\), zero observed escapes gives the exact bound
\(U_{300}(0,0.025)=0.0122\). By contrast, a Hoeffding bound for the empirical
mean of the within-cluster escape fractions has additive term
\[
\sqrt{\frac{\log(1/0.025)}{2Q}}=0.0784 .
\]
The empirical-mean route, based on
\(\widehat\rho_{\mathrm{avg}}(g)=Q^{-1}\sum_{i=1}^Q|\mathcal A(X_i)|^{-1}
\sum_{z\in\mathcal A(X_i)}\mathbbm 1\{g(z)=0\}\), is still available when all
variant values have been computed, but it is usually less sharp near zero.
\end{remark}

If the seed \(x\) is included in every cluster, then
\(\rho_{\max}(g)\ge L(g)\), since an ordinary miss is a cluster escape.
Otherwise the two quantities are not ordered.

The certificate is relative to the declared perturbation mechanism
\(\mathcal A\). It is not a general semantic-robustness guarantee: it certifies
escape only for the variants produced by \(\mathcal A\). Its role is to
distinguish ordinary coverage on positives drawn from \(P\) from brittleness
under the specified perturbations.

\section{Design-stage selection from a class}
\label{sec:learning}

Section~\ref{sec:design_iteration} allowed design choices before the certification labels are opened. We now give several formal ways to make such choices over finite, VC, and structured candidate-generator classes. Labelled design data may be used by a design-stage selector to choose a candidate generator $g$ before certification, using standard finite-class, VC, sparse-union, and Neyman--Pearson guarantees expressed in the one-sided coverage notation used here. The selected candidate generator should then be certified on an independent held-out audit using the certificates of Sections~\ref{sec:excluded}--\ref{sec:stress}. The design-stage selector is distinct from any downstream classifier trained or deployed after candidate generation.

For this section only, let $\mathcal H$ denote a class of measurable candidate generators $g:\mathcal X\to\{0,1\}$.\footnote{For infinite classes we assume throughout the standard measurability conditions under which the suprema of the relevant empirical processes are measurable.} A design sample from the whole population consists of i.i.d.\ pairs $(X_i,Y_i)_{i=1}^n$ with $X_i\sim P$ and $Y_i=\varphi^\star(X_i)$. Define
\begin{align*}
\widehat r_n(g) &= \frac1n\sum_{i=1}^n \mathbbm 1\{Y_i=1,\,g(X_i)=0\}, \\
\widehat B_n(g) &= \frac1n\sum_{i=1}^n \mathbbm 1\{g(X_i)=1\}.
\end{align*}
A candidate generator is \emph{positive-consistent} on the design sample if $\widehat r_n(g)=0$. It is \emph{label-consistent} if, in addition, it has no false positives on the sample.

\subsection{Why unrestricted coverage selection is vacuous}

The one-sided objective alone does not define a meaningful design-stage selection problem. Proposition~\ref{prop:trivial_coverage} gives the degeneracy: the constant candidate generator $g\equiv 1$ has zero missed mass, so an unrestricted coverage objective can be satisfied with no labelled data.

The two useful restrictions are the same ones used throughout the paper. First, the design-stage selector may be required to output $g\in\mathcal H$ for a fixed class $\mathcal H$. This is the standard \emph{proper-learning} restriction, meaning that the selector must output a candidate generator in the class it is searching over. Second, review burden may enter as an objective or constraint. This is the Neyman--Pearson formulation. In applications both restrictions are usually present: the analyst searches over a structured candidate-generator class and also imposes a burden target.

\subsection{Proper learning: finite classes and VC classes}

The following finite-class bound gives the basic sample-size rule for a design split. It is a one-sided version of the standard realisable PAC bound \cite{Valiant1984} for a design-stage selector.

\begin{theorem}[Finite-class design bound]
\label{thm:learning_finite_r}
Let $\mathcal H$ be finite, and suppose there exists $g^\star\in\mathcal H$ with $r(g^\star)=0$. From a whole-population design sample of size $n$, let $\widehat g\in\mathcal H$ be any positive-consistent candidate generator. Then, with probability at least $1-\delta$,
$$
r(\widehat g) \le \frac{\log(|\mathcal H|/\delta)}{n}.
$$
\end{theorem}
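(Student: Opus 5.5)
The plan is the standard realisable PAC argument, applied to the one-sided quantity $r$. Set $\varepsilon:=\log(|\mathcal H|/\delta)/n$ and call $g\in\mathcal H$ \emph{bad} if $r(g)>\varepsilon$. The claim fails only if the selector outputs a bad generator. Since $\widehat g$ is positive-consistent, this can happen only on the event
\[
\{\exists g\in\mathcal H \text{ bad}: \widehat r_n(g)=0\}.
\]
So it suffices to show this event has probability at most $\delta$. This reduction covers any selection rule among positive-consistent generators, including data-dependent tie-breaking. The existence of $g^\star$ with $r(g^\star)=0$ is used only to guarantee that a positive-consistent generator exists. Almost surely $\widehat r_n(g^\star)=0$, so $\widehat g$ is well defined.

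Fix a bad $g$. The indicators $\mathbbm 1\{Y_i=1,g(X_i)=0\}$ are i.i.d.\ Bernoulli$(r(g))$, exactly as in the proof of Theorem~\ref{thm:whole_population}. Hence
\[
\mathbb P_{\mathrm{design}}\{\widehat r_n(g)=0\}=(1-r(g))^n\le(1-\varepsilon)^n\le e^{-n\varepsilon}.
\]
This step needs $g$ to be fixed, which holds because $\mathcal H$ is fixed before the design sample. A union bound over the at most $|\mathcal H|$ bad generators then gives failure probability at most $|\mathcal H|e^{-n\varepsilon}=\delta$ by the choice of $\varepsilon$.

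Equivalently, the proof is the zero-count case of Theorem~\ref{thm:whole_population} with error level $\delta/|\mathcal H|$, combined with the inequality $1-(\delta/|\mathcal H|)^{1/n}\le \log(|\mathcal H|/\delta)/n$ from \eqref{eq:zero_miss_excluded}. Evaluated at $\widehat g$, which has $K_{\widehat g}=0$, this yields the bound. The one point needing care is that $\widehat g$ depends on the sample. The simultaneity over all of $\mathcal H$ is what makes evaluating at a data-chosen $\widehat g$ legitimate, exactly as in the bicriteria discussion. No other obstacle is expected.
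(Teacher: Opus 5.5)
Your proposal is correct and follows essentially the same argument as the paper. Both fix $\varepsilon=\log(|\mathcal H|/\delta)/n$, bound the probability that a fixed generator with $r(g)>\varepsilon$ is positive-consistent by $(1-r(g))^n\le e^{-n\varepsilon}$, take a union bound over $\mathcal H$ to get failure probability at most $\delta$, and use $g^\star$ only to guarantee that a positive-consistent member exists. Your alternative reading through the zero-count case of Theorem~\ref{thm:whole_population} is also valid and gives the slightly sharper bound $1-(\delta/|\mathcal H|)^{1/n}$.
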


The conversion from absolute missed mass to conditional miss rate carries a prevalence penalty.

\begin{corollary}[Prevalence cost for recall]
\label{cor:learning_lpac_pi}
Under the assumptions of Theorem~\ref{thm:learning_finite_r}, assume $\pi=P(Y=1)>0$, where $\pi$ is the prevalence of relevant items. If
$$
n \ge \frac{\log(|\mathcal H|/\delta)}{\pi\varepsilon_L},
$$
then $L(\widehat g)\le \varepsilon_L$ with probability at least $1-\delta$.
\end{corollary}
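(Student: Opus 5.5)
The plan is to derive Corollary~\ref{cor:learning_lpac_pi} directly from Theorem~\ref{thm:learning_finite_r} and the identity $r(g)=\pi L(g)$ from Section~\ref{sec:setup}. No new probabilistic argument is needed. The whole content is a deterministic rescaling on the high-probability event supplied by the theorem.

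\textbf{Step 1.} Invoke Theorem~\ref{thm:learning_finite_r}. Under its hypotheses ($\mathcal H$ finite, some $g^\star\in\mathcal H$ with $r(g^\star)=0$, and $\widehat g$ positive-consistent on a whole-population design sample of size $n$), there is an event $E$ with $\mathbb{P}_{\mathrm{design}}(E)\ge 1-\delta$. On $E$ we have $r(\widehat g)\le \log(|\mathcal H|/\delta)/n$.

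\textbf{Step 2.} Convert missed mass to conditional miss rate. Since $\pi>0$, the definitions in Section~\ref{sec:setup} give $L(g)=r(g)/\pi$ for every measurable $g$. In particular this holds for $g=\widehat g$, because the identity is pointwise in $g$ and does not depend on how $\widehat g$ was selected. On $E$ this yields
\[
L(\widehat g)\le \frac{\log(|\mathcal H|/\delta)}{\pi n}.
\]

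\textbf{Step 3.} Apply the sample-size condition. The hypothesis $n\ge \log(|\mathcal H|/\delta)/(\pi\varepsilon_L)$ is equivalent to $\log(|\mathcal H|/\delta)/(\pi n)\le\varepsilon_L$, since $\pi,n,\varepsilon_L>0$. Hence $L(\widehat g)\le\varepsilon_L$ on $E$, and therefore with probability at least $1-\delta$.

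There is no genuine obstacle here. The only point needing care is that $\widehat g$ is a random, data-dependent generator, so one must apply $L=r/\pi$ as a deterministic identity for each fixed $g$. One then evaluates it at the realised $\widehat g$, instead of treating it as a statement about a fixed generator. Because $\pi$ is a population constant that does not depend on the sample, this substitution is legitimate. I would add a one-line remark that the factor $1/\pi$ is exactly the prevalence cost referred to in the corollary's title.
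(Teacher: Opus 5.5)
Your proposal is correct and matches the paper's proof: it applies the identity $r(g)=\pi L(g)$, which holds for every generator when $\pi>0$, to the high-probability bound $r(\widehat g)\le\log(|\mathcal H|/\delta)/n$ from Theorem~\ref{thm:learning_finite_r}, and then uses the sample-size condition. Your remark that the identity is applied pointwise at the realised $\widehat g$ is a harmless extra clarification that the paper leaves implicit.
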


For infinite classes, the cardinality term is replaced by VC complexity. For a binary class $\mathcal H$, its VC dimension is the largest integer $d$ for which there exist points $x_1,\ldots,x_d$ such that every binary labelling of those points is realised by some $h\in\mathcal H$. If no largest finite $d$ exists, the VC dimension is infinite. Finite VC dimension is the standard combinatorial condition that permits distribution-free uniform convergence over an infinite class.

\begin{theorem}[VC-class design bound]
\label{thm:learning_vc}
Let $\mathcal H$ have VC dimension $d$, suppose there exists $g^\star\in\mathcal H$ with $r(g^\star)=0$, and assume $n\ge d$. From a whole-population design sample of size $n$, let $\widehat g\in\mathcal H$ be any positive-consistent candidate generator. Then, with probability at least $1-\delta$,
$$
r(\widehat g) \le C\,\frac{d\log(en/d)+\log(1/\delta)}{n}
$$
for a universal constant $C$.
\end{theorem}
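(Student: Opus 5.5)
The plan is to reduce Theorem~\ref{thm:learning_vc} to the classical realisable VC bound by passing to the one-sided loss class. For each $g\in\mathcal H$ define the ``miss set''
\[
D_g:=\{x\in\mathcal X:\varphi^\star(x)=1,\ g(x)=0\}=S^\star\cap\{g=0\}.
\]
Then $r(g)=P(D_g)$ and $\widehat r_n(g)$ is the empirical measure of $D_g$ on the design sample. A positive-consistent $\widehat g$ is exactly one whose miss set contains no sample point. It therefore suffices to show the following. With probability at least $1-\delta$, every set in $\mathcal D:=\{D_g:g\in\mathcal H\}$ that avoids all $n$ sample points has $P$-mass at most $\varepsilon_n:=C\,(d\log(en/d)+\log(1/\delta))/n$. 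Existence of $g^\star$ with $r(g^\star)=0$ only guarantees that the set of positive-consistent generators is non-empty; it plays no further role.

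The first step is a complexity bound for $\mathcal D$. The map $g\mapsto D_g$ is the composition of complementation ($\{g=0\}$) with intersection against the fixed set $S^\star$. Complementation preserves the VC dimension. Intersecting with a fixed set cannot increase it, since any set of points shattered by $\{S^\star\cap A\}$ lies inside $S^\star$ and is then shattered by $\{A\}$. Hence $\mathrm{VC}(\mathcal D)\le d$, and Sauer's lemma gives growth function $\Pi_{\mathcal D}(2n)\le(2en/d)^d$ for $n\ge d$. This is where the hypothesis $n\ge d$ enters.

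The second step is the standard $\varepsilon$-net argument of Blumer--Ehrenfeucht--Haussler--Warmuth \cite{BlumerEhrenfeuchtHausslerWarmuth1989}, applied to $\mathcal D$. Symmetrisation with a ghost sample of size $n$ bounds the probability that some $D\in\mathcal D$ with $P(D)>\varepsilon$ misses the sample. For $n\varepsilon\ge 8$, this probability is at most twice the probability that some $D$ misses the first half of a double sample yet hits at least $n\varepsilon/2$ ghost points. Conditioning on the $2n$ points and using random swaps, each fixed trace contributes at most $2^{-n\varepsilon/2}$. A union bound over at most $(2en/d)^d$ traces then gives failure probability at most
\[
2\,(2en/d)^d\,2^{-n\varepsilon/2}.
\]
Setting this equal to $\delta$ and solving for $\varepsilon$ gives $\varepsilon=\frac{2}{n\log 2}\bigl(d\log(2en/d)+\log(2/\delta)\bigr)$. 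This is at most $C\,(d\log(en/d)+\log(1/\delta))/n$ after absorbing the constants: $\log 2$ into $d\log(en/d)\ge d$, and $\log 2$ into the $\log(1/\delta)$ term or into $d$. If $n\varepsilon<8$, the bound holds trivially once $C\ge 8$.

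The main obstacle is the symmetrisation step, in particular keeping measurability under control and ensuring the Chebyshev condition $n\varepsilon\ge 8$ is compatible with the chosen $C$. I would invoke the measurability footnote of this section and cite the textbook form of the realisable VC bound \cite{AnthonyBartlett1999} rather than re-derive it. With that citation, the proof reduces to the VC-dimension inheritance argument in the first step.
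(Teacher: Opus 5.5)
Your proposal is correct and follows essentially the same route as the paper, which also passes to the one-sided loss class $\{(x,y)\mapsto\mathbbm 1\{y=1,\ g(x)=0\}:g\in\mathcal H\}$, observes that its VC dimension is at most $d$ because it is the complement class of $\mathcal H$ restricted to the slice $y=1$, and then applies the standard realisable VC bound to the zero-empirical-loss generator $\widehat g$. The only difference is that you also sketch the symmetrisation and $\varepsilon$-net argument with explicit constants, which the paper simply cites; your closing remark about $n\varepsilon<8$ is better phrased as: for $d\ge 1$ and $C\ge 8$, the target value automatically satisfies $n\varepsilon_n\ge 8$, so that case does not arise.
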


The constant $C$ can be made explicit by choosing a standard realisable VC inequality \cite{VapnikChervonenkis1971,BlumerEhrenfeuchtHausslerWarmuth1989}. Its exact value is not material for the audit certificates, which are computed on an independent held-out certification sample.

\subsection{Agnostic variants}

The realisable bounds above are clean baselines. In text and other ambiguous domains, the design class may contain no candidate generator with zero missed mass. Empirical missed-mass minimisation then gives the agnostic guarantees below, under the rule $\widehat g\in\arg\min_{g\in\mathcal H}\widehat r_n(g)$.

\begin{theorem}[Agnostic finite-class design bound]
\label{thm:learning_finite_agnostic}
Let $\mathcal H$ be finite and let $\widehat g\in\arg\min_{g\in\mathcal H}\widehat r_n(g)$. With probability at least $1-\delta$,
$$
r(\widehat g)\le \inf_{g\in\mathcal H}r(g)+2\sqrt{\frac{\log(2|\mathcal H|/\delta)}{2n}}.
$$
\end{theorem}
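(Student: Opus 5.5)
The plan is the standard uniform-convergence argument for empirical risk minimisation over a finite class, applied to the missed-mass loss. For each fixed $g\in\mathcal H$, the indicators $\mathbbm 1\{Y_i=1,\,g(X_i)=0\}$, $i=1,\ldots,n$, are i.i.d.\ Bernoulli$(r(g))$ under the whole-population design sample. This holds because $\mathcal H$ is fixed before the design sample is drawn, so $g$ does not depend on the data. Consequently $\widehat r_n(g)$ is an average of $n$ independent $[0,1]$-valued variables with mean $r(g)$. Hoeffding's two-sided inequality then gives
\[
\mathbb{P}_{\mathrm{design}}\left\{|\widehat r_n(g)-r(g)|>t\right\}\le 2e^{-2nt^2}.
\]

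The first step is to take a union bound over the $|\mathcal H|$ generators and set $t=\sqrt{\log(2|\mathcal H|/\delta)/(2n)}$. This gives the event
\[
\mathcal E:=\left\{\sup_{g\in\mathcal H}|\widehat r_n(g)-r(g)|\le t\right\},
\]
which holds with probability at least $1-\delta$. The second step works on $\mathcal E$. Because $\mathcal H$ is finite, the infimum $\inf_{g\in\mathcal H}r(g)$ is attained at some $g^\circ\in\mathcal H$. We then chain three inequalities:
\[
r(\widehat g)\le \widehat r_n(\widehat g)+t\le \widehat r_n(g^\circ)+t\le r(g^\circ)+2t.
\]
The first inequality is uniform convergence applied to $\widehat g$. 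The second holds because $\widehat g$ minimises $\widehat r_n$ over $\mathcal H$. The third is uniform convergence applied to $g^\circ$. Substituting the value of $t$ yields exactly the displayed bound.

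The only delicate point is that $\widehat g$ is data-dependent, so we cannot apply Hoeffding's inequality to it directly. The uniform event $\mathcal E$ handles this: on $\mathcal E$ the deviation bound holds for every member of $\mathcal H$ simultaneously, and $\widehat g\in\mathcal H$. Ties in the $\arg\min$ are harmless, since the argument uses only the minimising property of $\widehat g$. In fact the bound only needs the one-sided deviation for $\widehat g$ and the reverse one-sided deviation for $g^\circ$. Using the two-sided union bound is the cleanest route and produces exactly the stated $\log(2|\mathcal H|/\delta)$ term.
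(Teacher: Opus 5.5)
Your proposal is correct and follows essentially the same route as the paper: you take a two-sided Hoeffding bound with a union bound over $\mathcal H$ at deviation $\sqrt{\log(2|\mathcal H|/\delta)/(2n)}$, then chain the uniform deviation bound with the empirical minimality of $\widehat g$. The only cosmetic difference is that you pass through an attaining $g^\circ$, whereas the paper bounds $\widehat r_n(\widehat g)$ by $\inf_{g\in\mathcal H} r(g)+s$ directly.
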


\begin{theorem}[Agnostic VC-class design bound]
\label{thm:learning_vc_agnostic}
Let $\mathcal H$ have VC dimension $d$, assume $n\ge d$, and let $\widehat g\in\arg\min_{g\in\mathcal H}\widehat r_n(g)$. With probability at least $1-\delta$, for a universal constant $C$,
$$
r(\widehat g)\le \inf_{g\in\mathcal H}r(g)+C\sqrt{\frac{d\log(en/d)+\log(1/\delta)}{n}}.
$$
\end{theorem}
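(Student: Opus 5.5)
The plan is the standard uniform-convergence argument for empirical risk minimisation, applied to the loss $\ell_g(x,y)=\mathbbm 1\{y=1,\,g(x)=0\}$. First note the two facts that make this work. For each fixed $g\in\mathcal H$, the indicators $\mathbbm 1\{Y_i=1,\,g(X_i)=0\}$, $i=1,\ldots,n$, are i.i.d.\ Bernoulli$(r(g))$, so $\widehat r_n(g)$ is an unbiased average of $[0,1]$-valued variables with mean $r(g)$. Because $\widehat g$ is selected using the same design sample, a per-$g$ bound is not enough. I will instead establish a uniform deviation event
\[
\mathcal E:=\Big\{\sup_{g\in\mathcal H}|\widehat r_n(g)-r(g)|\le \epsilon_n\Big\},\qquad \epsilon_n:=C'\sqrt{\tfrac{d\log(en/d)+\log(1/\delta)}{n}},
\]
holding with probability at least $1-\delta$.

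On $\mathcal E$ the conclusion is the usual two-line comparison. Let $g_\gamma\in\mathcal H$ satisfy $r(g_\gamma)\le\inf_{g\in\mathcal H} r(g)+\gamma$ for arbitrary $\gamma>0$; an approximate minimiser is needed because the infimum need not be attained. Then
\[
r(\widehat g)\le \widehat r_n(\widehat g)+\epsilon_n\le \widehat r_n(g_\gamma)+\epsilon_n\le r(g_\gamma)+2\epsilon_n .
\]
Letting $\gamma\downarrow0$ gives the claim with $C=2C'$. This is exactly the structure of Theorem~\ref{thm:learning_finite_agnostic}. The only change is that the Hoeffding-plus-union bound over $|\mathcal H|$ is replaced by a VC uniform bound.

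The main step is the uniform deviation bound, and the point to check is that the loss class inherits VC dimension $d$. The class of sets $\{(x,y):y=1,\,g(x)=0\}$, $g\in\mathcal H$, is the intersection of the fixed set $\{y=1\}$ with the complements of the sets $\{g=1\}$, lifted to $\mathcal X\times\{0,1\}$. Complementation preserves shatter sets. Intersecting with a fixed set cannot increase the number of distinct traces on any finite sample, since points with $y=0$ all receive the label $0$. So the growth function of the loss class is at most the growth function of $\mathcal H$, which by Sauer--Shelah is at most $(en/d)^d$ for $n\ge d$. This is where the assumption $n\ge d$ enters.

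The classical Vapnik--Chervonenkis inequality~\cite{VapnikChervonenkis1971} then gives
\[
\mathbb P_{\mathrm{design}}\{\sup_g|\widehat r_n(g)-r(g)|>t\}\le 4\,\Pi(2n)e^{-nt^2/8},
\]
proved by symmetrisation with a ghost sample. Substituting $\Pi(2n)\le(2en/d)^d$ and solving for $t$ yields $\epsilon_n$ of the stated form, with $\log(2en/d)\le 2\log(en/d)$ since $en/d\ge e$. I expect only bookkeeping of constants to be delicate. The measurability footnote covers the supremum, and the constant $C$ is left unspecified, as the statement allows.
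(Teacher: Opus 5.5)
Your proposal is correct and follows essentially the paper's argument: the paper also proves a uniform deviation bound $\sup_{g\in\mathcal H}|r(g)-\widehat r_n(g)|\le \epsilon_n$ for the missed-mass loss class, whose VC complexity is at most that of $\mathcal H$ as argued in the realisable proof, and then repeats the two-line comparison from the finite-class agnostic proof, absorbing the factor two into $C$. Your version simply writes out the growth-function, Sauer--Shelah and VC-inequality constant bookkeeping that the paper cites without detail.
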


These agnostic design bounds are often the operationally relevant ones. They still describe the design-stage selector and do not replace held-out certification of the realised candidate generator. We give the proofs and brief lower-bound context in Appendix~\ref{app:learning}.

Theorems~\ref{thm:learning_finite_r} and~\ref{thm:learning_vc} control the design procedure, not the realised generator by itself. They give high-probability guarantees over the random design sample. A data-dependent certificate for the selected generator still requires held-out certification labels, as in the earlier audit certificates.

\subsection{Burden-constrained selection}

The bicriteria formulation in Section~\ref{sec:bicriteria} suggests a Neyman--Pearson design rule: minimise review burden subject to a missed-mass constraint. Let
\begin{equation}
\label{eq:np_learning_problem}
\inf_{g\in\mathcal H} B(g) \quad \text{subject to} \quad r(g)\le \varepsilon
\end{equation}
be the population problem. The design rule is the empirical version of this optimisation problem: it replaces the population probabilities by their sample estimates and chooses a candidate generator minimising the resulting empirical objective:
\begin{equation}
\label{eq:np_learning_empirical}
\min_{g\in\mathcal H} \widehat B_n(g) \quad \text{subject to} \quad \widehat r_n(g)\le \varepsilon-\tau_n,
\end{equation}
where $\tau_n$ is a uniform-convergence slack.

\begin{theorem}[Neyman--Pearson design bound]
\label{thm:np_learning}
Let $\mathcal H$ have VC dimension $d$ and assume $n\ge d$. There is a universal constant $C>0$ such that the following holds. Let
$$
\tau_n=C\sqrt{\frac{d\log(en/d)+\log(2/\delta)}{n}}.
$$
Let $\widehat g$ solve~\eqref{eq:np_learning_empirical}, with $\widehat g$ defined arbitrarily in $\mathcal H$ if the empirical feasible set is empty. On the event used in the proof, the feasible set is non-empty. If there exists $g\in\mathcal H$ with $r(g)\le\varepsilon-2\tau_n$, then, with probability at least $1-\delta$,
\begin{enumerate}[label=(\roman*),leftmargin=2em]
    \item $r(\widehat g)\le\varepsilon$.
    \item $B(\widehat g)\le\inf\{B(g):g\in\mathcal H,\ r(g)\le\varepsilon-2\tau_n\}+2\tau_n$.
\end{enumerate}
\end{theorem}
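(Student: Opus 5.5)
The plan is the standard uniform-convergence argument for constrained empirical risk minimisation, applied jointly to the two classes of indicator functions that define $\widehat r_n$ and $\widehat B_n$.

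\textbf{Step 1: a single good event.} Define the loss classes
\[
\mathcal F_r=\{(x,y)\mapsto \mathbbm 1\{y=1,\,g(x)=0\}:g\in\mathcal H\},\qquad
\mathcal F_B=\{x\mapsto \mathbbm 1\{g(x)=1\}:g\in\mathcal H\}.
\]
Each class has VC dimension at most $d$. This holds because each is the image of $\mathcal H$ under a fixed map: either complementation, or complementation followed by intersection with the fixed set $\{y=1\}$. Neither operation can increase the shatter coefficients.

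Apply the agnostic uniform deviation inequality used for Theorem~\ref{thm:learning_vc_agnostic} to each class at level $\delta/2$. By a union bound, on an event $\mathcal E$ of probability at least $1-\delta$,
\[
\sup_{g\in\mathcal H}|\widehat r_n(g)-r(g)|\le\tau_n,
\qquad
\sup_{g\in\mathcal H}|\widehat B_n(g)-B(g)|\le\tau_n .
\]
The factor $\log(2/\delta)$ in $\tau_n$ absorbs this split, and $C$ is taken to be the universal constant of that inequality. The only care needed is to check that the constant and the $\log(2/\delta)$ term match after the split. This bookkeeping is the one place I expect friction. If necessary I would enlarge $C$, which is allowed because $C$ is only claimed to be universal.

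\textbf{Step 2: feasibility and (i).} All remaining reasoning is on $\mathcal E$. Let $g^\dagger$ satisfy $r(g^\dagger)\le\varepsilon-2\tau_n$. Then
\[
\widehat r_n(g^\dagger)\le r(g^\dagger)+\tau_n\le\varepsilon-\tau_n,
\]
so the empirical feasible set is non-empty and $\widehat g$ is a genuine minimiser. This also justifies the phrase ``on the event used in the proof''. Since $\widehat g$ is empirically feasible,
\[
r(\widehat g)\le\widehat r_n(\widehat g)+\tau_n\le\varepsilon,
\]
which gives (i).

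\textbf{Step 3: (ii).} Take any $g$ with $r(g)\le\varepsilon-2\tau_n$. By Step 2, $g$ is empirically feasible. Optimality of $\widehat g$ then gives
\[
B(\widehat g)\le\widehat B_n(\widehat g)+\tau_n\le\widehat B_n(g)+\tau_n\le B(g)+2\tau_n .
\]
Taking the infimum over all such $g$ yields (ii).

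The argument is routine once $\mathcal E$ is established. The main obstacle is Step 1's VC-dimension transfer to $\mathcal F_r$, because it involves the label $y$. I would handle this by noting that the trace of $\mathcal F_r$ on any sample equals the trace of $\{g=0\}$ on the positive points, with zeros forced on the negative points. Hence the growth function of $\mathcal F_r$ is bounded by that of $\mathcal H$, and Sauer's lemma applies with the same $d$.
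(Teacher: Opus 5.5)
Your proposal is correct and follows the paper's proof essentially step for step. You build a single event $\mathcal E$ from VC uniform convergence on the burden and missed-mass classes (each of VC dimension at most $d$, with points having $y=0$ forced to $0$ in the latter) with $\delta/2$ per class, which yields exactly the $\log(2/\delta)$ in $\tau_n$, so no enlargement of $C$ is needed; you then get feasibility of any $g$ with $r(g)\le\varepsilon-2\tau_n$, part (i) from empirical feasibility of $\widehat g$, and part (ii) from $B(\widehat g)\le\widehat B_n(\widehat g)+\tau_n\le\widehat B_n(g)+\tau_n\le B(g)+2\tau_n$. The only slip is cosmetic: the burden class $\{x:g(x)=1\}$ is $\mathcal H$ itself rather than its complement, which leaves the VC bound unchanged.
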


Theorem~\ref{thm:np_learning} is the design-stage analogue of Neyman--Pearson classification \cite{ScottNowak2005,CannonEtAl2002,RigolletTong2011}. It applies when the analyst wants to tune a candidate generator before final certification, balancing observable review burden against a missed-mass constraint. The empirical rule in~\eqref{eq:np_learning_empirical} searches for a low-burden candidate generator, but enforces the stricter design constraint $\widehat r_n(g)\le \varepsilon-\tau_n$ to leave room for uniform-convergence error. The theorem then gives two guarantees: the selected candidate generator satisfies the target missed-mass constraint $r(\widehat g)\le\varepsilon$, and its burden is within $2\tau_n$ of the best population burden among candidate generators satisfying the margin condition $r(g)\le\varepsilon-2\tau_n$.

\subsection{Sparse unions of base detectors}

Candidate generators are often unions of weak detectors, such as keyword rules, regular-expression templates, embedding-neighbourhood filters, or score thresholds. If $\mathcal C$ is a base detector class, define
$$
\mathcal C^{\cup_k}=\left\{\max_{j\le k} c_j: c_j\in\mathcal C\right\},
$$
with repetitions allowed. Because the $c_j$ are $\{0,1\}$-valued, $\max_{j\le k} c_j$ is the indicator of $\bigcup_{j\le k}\{c_j=1\}$, so $\mathcal C^{\cup_k}$ is the class of $k$-fold unions of sets in $\mathcal C$. The standard VC composition bound \cite{BlumerEhrenfeuchtHausslerWarmuth1989} gives
$$
\mathrm{VC}(\mathcal C^{\cup_k})\le 2\,\mathrm{VC}(\mathcal C)\,k\log_{2}(3k).
$$
The logarithm here is to base two, and the $k\log k$ order is unavoidable: there are base classes with $\mathrm{VC}(\mathcal C^{\cup_k})=\Omega(d_{\mathcal C}\,k\log k)$~\cite{EisenstatAngluin2007}, so no composition bound of order $d_{\mathcal C}\,k$ holds in general.

\begin{theorem}[Sparse-union design bound]
\label{thm:sparse_union_learning}
Let $\mathcal C$ have VC dimension $d_{\mathcal C}\ge 1$, let $k\ge 1$ be an integer, and let $\mathcal H=\mathcal C^{\cup_k}$. Suppose there exists $g^\star\in\mathcal H$ with $r(g^\star)=0$, and assume $n\ge D$, where $D=2\,d_{\mathcal C}\,k\log_{2}(3k)$ bounds the VC dimension of $\mathcal H$ (the base of the logarithm, and the replacement of $\log(en/D)$ by $\log n$ in the displayed rate, affect only the constant $C$). From a whole-population design sample of size $n$, let $\widehat g\in\mathcal H$ be any positive-consistent candidate generator. Then, with probability at least $1-\delta$,
$$
r(\widehat g) \le C\,\frac{d_{\mathcal C}k\log(3k)\log n+\log(1/\delta)}{n}.
$$
\end{theorem}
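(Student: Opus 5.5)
The plan is to reduce Theorem~\ref{thm:sparse_union_learning} to the realisable VC design bound, Theorem~\ref{thm:learning_vc}, applied to $\mathcal H=\mathcal C^{\cup_k}$. The first step is to record that $\mathrm{VC}(\mathcal H)\le D=2\,d_{\mathcal C}\,k\log_2(3k)$ by the composition bound quoted just before the statement. The class $\mathcal H$ contains $g^\star$ with $r(g^\star)=0$, and $\widehat g$ is positive-consistent. So every hypothesis of Theorem~\ref{thm:learning_vc} holds once we use the bound $D$ in place of the exact dimension $d$.

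One subtlety is that Theorem~\ref{thm:learning_vc} is stated with the exact VC dimension $d$. I would note that the realisable VC inequality it rests on is monotone in the complexity parameter. The map $d\mapsto d\log(en/d)$ is nondecreasing for $d\le n$, so any upper bound $D\le n$ may be substituted; this is why the assumption $n\ge D$ is imposed. Equivalently, one can rerun the standard argument behind Theorem~\ref{thm:learning_vc}. The relevant loss class is $\{x\mapsto \mathbbm 1\{\varphi^\star(x)=1,g(x)=0\}:g\in\mathcal H\}$. It is the trace of $\{g=0\}$ on the fixed set $S^\star$, so its VC dimension is at most $\mathrm{VC}(\mathcal H)\le D$. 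Sauer's lemma then bounds its growth function by $(en/D)^D$. The symmetrisation and ghost-sample argument for consistent hypotheses gives, with probability at least $1-\delta$,
\[
r(\widehat g)\le C_0\,\frac{D\log(en/D)+\log(1/\delta)}{n}.
\]

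The final step is bookkeeping. Since $D\ge 1$ and $n\ge D$, we have $\log(en/D)\le 1+\log n\le 2\log n$ once $n\ge 3$; for $n\le 2$ the bound is trivial after enlarging the constant, since $r\le 1$. Also $D=2d_{\mathcal C}k\log(3k)/\log 2$. Substituting both gives
\[
r(\widehat g)\le C\,\frac{d_{\mathcal C}k\log(3k)\log n+\log(1/\delta)}{n}
\]
with $C$ absorbing $C_0$, the factor $2/\log 2$ and the factor $2$. The main obstacle I anticipate is the monotonicity issue: justifying cleanly that Theorem~\ref{thm:learning_vc} may be invoked with the upper bound $D$ rather than the exact dimension. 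I would handle it by the Sauer-lemma rerun sketched above, which uses only $\mathrm{VC}\le D$ together with the growth-function bound $(en/D)^D$, valid for $n\ge D$.
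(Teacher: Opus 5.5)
Your proposal is correct and follows the paper's proof. Like the paper, you bound $\mathrm{VC}(\mathcal C^{\cup_k})$ by $D$ through the union-composition bound, note that the missed-mass loss class has VC dimension at most that of $\mathcal H$, and invoke the realisable bound of Theorem~\ref{thm:learning_vc}; your monotonicity of $d\mapsto d\log(en/d)$ for $d\le n$ and the $\log(en/D)\le 2\log n$ step simply make explicit what the paper absorbs into $C$. One small correction: the case $n\le 2$ never arises, since $n\ge D\ge 2\log_2 3>3$, so drop that remark (at $n=1$ it would in fact fail, because $\log n=0$ and the right-hand side $C\log(1/\delta)$ can be below $1$).
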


\begin{remark}
If in addition some $g^\star\in\mathcal H$ satisfies $P(g^\star(X)\ne Y)=0$ and $\widehat g$ is label-consistent on the design sample, then the same VC argument applied to the symmetric-difference loss controls the total error $P(\widehat g(X)\ne Y)$ at the same rate, and hence bounds both $r(\widehat g)$ and the false-positive mass $b(\widehat g)$. We use this statement when precision, and not only review burden, must
be certified. The precision $1-\frac{b(g)}{B(g)}$, defined when $B(g)>0$, depends on both the false-positive mass $b(g)$ and the burden $B(g)$. In the finite-corpus model, $B(g)$ is known exactly from the generator values. In the population model, it must instead be estimated or lower-bounded using independent unlabelled
data. Thus, an upper confidence bound $\overline b(g)$ and a positive lower confidence bound $\underline B(g)$ give the certificate $\operatorname{Precision}(g)\ge
1-\frac{\overline b(g)}{\underline B(g)}$. The missed-mass bound alone does not constrain $b(g)$.
\end{remark}

\section{Worked example: audit planning and certification}
\label{sec:worked_example}

We now illustrate how to use the preceding certificates as an audit plan and reporting template. The numbers are deterministic calculations from the stated audit outcomes.

\needspace{14\baselineskip}
\paragraph{Practical audit design recipe.}
A typical certification workflow is:
\begin{center}
\fbox{\begin{minipage}{0.92\linewidth}
\begin{enumerate}[leftmargin=2em]
    \item Fix the candidate generator, candidate-generator family, or prefix sequence before opening the certification labels.
    \item Compute unlabelled quantities such as burden $B(g)$, excluded-pool mass $p_0(g)$, and pool memberships.
\item Choose the acceptable missed-mass threshold $\varepsilon$ and the overall certificate failure probability $\delta$. If several candidate generators are to be certified simultaneously, allocate failure probabilities $\delta_g>0$ satisfying
\[
\sum_{g\in\mathcal G}\delta_g\le\delta,
\]
for example, $\delta_g=\delta/|\mathcal G|$.
   \item Audit according to the sampling design matched to the desired certificate: excluded pool for absolute missed mass, both sides or a population sample for recall, and a shared reference sample for pre-specified prefixes.
    \item Convert the audit counts into simultaneous upper bounds $U(g)$.
    \item Report the least burdensome pre-specified candidate generator satisfying $U(g)\le \varepsilon$.
\end{enumerate}
\end{minipage}}
\end{center}

Suppose a corpus contains $N=100{,}000$ sentences. A candidate generator $g$ flags $30{,}000$ sentences for review and excludes $70{,}000$, so the observed burden is
$$
B_N(g)=0.30, \qquad p_{0,N}(g)=0.70.
$$
The audit target is a $95\%$ certificate. We reserve $\delta_0=0.025$ for the excluded-pool missed-mass certificate and $\delta_1=0.025$ for the included-pool lower bound used in the recall calculation.

\paragraph{Absolute missed mass.}
Draw $n_0=300$ items uniformly from the excluded pool and label them. If no relevant item is found ($K_0=0$), then
$$
U_{300}(0,0.025)=1-0.025^{1/300}\approx0.01222.
$$
The excluded-pool certificate gives
$$
r(g)\le 0.70\,U_{300}(0,0.025)\le 0.00856.
$$
Equivalently, in a corpus of $100{,}000$ items, the corresponding bound is approximately $855.47$ missed relevant sentences, or $856$ after upward rounding to a whole sentence. Thus the same audit certifies that the absolute missed relevant mass is below one percent of the corpus. The audit draws uniformly without replacement from a finite excluded pool, yet the calculation uses the binomial bound $U_{300}$. With zero observed misses this binomial bound is conservative under sampling without replacement. The exact hypergeometric inversion of Theorem~\ref{thm:excluded_hypergeometric} gives the slightly smaller finite-corpus bound $M_U=853$, that is $r(g)\le 0.00853$.

The certificate is sensitive to the observed audit count. Keeping $n_0=300$, $p_{0,N}(g)=0.70$, $N_0=70{,}000$, and $\delta_0=0.025$ fixed, the exact finite-corpus (hypergeometric) certified missed-mass bound becomes:
\begin{center}
\renewcommand{\arraystretch}{1.15}
\begin{tabular}{ccc}
\hline
Excluded-pool positives $K_0$ & Exact $M_U(K_0,0.025)$ & Certified upper bound on $r(g)$ \\
\hline
$0$ & $853$ & $0.00853$ \\
$1$ & $1{,}287$ & $0.01287$ \\
$2$ & $1{,}668$ & $0.01668$ \\
\hline
\end{tabular}
\end{center}
This illustrates why the zero-miss case is operationally valuable: even one or two missed relevant items in the audit can move the certificate above a one-percent missed-mass target.

\paragraph{Two-pool recall.}
To translate the missed-mass certificate into a recall statement, audit the included pool as well. Suppose $n_1=500$ included-pool items are labelled uniformly without replacement and $K_1=100$ are relevant. The exact finite-corpus lower inversion in Corollary~\ref{cor:two_pool_hypergeometric} gives
$$
M_L(100,0.025)=4{,}983.
$$
The included relevant mass is therefore bounded below by $4{,}983/100{,}000=0.04983$. Combining this lower bound with the exact excluded-pool upper bound $M_U=853$ gives the two-pool recall certificate
$$
\mathrm{Recall}(g)
=\frac{M_1(g)}{M_1(g)+M_0(g)}
\ge \frac{4{,}983}{4{,}983+853}\ge 0.8538.
$$
The reported statement is therefore: with simultaneous confidence at least $95\%$, the candidate generator has burden $30\%$, absolute missed relevant mass at most $0.853\%$ of the corpus, and recall at least $85.3\%$.

More generally, any finite collection of the certificates in this paper can be reported jointly by allocating error shares that sum to $\delta$ and taking a union bound, so a single audit can carry burden, missed-mass, recall, and stress-test statements about one fixed candidate generator at a stated joint confidence. Once these simultaneous bounds have been constructed, they may be compared with any number of targets, including targets chosen after the audit, without any further multiplicity adjustment. A fresh error allocation is required only for additional certificates concerning candidate generators or estimands not covered by the original jointly certified family.

\paragraph{Prefix choice.}
Now suppose the analyst pre-specifies three prefix unions $g_1,g_2,g_3$ before opening the certification audit. The audit goal is to choose the least burdensome prefix whose certified missed mass is at most $1\%$ of the corpus. Use separate excluded-pool audits with $n_m=300$ for each prefix and allocate $\delta_m=0.05/3$. If each excluded-pool audit observes no relevant item, then
$$
U_{300}(0,0.05/3)\approx0.01356.
$$
The burdens below are illustrative, chosen only to make the example concrete. The excluded masses are their complements $p_{0,N}(g_m)=1-B_N(g_m)$, and each certified bound is the excluded mass times this factor, $r(g_m)=p_{0,N}(g_m)\,\eta(g_m)\le p_{0,N}(g_m)\,U_{300}(0,0.05/3)$. The simultaneous certificates are:

\begin{center}
\renewcommand{\arraystretch}{1.2}
\begin{tabular}{ccccc}
\hline
Prefix & Burden $B_N(g_m)$ & Excluded mass $p_{0,N}(g_m)$ & $K_m$ & Certified $r(g_m)\le p_{0,N}(g_m)\,U_{300}$ \\
\hline
$g_1$ & $0.15$ & $0.85$ & $0$ & $0.0116$ \\
$g_2$ & $0.30$ & $0.70$ & $0$ & $0.00949$ \\
$g_3$ & $0.38$ & $0.62$ & $0$ & $0.00841$ \\
\hline
\end{tabular}
\end{center}

The valid post-audit choice is $g_2$: it is the least burdensome pre-specified prefix whose simultaneous certificate meets the $1\%$ missed-mass target. The third prefix gives a tighter certificate, but its additional eight percentage points of review burden are not needed for this target. This is the bicriteria formulation in operational form: the certificate controls missed mass, and burden selects among certified candidates.

\paragraph{Comparison with a whole-population audit.}
To match the bound $r(g)\le 0.00853$ from the excluded-pool calculation using a whole-population audit, sampling i.i.d.\ from $P$ rather than from $\{g=0\}$, with zero observed events and the same $\delta=0.025$ confidence allocation,
$$
n\ge \frac{\log(1/0.025)}{\log\big(1/(1-0.00853)\big)}\approx 431
$$
labels are required. The excluded-pool design achieves the same bound with $n_0=300$. The ratio $300/431\approx 0.70=p_{0,N}(g)$ matches the planning identity from Section~\ref{sec:samplesize}: at zero observed events, the whole-population audit needs about $1/p_0(g)$ times as many labels as the excluded-pool audit at the same target.

The excluded-pool gain in this example is therefore modest, because the excluded pool is large. The gain grows as the excluded pool shrinks: a hypothetical later candidate generator that flagged $80\%$ of the corpus, leaving $p_{0,N}(g)=0.20$, would require about five times more whole-population labels than excluded-pool labels for the same target. Beyond labelling cost, the excluded-pool design has an interpretive advantage independent of $p_0(g)$: every audited item is drawn from the only region in which missed relevant items can occur.

\section{Conclusion and future directions}
\label{sec:discussion}

This paper developed finite-sample certificates for missed relevant mass and
recall in high-recall candidate generation. The central point is that the
excluded pool is the only place where missed relevant items can occur. Auditing
that pool directly gives exact binomial and hypergeometric certificates for
missed mass, and these can be converted into recall guarantees when suitable
denominator information is available. The same framework also supports
simultaneous certification of pre-specified candidate-generator families, prefix
sequences, component decompositions, and stress tests.

The guarantees rely on a strict separation between design and certification.
Candidate generators may be built, tuned, and ordered using design data, but the
labels used for final certification must not have influenced the candidate
generator being certified. Once certification labels have been opened, they can
guide a later design round, but a fresh certification audit, or a pre-specified
sequential-validity device, is needed for the next certificate.

\paragraph{Stratified audits.}
The i.i.d.\ excluded-pool audit can be replaced by a stratified design that
splits the excluded pool into observable subgroups, or strata, and assigns more
audit labels to strata more likely to contain missed relevant items. A stratum
might group items by the candidate generator's underlying score, by document
category, or by time period. For a partition of the excluded pool into strata
$A_s$ with known masses $w_s=P(A_s\mid g(X)=0)$ and stratum-conditional counts
$K_0^{(s)}\sim\mathrm{Bin}\big(n_0^{(s)},\eta_s(g)\big)$, where
$\eta_s(g)=P(Y=1\mid X\in A_s,\,g(X)=0)$, a union bound gives, with probability
at least $1-\sum_s\delta_s$,
$$
\eta(g)=\sum_s w_s\,\eta_s(g)\le\sum_s w_s\,
U_{n_0^{(s)}}\big(K_0^{(s)},\delta_s\big),
$$
and hence the corresponding bound on $r(g)=p_0(g)\eta(g)$. This permits
oversampling of strata expected to contain misses while retaining a
distribution-free certificate. The finite-corpus version replaces the binomial
inversions by the hypergeometric inversions of
Theorem~\ref{thm:excluded_hypergeometric} within each stratum.

\paragraph{Label noise.}
The certification results above treat audit labels as ground truth. For the i.i.d.\ excluded-pool missed-mass certificate, suppose instead that the oracle returns a noisy label $\widetilde Y$ with per-item sensitivity
\[
s(X):=P(\widetilde Y=1\mid Y=1,X)\ge s_{\min}
\]
for a known $s_{\min}\in(0,1]$, and that the noisy labels are realised independently across audited items conditional on the audited items. Writing
\[
\widetilde\eta(g):=P(\widetilde Y=1\mid g(X)=0),
\]
we have
\[
\widetilde\eta(g)\ge s_{\min}\eta(g).
\]
Moreover,
\[
\widetilde K_0\sim\operatorname{Bin}\bigl(n_0,\widetilde\eta(g)\bigr),
\]
so the Clopper--Pearson bound gives, with probability at least $1-\delta$,
\[
r(g)\le p_0(g)\min\left\{1,\frac{U_{n_0}(\widetilde K_0,\delta)}{s_{\min}}\right\}.
\]
False-positive audit labels can only increase $\widetilde\eta(g)$ and therefore make this upper bound more
conservative.

This argument is specific to the i.i.d.\ excluded-pool missed-mass certificate. The two-pool recall certificate also requires a valid lower bound on
\[
\eta_1(g)=P(Y=1\mid g(X)=1).
\]
A sensitivity floor alone does not provide such a lower bound, because false-positive audit labels may inflate the observed positive rate in the included pool. Extending the two-pool recall certificate therefore
requires an additional assumption, such as a known specificity floor, or adjudicated labels. Likewise, Theorem~\ref{thm:auditable_amplification}, Proposition~\ref{prop:direct_miss_rate}, and
Theorem~\ref{thm:stress_test} sample conditional on true relevance $Y=1$. Replacing true relevance labels by noisy labels changes that conditional sampling distribution, so those results require additional
noise assumptions or adjudicated labels.

\paragraph{Further directions.}
Several directions remain open. One is to develop sharper sequential methods
for iterative audit-and-redesign workflows, so that certification can proceed
over multiple rounds without discarding information. A second is to combine
excluded-pool certification with richer models of label noise, adjudication, and
reviewer disagreement. A third is to extend the stress-test certificates to
structured perturbation spaces, where the declared perturbation mechanism
\(\mathcal A\) is replaced by geometric or semantic neighbourhoods.

The component-union perspective also raises computational questions. Since
coverage is naturally submodular in the selected components, one can ask for
efficient algorithms for choosing low-burden, high-coverage unions under
cardinality or cost constraints. This connects the certification problem to
submodular optimisation and knapsack-type selection~\cite{NemhauserWolseyFisher1978,
Sviridenko2004}, but we leave a systematic treatment to future work.

The overall message is that high-recall candidate generation can be certified
directly, without requiring a fully specified downstream classifier or a
full-corpus relevance audit. The price is discipline in the audit design: the candidate generator,
or the pre-specified family from which it is selected, and the audit
rule must be fixed before the certification labels are examined.

\appendix

\section{Learning-theoretic proofs and extended statements}
\label{app:learning}

This appendix gives the proofs for Section~\ref{sec:learning}. They are standard applications of finite-class union bounds, VC uniform convergence, and Neyman--Pearson plug-in selection. They are included to make the audit workflow self-contained.

\subsection{Proof of the finite-class design bound}

\begin{proof}[Proof of Theorem~\ref{thm:learning_finite_r}]
Set $\varepsilon=\log(|\mathcal H|/\delta)/n$. Fix $g\in\mathcal H$ with $r(g)>\varepsilon$. On a whole-population design sample, the probability that $g$ is positive-consistent is
$$
\mathbb{P}_{\mathrm{design}}\{\widehat r_n(g)=0\}=(1-r(g))^n\le e^{-n\varepsilon}.
$$
Therefore
$$
\mathbb{P}_{\mathrm{design}}\left\{
\exists g\in\mathcal H:
\widehat r_n(g)=0,\ r(g)>\varepsilon
\right\}
\le |\mathcal H|e^{-n\varepsilon}=\delta.
$$
On the complement of this event, every positive-consistent candidate generator in $\mathcal H$ has missed mass at most $\varepsilon$. The assumption that some $g^\star\in\mathcal H$ has $r(g^\star)=0$ ensures that at least one positive-consistent member exists.
\end{proof}

\begin{proof}[Proof of Corollary~\ref{cor:learning_lpac_pi}]
The identity $r(g)=\pi L(g)$ holds for every candidate generator when $\pi=P(Y=1)>0$. Theorem~\ref{thm:learning_finite_r} gives $r(\widehat g)\le \log(|\mathcal H|/\delta)/n$ with probability at least $1-\delta$. The displayed lower bound on $n$ therefore implies $L(\widehat g)\le\varepsilon_L$.
\end{proof}

The exact zero-observed-miss calculation gives the alternative sufficient condition
$$
n\ge \frac{\log(|\mathcal H|/\delta)}{\log(1/(1-\pi\varepsilon_L))},
$$
which is slightly sharper for non-small $\pi\varepsilon_L$.

\subsection{Proof of the VC-class design bound}

\begin{proof}[Proof of Theorem~\ref{thm:learning_vc}]
Consider the loss class
$$
\mathcal L_r=\{(x,y)\mapsto \mathbbm 1\{y=1,\ g(x)=0\}:g\in\mathcal H\}.
$$
Its VC dimension is at most the VC dimension $d$ of $\mathcal H$, since it is the class of complements of candidate-generator sets restricted to the slice $y=1$. The standard realisable VC bound \cite{VapnikChervonenkis1971,BlumerEhrenfeuchtHausslerWarmuth1989} says that, with probability at least $1-\delta$, every member of $\mathcal L_r$ with zero empirical loss has population loss at most
$$
C\,\frac{d\log(en/d)+\log(1/\delta)}{n}.
$$
A positive-consistent $\widehat g$ has zero empirical loss in $\mathcal L_r$, so the result follows.
\end{proof}

\subsection{Agnostic variants and lower bounds}

\begin{proof}[Proof of Theorem~\ref{thm:learning_finite_agnostic}]
Write $s=\sqrt{\log(2|\mathcal H|/\delta)/(2n)}$. For $\widehat g\in\arg\min_{g\in\mathcal H}\widehat r_n(g)$, Hoeffding's inequality and a union bound over $\mathcal H$ give, with probability at least $1-\delta$,
$$
\sup_{g\in\mathcal H}|r(g)-\widehat r_n(g)|\le s.
$$
On that event $r(\widehat g)\le \widehat r_n(\widehat g)+s$. Since $\widehat g$ minimises $\widehat r_n$ and $\widehat r_n(g)\le r(g)+s$ for every $g$, we have $\widehat r_n(\widehat g)\le \inf_{g\in\mathcal H}r(g)+s$, and the stated bound $r(\widehat g)\le\inf_{g\in\mathcal H}r(g)+2s$ follows.
\end{proof}

This controls missed mass alone. Burden can be handled by class design, by a separate burden constraint, or by the Neyman--Pearson rule of Theorem~\ref{thm:np_learning}.

\begin{proof}[Proof of Theorem~\ref{thm:learning_vc_agnostic}]
The same argument with the uniform-convergence inequality for a VC class of dimension $d$ in place of the finite-class union bound gives, with probability at least $1-\delta$,
$$
\sup_{g\in\mathcal H}|r(g)-\widehat r_n(g)|\le C\sqrt{\frac{d\log(en/d)+\log(1/\delta)}{n}},
$$
and hence $r(\widehat g)\le \inf_{g\in\mathcal H}r(g)+C\sqrt{(d\log(en/d)+\log(1/\delta))/n}$ after absorbing the factor two into $C$.
\end{proof}

The confidence term cannot be improved in distribution-free zero-miss certification. Any procedure that observes zero misses in $n$ Bernoulli trials and certifies miss probability at most $\varepsilon$ with confidence $1-\delta$ must have
$$
n\ge \frac{\log(1/\delta)}{\log(1/(1-\varepsilon))}.
$$
Otherwise, under a Bernoulli law with miss probability $p>\varepsilon$, the zero-miss event has probability $(1-p)^n>\delta$ for $p$ close enough to $\varepsilon$, making the certificate invalid. The class-complexity term $(\log|\mathcal H|)/\varepsilon$ is also unavoidable in the worst case: there are finite classes of cardinality $|\mathcal H|$ with VC dimension of order $\log |\mathcal H|$, and the standard PAC lower bound of Ehrenfeucht, Haussler, Kearns, and Valiant applies to such classes~\cite{EHKV1989}.

\subsection{Proof of the Neyman--Pearson design bound}

\begin{proof}[Proof of Theorem~\ref{thm:np_learning}]
The same VC dimension controls both uniform deviations used below. For \(B(g)\),
the relevant class is
\[
\{x:g(x)=1\},\qquad g\in\mathcal H,
\]
which has VC dimension \(d\). For \(r(g)\), the relevant class is
\[
\{(x,y):y=1,\ g(x)=0\},\qquad g\in\mathcal H .
\]
This class has VC dimension at most \(d\), since points with \(y=0\) can never
be labelled \(1\), while on points with \(y=1\) it is the complement class of
\(\mathcal H\). Thus a single VC uniform-convergence bound, with a union bound
over the two classes, controls both \(r\) and \(B\).

By the definition of $\tau_n$, with probability at least $1-\delta$ the event
$$
\mathcal E=\left\{\sup_{g\in\mathcal H}|r(g)-\widehat r_n(g)|\le\tau_n,
\quad \sup_{g\in\mathcal H}|B(g)-\widehat B_n(g)|\le\tau_n\right\}
$$
holds. Work on $\mathcal E$. The hypothesis supplies a $g$ with $r(g)\le\varepsilon-2\tau_n$, hence $\widehat r_n(g)\le r(g)+\tau_n\le\varepsilon-\tau_n$, so the empirical feasible set $\{g:\widehat r_n(g)\le\varepsilon-\tau_n\}$ is non-empty and $\widehat g$ is well-defined. Since $\widehat g$ is empirically feasible, meaning $\widehat r_n(\widehat g)\le\varepsilon-\tau_n$,
$$
r(\widehat g)\le \widehat r_n(\widehat g)+\tau_n\le \varepsilon.
$$
Now let $g^\circ$ be any candidate generator satisfying $r(g^\circ)\le\varepsilon-2\tau_n$. Then
$$
\widehat r_n(g^\circ)\le r(g^\circ)+\tau_n\le\varepsilon-\tau_n,
$$
so $g^\circ$ is empirically feasible. Since $\widehat g$ minimises empirical burden over the empirical feasible set,
$$
\widehat B_n(\widehat g)\le \widehat B_n(g^\circ)\le B(g^\circ)+\tau_n.
$$
Converting back to population burden gives
$$
B(\widehat g)\le \widehat B_n(\widehat g)+\tau_n\le B(g^\circ)+2\tau_n.
$$
Taking the infimum over all $g^\circ$ with $r(g^\circ)\le\varepsilon-2\tau_n$ proves the claim.
\end{proof}

\subsection{Proof of the sparse-union design bound}

\begin{proof}[Proof of Theorem~\ref{thm:sparse_union_learning}]
The class $\mathcal C^{\cup_k}$ has VC dimension at most $2d_{\mathcal C}k\log_{2}(3k)$ by the standard union-composition bound \cite{BlumerEhrenfeuchtHausslerWarmuth1989}. The missed-mass loss class
$$
\{(x,y)\mapsto \mathbbm 1\{y=1,\,g(x)=0\}:g\in\mathcal C^{\cup_k}\}
$$
has VC dimension at most that of $\mathcal C^{\cup_k}$, since restricting to the slice $\{y=1\}$ cannot increase the number of realisable labellings. Positive-consistency makes the empirical missed-mass loss of $\widehat g$ zero, so the realisable VC bound of Theorem~\ref{thm:learning_vc}, applied to $\mathcal H=\mathcal C^{\cup_k}$ with $d\le 2d_{\mathcal C}k\log_{2}(3k)$, gives
$$
r(\widehat g)\le C\,\frac{d_{\mathcal C}k\log(3k)\log n+\log(1/\delta)}{n}
$$
with probability at least $1-\delta$. For the remark, label-consistency makes the empirical symmetric-difference loss zero. The same realisable VC bound, applied to the loss class $\{(x,y)\mapsto\mathbbm 1\{g(x)\ne y\}\}$ of the same VC order, controls $P(\widehat g(X)\ne Y)$, and $r(\widehat g)$ and $b(\widehat g)$ are its two disjoint components.
\end{proof}

\end{document}